\theoremstyle{plain}
\newtheorem{theorem}{Theorem}[section]
\newtheorem{proposition}[theorem]{Proposition}
\newtheorem{lemma}[theorem]{Lemma}
\newtheorem{corollary}[theorem]{Corollary}
\theoremstyle{definition}
\theoremstyle{remark}
\newtheorem{remark}[theorem]{Remark}
\newtheoremstyle{theoremdd}
  {\topsep}
  {\topsep}
  {\itshape}
  {0pt}
  {\bfseries}
  {. }
  { }
  {\thmname{#1}\thmnumber{ #2}\textnormal{\thmnote{ (#3)}}}
\theoremstyle{theoremdd}
\newcommand\Exp{\mathbb{E}}
\newcommand\R{\mathbb{R}}
\newcommand\Ls{L}
\DeclareMathOperator{\tr}{Tr}
\DeclareMathOperator{\KL}{KL}
\newcommand{\Cov}{\operatorname{cov}}
\icmltitlerunning{Anticorrelated Noise Injection for Improved Generalization}
\begin{document}

\twocolumn[
\icmltitle{Anticorrelated Noise Injection for Improved Generalization}



\icmlsetsymbol{equal}{*}

\begin{icmlauthorlist}
\icmlauthor{Antonio Orvieto}{equal,xxx}
\icmlauthor{Hans Kersting}{equal,yyy}
\icmlauthor{Frank Proske}{zzz}
\icmlauthor{Francis Bach}{yyy}
\icmlauthor{Aurelien Lucchi}{www}
\end{icmlauthorlist}

\icmlaffiliation{yyy}{INRIA -- Ecole Normale Sup\'erieure -- PSL Research University, Paris, France}
\icmlaffiliation{zzz}{Department of Mathematics, University of Oslo, Norway}
\icmlaffiliation{xxx}{Department of Computer Science, ETH Zurich, Switzerland}
\icmlaffiliation{www}{Department of Mathematics and Computer Science, University of Basel, Switzerland}

\icmlcorrespondingauthor{Antonio Orvieto}{antonio.orvieto@inf.ethz.ch}
\icmlcorrespondingauthor{Hans Kersting}{hans.kersting@inria.fr}

\icmlkeywords{Machine Learning, ICML}

\vskip 0.3in
]



\printAffiliationsAndNotice{\icmlEqualContribution} 

\begin{abstract} 
Injecting artificial noise into gradient descent (GD) is commonly employed to improve the performance of machine learning models.
Usually, uncorrelated noise is used in such \emph{perturbed gradient descent} (PGD) methods.
It is, however, not known if this is optimal or whether other types of noise could provide better generalization performance.
In this paper, we zoom in on the problem of \emph{correlating} the perturbations of consecutive PGD steps.
We consider a variety of objective functions for which we find that GD with \emph{anticorrelated} perturbations (``Anti-PGD'') generalizes significantly better than GD and standard (uncorrelated) PGD. 
To support these experimental findings, we also derive a theoretical analysis that demonstrates that Anti-PGD moves to wider minima, while GD and PGD remain stuck in suboptimal regions or even diverge. 
This new connection between anticorrelated noise and generalization opens the field to novel ways to exploit noise for training machine learning models.
\end{abstract}

\begin{figure}[ht]
    \centering
    \vspace{-2mm}
    \includegraphics[width=0.94\linewidth]{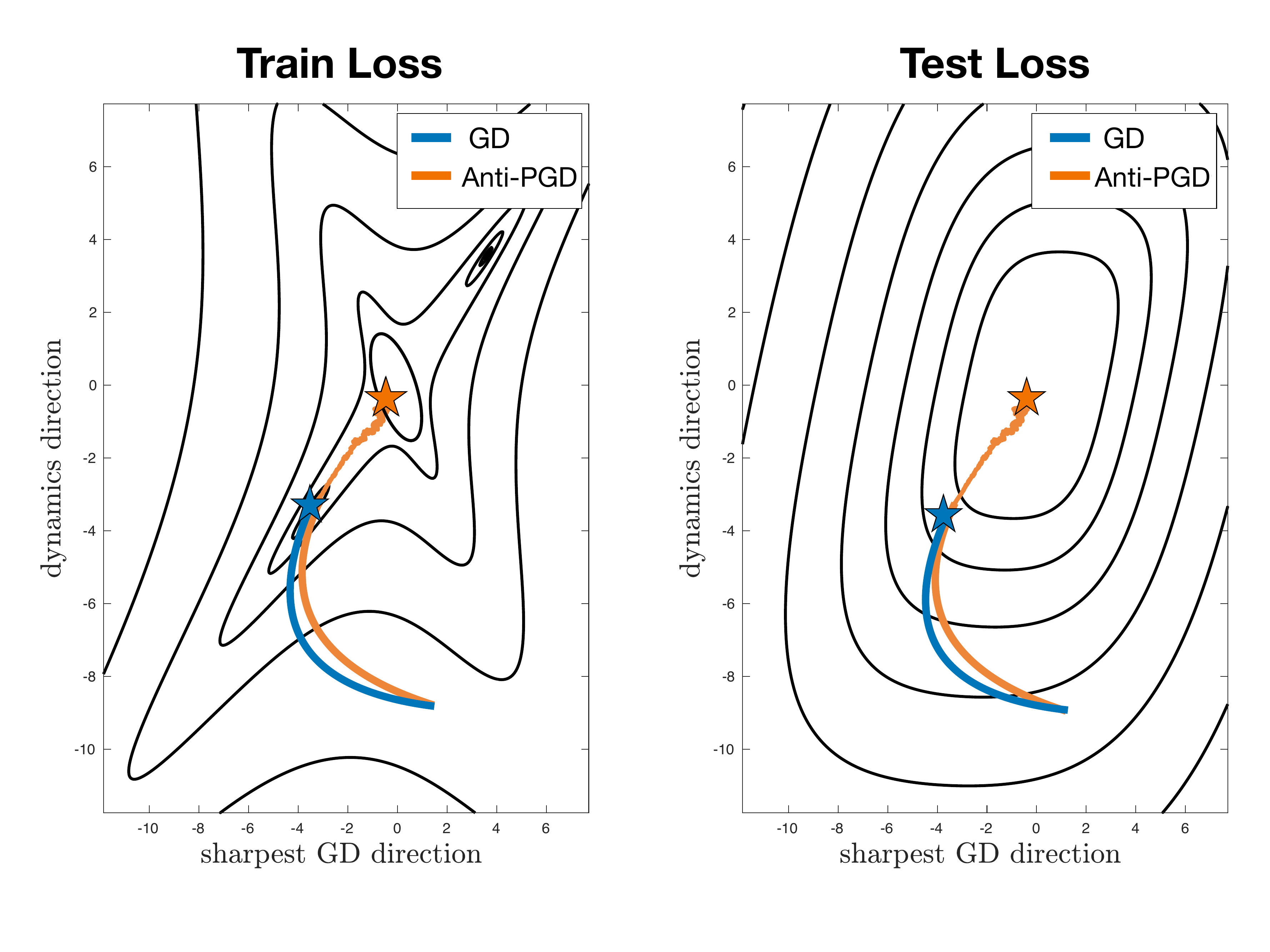}
    \vspace{-5mm}
    \caption{\small GD and Anti-PGD~(GD with anticorrelated noise injection) on a quadratically parametrized model~(details in \S\ref{sec:exp}) in $100$ dimensions, with only $5$ data points. The projection along two relevant directions is plotted. The train loss is most stable to sampling artifacts at flat minima. Indeed, the spurious minima on the south-west and north-east (left plot) are sharp. Flat minima often yield lower test losses. We prove in Thms.~\ref{thm:implicit_reg}~\&~\ref{thm:main_tube} that Anti-PGD is biased towards convergence to flat minima.}
    \label{fig:page1}
    \vspace{-3mm}
\end{figure}

\begin{figure*}[ht]
    \centering
    \includegraphics[height=0.25\textwidth]{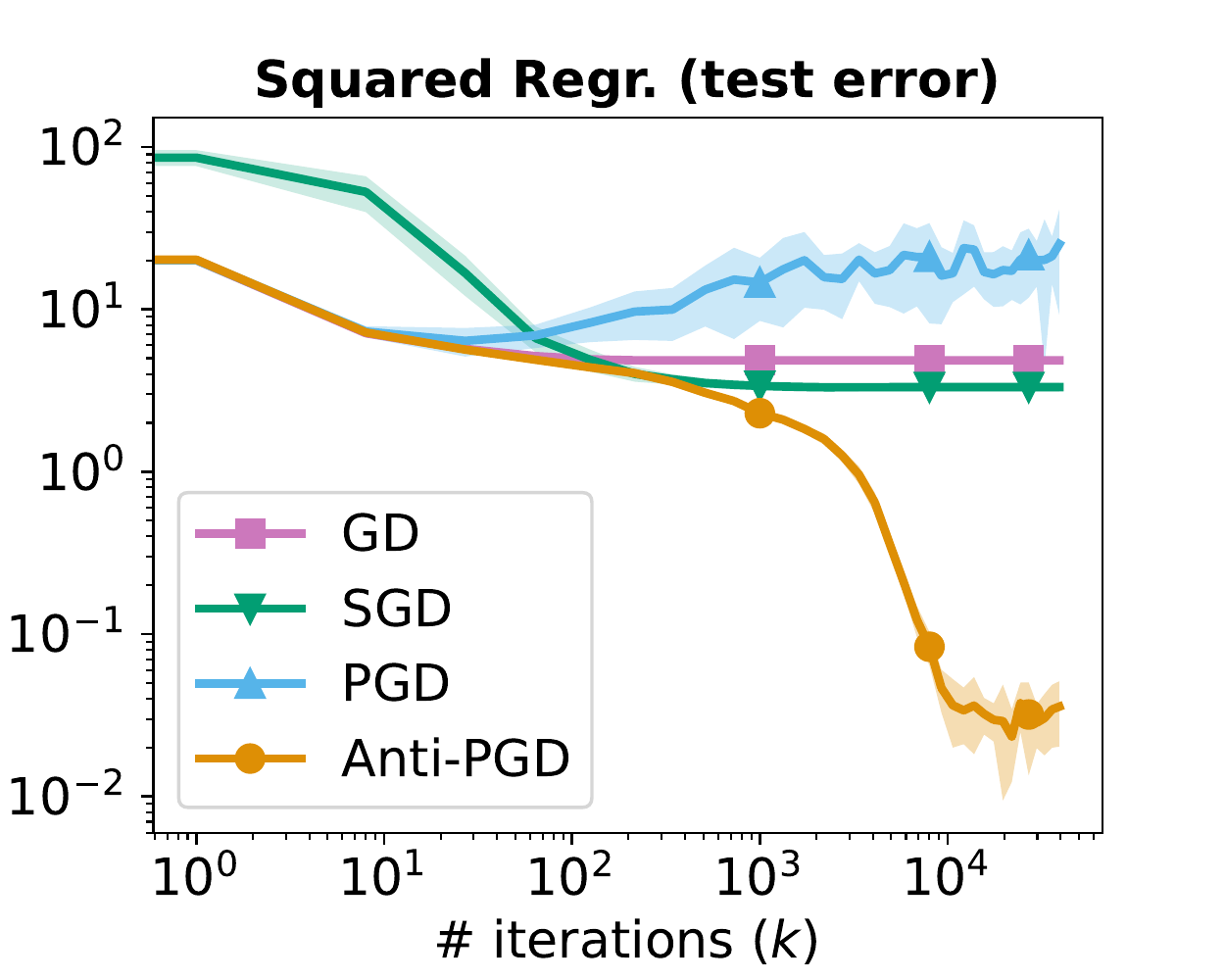}
    \includegraphics[height=0.25\textwidth]{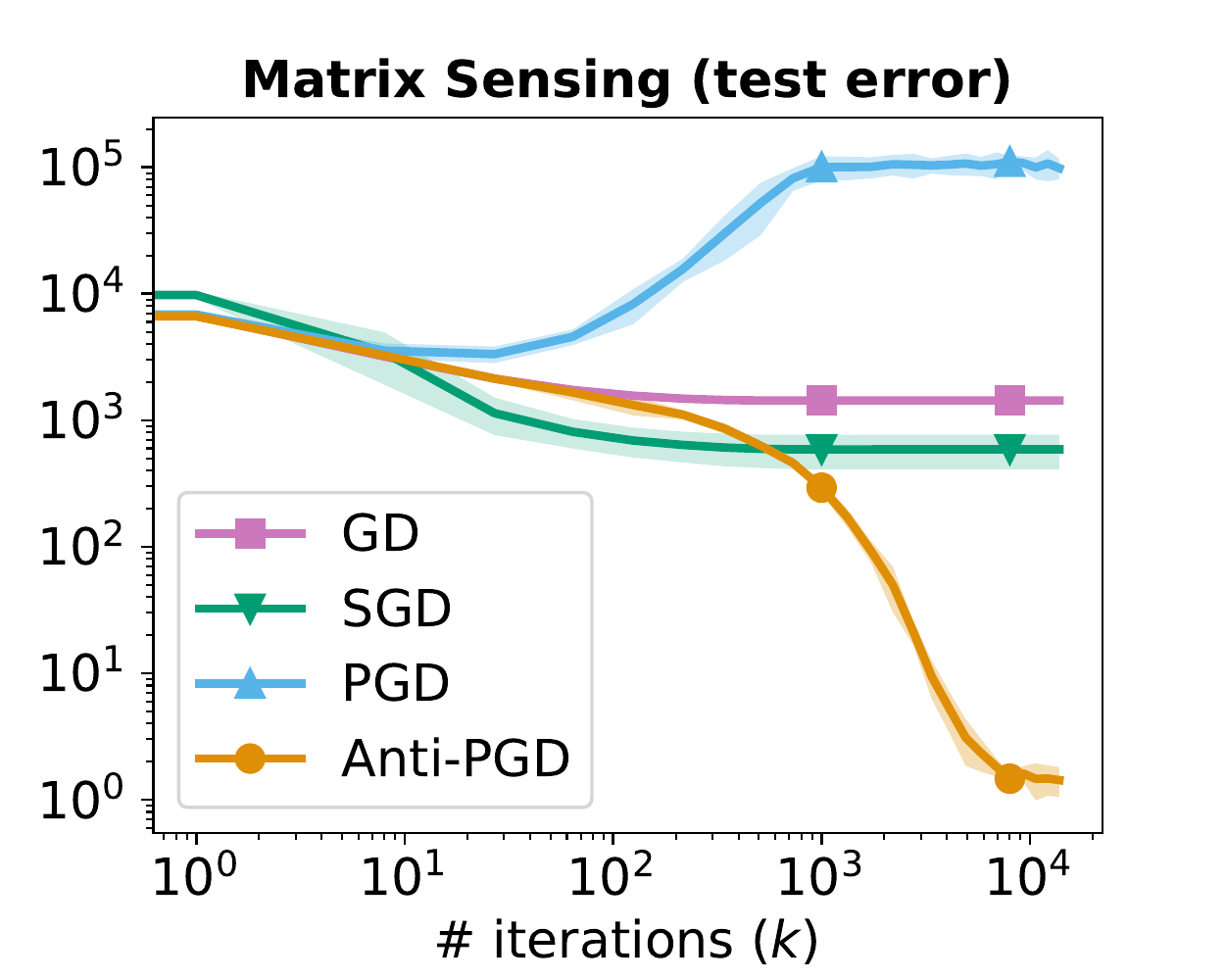}
    \includegraphics[height=0.25\textwidth]{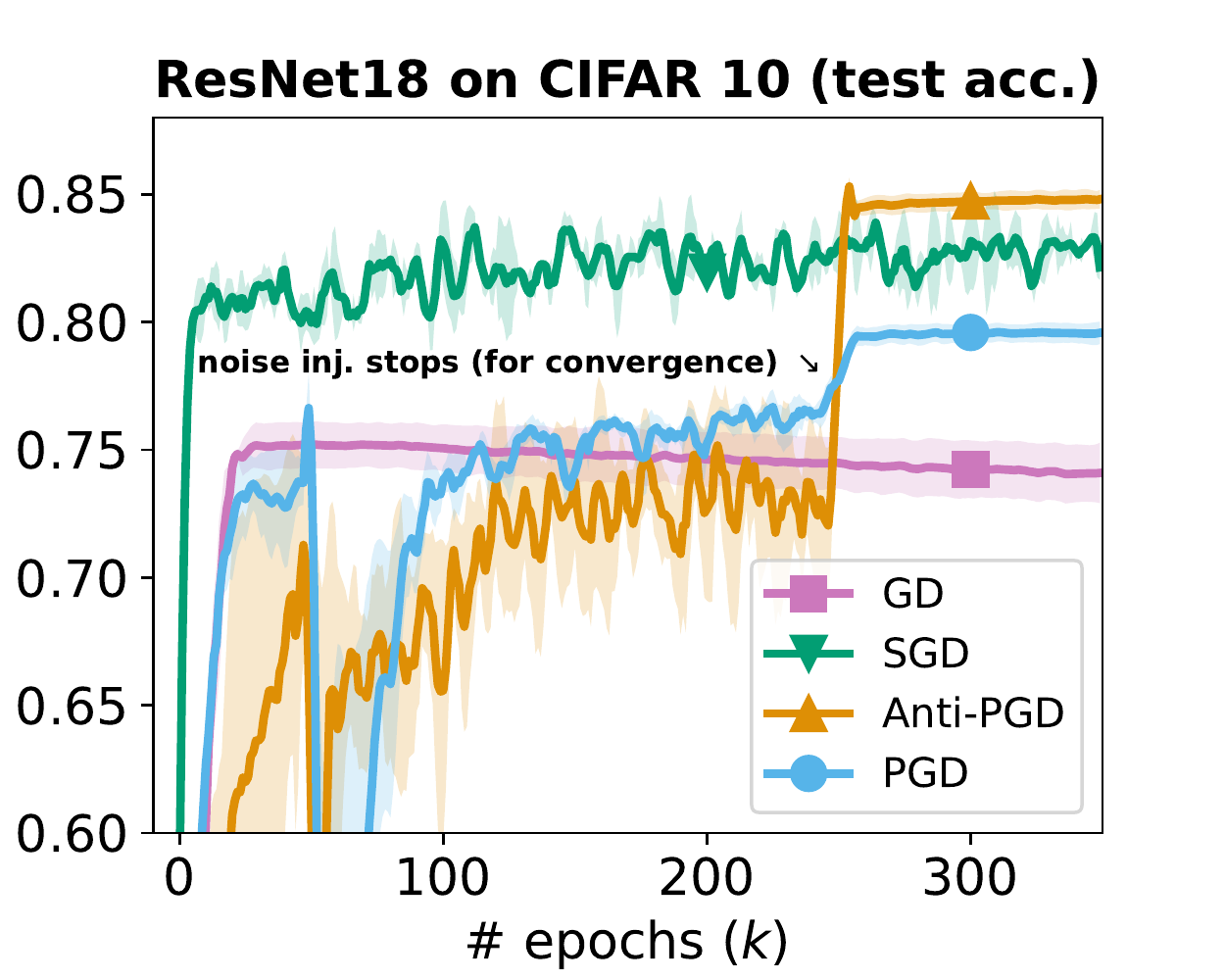}
    \\
    \includegraphics[height=0.25\textwidth]{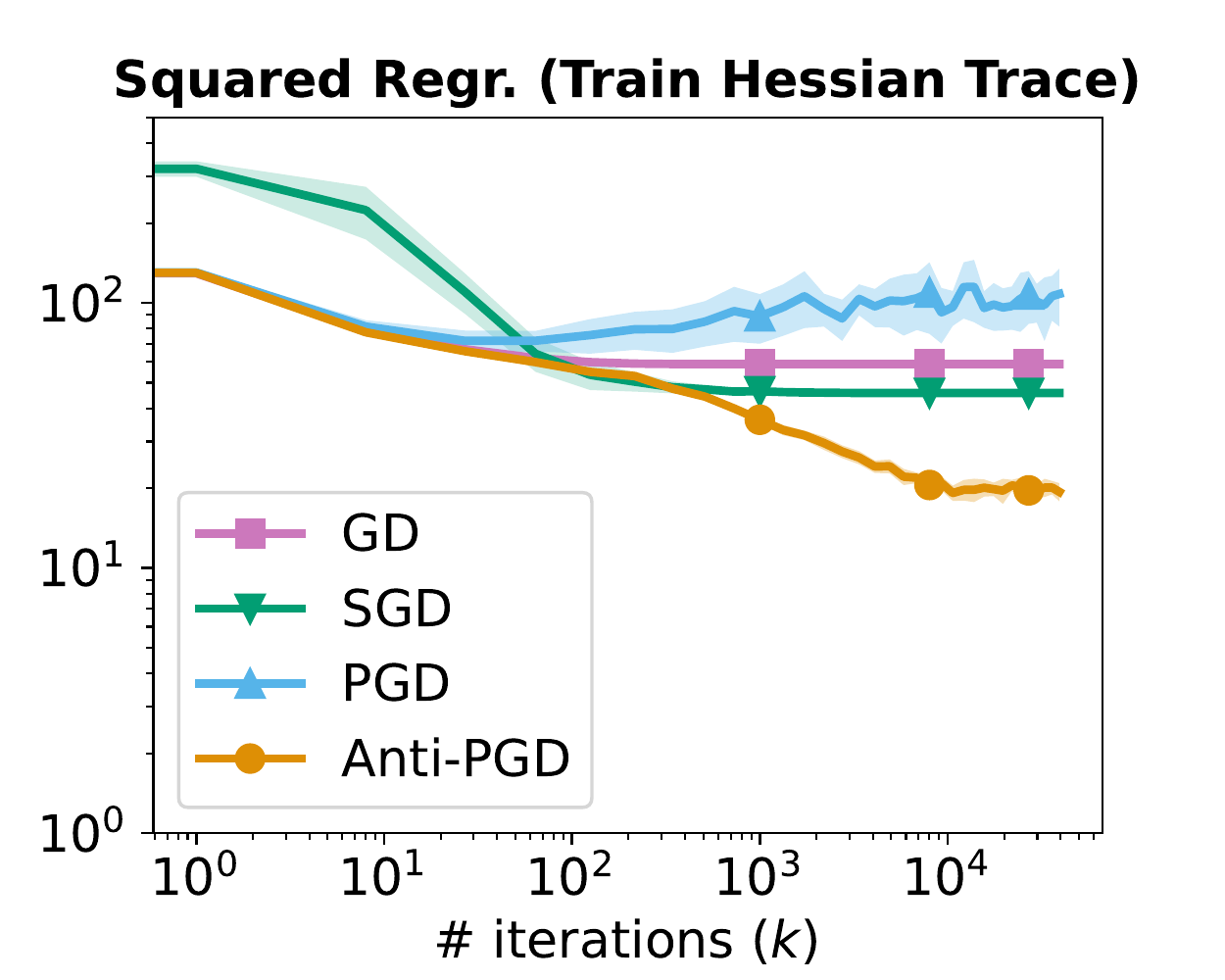}
    \includegraphics[height=0.25\textwidth]{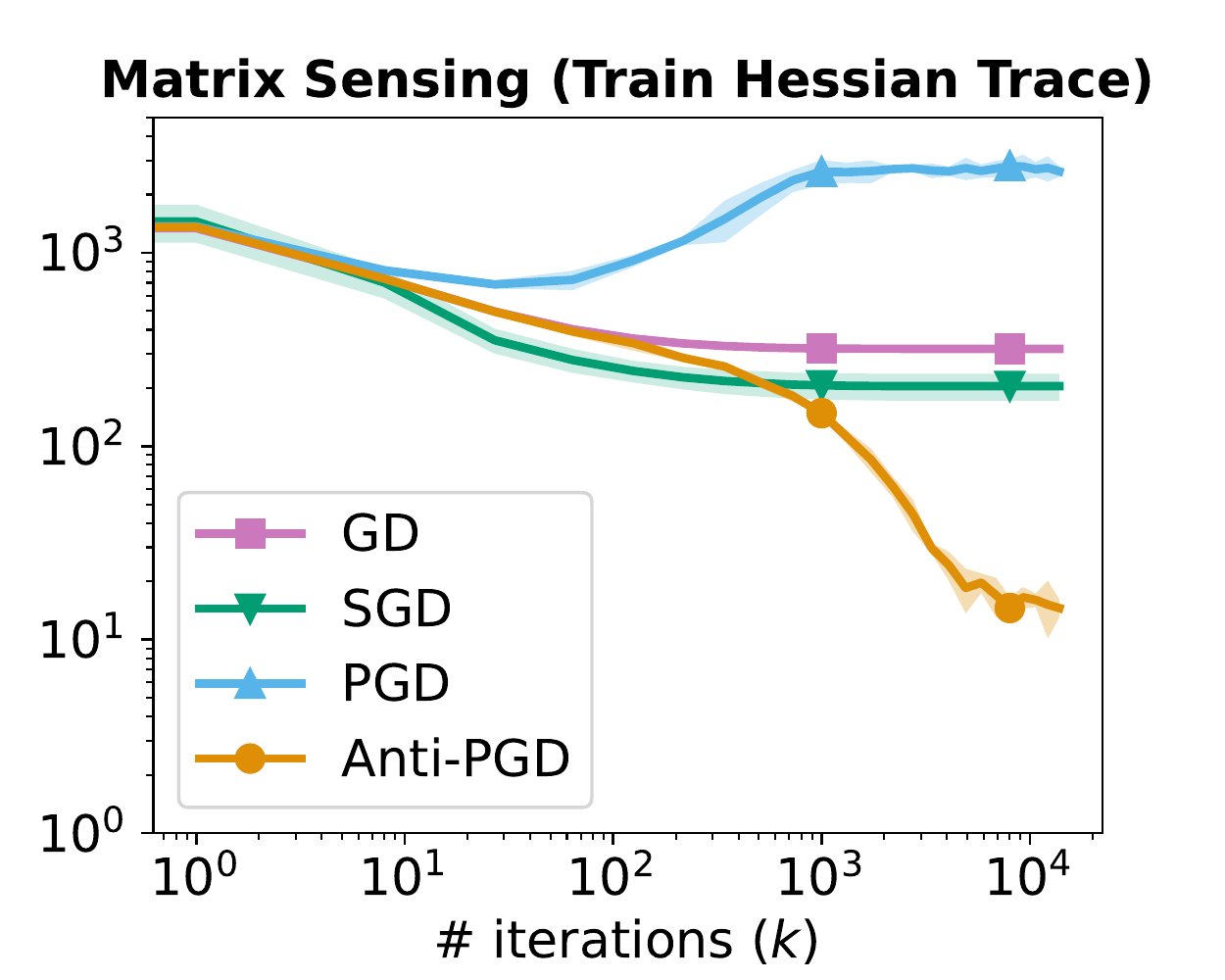}
    \includegraphics[height=0.25\textwidth]{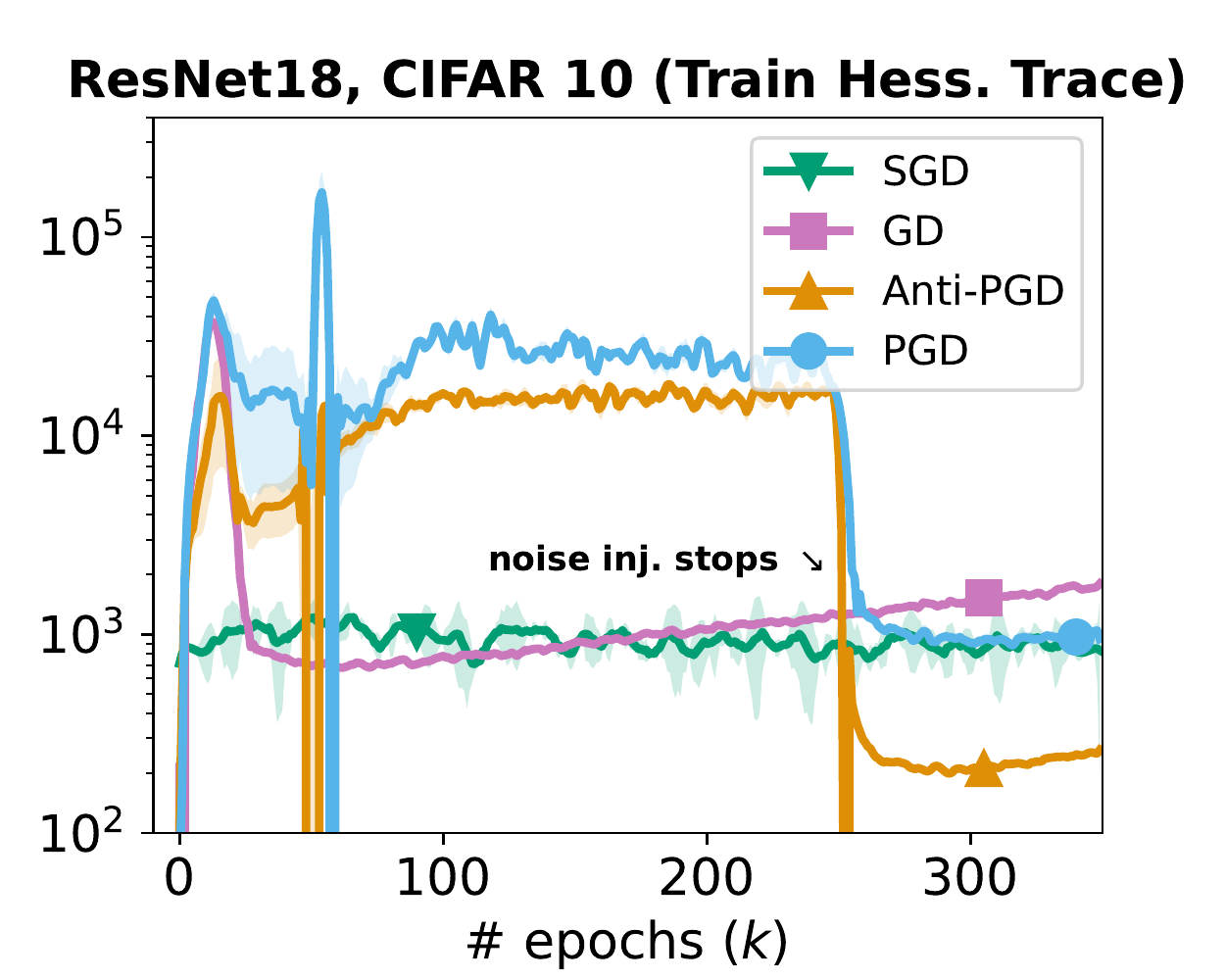}
    \vspace{-2.5mm}
    \caption{\small Effect of uncorrelated~(PGD) and anticorrelated~(Anti-PGD) noise injection on learning with gradient descent. Experiments are conducted on three non-convex machine learning problems with increasing complexity~(details in \S \ref{sec:exp}). These experiments are inspired by recent literature on label noise~\cite{blanc2020implicit,haochen2021shape}. Shown is the mean and standard deviation over several runs~(5 for the first two problems, 3 for the last). Findings are robust to hyperparameter tuning~(see Appendix~\ref{sec:exp_app}). All experiments suggest that Anti-PGD allows convergence to a flat minimizer~(lower Hessian trace), improving generalization. In the ResNet18 experiments, the high dimensionality makes it hard to evaluate metrics under noise injection~--~since we converge to a neighborhood with big (dimension dependent) radius. Hence, we evaluate the accuracy and the Hessian trace after stopping noise injection, to allow exact convergence to the nearest minimizer. Note that, while SGD can temporarily be better than Anti-PGD for a small number of iterations, Anti-PGD ultimately outperforms SGD in all experiments.For more details and further investigations, see~\S\ref{sec:exp}.}
    \label{fig:front_experiments}
    \vspace{-3mm}
\end{figure*}

\vspace{-3mm}
\section{Introduction}
It is widely believed that \emph{flat minima} generalize better than sharp minima in loss \emph{landscapes} of overparametrized models such as deep neural networks (DNNs).
This idea goes back to \citet{Hochreiter1995FlatMinima,Hochreiter1997FlatMinima} who observed that, in flat minima, it is sufficient to determine weights with low precision and conjectured that this correlates with a small generalization gap.
Although it has not been proved conclusively and the debate continues, this hypothesis has been supported by increasing empirical evidence \citep{keskar2016large,Chaudhari2017EntropySGD,jiang2019fantastic}.

On the other hand, much work has analyzed which \emph{optimization algorithms} yield good performance on test data.
For standard stochastic gradient descent (SGD), it is a common finding (both empirically and theoretically) that its stochastic noise tends to guide the optimizer towards flat minima; see  \citet{smith2020generalization} and the references therein.
Going beyond SGD, several papers have proposed to perturb (stochastic) gradient descent methods by injecting artificial noise.
So far, such perturbed gradient descent (PGD) methods have proved beneficial to quickly escape spurious local minima \citep{zhou2019pgd} and saddle points \citep{Jin2021pgd}.
Based on the findings made in prior work, it seems natural to ask about the role of noise injection on the generalization performance of a model. The exact question we investigate is \emph{whether stochastic noise can be designed to match (or even outperform) the favorable generalization properties of vanilla SGD}.


\vspace{-3mm}
\paragraph{Contribution.}
We start from the observation that prior PGD methods rely on independent (uncorrelated) perturbations. We question whether this choice is optimal and instead study whether (anti-)correlated perturbations are more suitable in terms of generalization. We introduce a new perturbed gradient descent method we name ``Anti-PGD'' whose perturbations at two consecutive steps are \emph{anticorrelated}.
We motivate this design by showing that Anti-PGD drifts (on average) to flat parts of the loss landscape. 
We conduct an extensive set of experiments~--~ranging from shallow neural networks to deep architectures with real data (e.g.~CIFAR 10)~--~and we demonstrate that Anti-PGD, indeed, reliably finds minima that are both flatter and generalize better than the ones found by standard GD or PGD.
We explain this observation with two theorems.
Firstly, we show that Anti-PGD minimizes the trace of the Hessian~--~in the sense of converging (in expectation) to a minimum of a regularized loss to which the trace of the Hessian is added. 
Secondly, we show in the simplest possible toy model (the ``widening valley'') that Anti-PGD converges to the flattest minimum~--~while GD gets stuck in sharper minima and standard (uncorrelated) PGD diverges.
In summary, these findings lead us to postulate that anticorrelated noise can be employed to improve generalization.

\vspace{-2mm}
\subsection{Related Work}

The following lines of work are closely connected to our work.
In particular, the connections with PAC-Bayes bounds and label noise are particularly relevant and will be further discussed later on.

\vspace{-3mm}
\paragraph{Generalization measures and flat minima.} 
Generalization measures are quantities that monotonically capture the generalization ability of a model. 
For example, \citet{keskar2016large} and \citet{Chaudhari2017EntropySGD} conducted an extensive set of experiments demonstrating that the spectrum of the Hessian of the loss $\nabla^2 L(w^{\ast})$ computed at a minimum $w^{\ast}$ is related to the generalization performance~--~in the sense that low eigenvalues of $\nabla^2 L(w^{\ast})$ tend to indicate good generalization performance.
To capture this phenomenon, several flatness (a.k.a.~sharpness) measures have been proposed as generalization measures. 
Notably,~\citet{jiang2019fantastic} conducted a large-scale comparison of many popular generalization measures and concluded that some flatness measures are among the best performing measures.
Recently, \citet{petzka2021relative} connected flatness to generalization via the notion of `relative flatness'.

We note, however, that the superiority of flat minima is contested: \citet{Dinh2017SharpMinima} demonstrated that sharp minima can also generalize well and point out that flatness is not invariant to reparametrization. 
Hence, the empirical finding of correlation between flatness and good generalization should not necessarily be regarded as a causal relationship.

\vspace{-3mm}
\paragraph{PAC-Bayes bounds.}
The generalization ability of a model can in theory be captured by upper bounding the generalization gap, as done by classical VC or Rademacher bounds, as well as PAC-Bayes bounds such as~\citet{LangfordCaruna2002PacBayes}, \citet{Neyshabur2017ExploringGeneralization,Neyshabur2018PacBayes} and \citet{tsuzuku2020normalized}. However, characterizing the generalization ability of deep learning models has proven to be a challenging task. Most classical bounds are vacuous when computed on modern over-parametrized networks. Encouragingly, empirical evidence~--~see, e.g.,  \citet{Dziugaite2017PacBayes} and follow-up works~--~has shown that PAC-Bayes bounds can be optimized to yield practically useful results. As discussed in~\citet{yang2019fast}, PAC-Bayes bounds can also be related to flatness, and more precisely to the trace of the Hessian. The latter quantity will be key in our analysis and we will explain this connection in detail in \S\ref{sec:conection_pacbayes}.
\vspace{-3mm}
\paragraph{The scale of noise in SGD.}
SGD tends to find minima that generalize surprisingly well in overparametrized models. This phenomenon has been explained from different perspectives in the literature.
Focusing on the intrinsic noise of SGD,~\citet{zhang2019algorithmic} and \citet{smith2020generalization} empirically showed that SGD generalizes well by converging to flat minima. Alternatively,~\citet{Bradley2021IncreasedRandomness} characterized the stationary distribution of SGD, demonstrating a connection between increased levels of noise (i.e.~smaller batch size or larger learning rate)  and convergence to flat minima.
Particularly related to our work is \citet{wei2019noise} who showed that, in some settings, SGD decreases the trace of the Hessian in expectation.

\vspace{-3mm}
\paragraph{The shape of noise in SGD.}
The distribution of the noise of SGD is often a subject of debate in the literature.
In this regard, \citet{simsekli2019tailindex} challenged the default assumption that SGD noise is Gaussian. In some particular settings, their work showed empirically that a heavy-tailed distribution is observed. This type of noise was then shown in~\citet{nguyen2019first} to yield faster exit from sharp to flat minima.
While the universality of the finding of~\citet{simsekli2019tailindex} is debated in the community~\citep{panigrahi2019non,xie2020diffusion}, the tail index of SGD has a drastic influence on its diffusion properties. For instance,  heavy-tail noise leads to faster escape from \textit{sharp minima}; see, e.g.,~Thm.~1 by~\citet{simsekli2019tailindex}. 
Moreover, heavy-tail noise is provably found in simple models~(e.g., linear regression on isotropic data), and in the regime of high-learning rates~\cite{gurbuzbalaban2021heavy} as an effect of multiplicative noise~\cite{hodgkinson2021multiplicative}.
Recently, \citet{wang2022eliminating} demonstrated that truncated heavy-tailed noise can eliminate sharp minima in SGD.

\vspace{-3mm}
\paragraph{Perturbed Gradient Descent (PGD).}
PGD is a version of (stochastic) gradient descent where artificial noise is added to the parameters after every step. 
Multiple PGD methods have been shown to help quickly escape spurious local minima \citep{zhou2019pgd} and saddle points \citep{Jin2021pgd}. These methods differ from our Anti-PGD in that they inject \emph{uncorrelated} perturbations. \\
Instead of perturbing the parameter, one can alternatively add noise to the gradient which can improve learning for very deep networks \citep{Neelakantan2015AddingGradientNoise,Deng2021Shrinking}.

\vspace{-3mm}
\paragraph{Label noise and implicit bias.}
Another way to add perturbations to SGD is to add noise to the labels of the data used for training.
Recent work has demonstrated that such perturbations are indeed beneficial for generalization by implicitly regularizing the loss \citep{blanc2020implicit,haochen2021shape,damian2021label}.
This alternative noise injection perturbs the labels \emph{before} computing the gradient~--~instead of the parameter \emph{after} a gradient-descent step, as in PGD and our Anti-PGD.
(Nonetheless these approaches are closely connected, as we will further explain in \S\ref{subsec:Connection_with_label_noise}.) \\
For the limit case of small SGD learning rate $\eta \to 0$, \citet{li2022what} introduced a general SDE framework to analyze the implicit bias in relation to flatness.

\section{Finding Flat Minima by Anti-PGD} \label{sec:finding_flat_minima}

After introducing our problem setting, we provide a detailed description of \emph{Anti-PGD} and explain how it is designed to find flat minima.

\vspace{-3mm}
\paragraph{Problem setting.} Let $\{(x^{(i)}, y^{(i)})\}_{i=1}^M$ denote a data set of $M$ input-output pairs with $x^{(i)}\in\R^{d_{\rm in}}$ and $y^{(i)}\in\R$.
We consider a machine learning model $f_w:\R^{d_{\rm in}}\to\R$, with parameters $w\in\R^d$, whose parameters are trained using empirical risk minimization. 
Let $\Ls^{(i)}:\R^d\to\R$ be the loss associated with the $i$-th data point $(x^{(i)}, y^{(i)})$.
We denote by $\Ls(w):=\frac{1}{M}\sum_{i=1}^M \Ls^{(i)}(w)$ the (full-batch) training loss, which we optimize to find the best parameters.
\vspace{-3mm}
\paragraph{Anti-PGD.} \emph{Gradient descent} (GD) iteratively optimizes the loss $L(w)$ by computing a sequence of weights $\{w_n\}_{n=0}^N$ where $w_{n+1} = w_n - \eta \nabla L(w_n)$ with step size (a.k.a.~learning rate) $\eta >0$. 
\emph{Perturbed gradient descent} (PGD) simply adds an i.i.d.~perturbation to each step, i.e.
\begin{equation} \label{eq:def_PGD}
    w_{n+1} = w_n - \eta \nabla L(w_n) + \xi_{n+1},
\end{equation}
where $\{\xi_n\}_{n=0}^{N}$ is a set of centered i.i.d.~random variables with variance $\sigma^2 I$.
Similarly, we define \emph{anticorrelated perturbed gradient descent} (Anti-PGD) as
\begin{equation} \label{eq:def_antiPGD}
    w_{n+1} = w_n - \eta \nabla L(w_n) + (\xi_{n+1} - \xi_{n}).
\end{equation}
In other words, Anti-PGD replaces the i.i.d.~perturbations $\{\xi_n\}_{n=0}^{N}$ in PGD \eqref{eq:def_PGD} with their increments $\{\xi_{n+1} - \xi_{n}\}_{n=0}^{N-1}$.
The name Anti-PGD comes from the fact that consecutive perturbations are anticorrelated:
\begin{equation*}
    \frac{\mathbb{E} \left[ (\xi_{n+1}-\xi_{n})(\xi_{n}-\xi_{n-1})^\top \right]}{2\sigma^2}
    \overset{\text{(iid)}}{=} - \frac{\Cov \left ( \xi_{0} \right)}{2\sigma^{2}} = - \frac 12 I.
\end{equation*} 

\subsection{Regularization in Anti-PGD}

While Anti-PGD \eqref{eq:def_antiPGD} is defined as a modification of PGD \eqref{eq:def_PGD}, it can alternatively be viewed as a regularization (smoothing) of the loss landscape $L$.
To see this, note that, after a change of variables $z_n := w_n - \xi_n$, the Anti-PGD step becomes
\begin{equation} \label{eq:Anti-PGD_z_formulation}
    z_{n+1}  =  z_n - \eta \nabla{L}(z_n + \xi_n).
\end{equation}
The corresponding loss $L(\cdot + \xi_n)$ can, in expectation, be regarded as a convolved (or smoothed) version of the original $L$.
To see in which direction the gradients of this loss (and thus Anti-PGD) are biased, we perform a Taylor expansion of $\partial_i L(\cdot)$ around $z_n$:
\begin{multline}\label{eq:z_n_dyn_by_dim}
    z^{i}_{n+1} = z^{i}_{n} - \eta \partial_i L(z_{n}) -\eta \sum_{j}\partial^2_{ij} L(z_{n})\xi_{n}^j\\ -  \underbrace{\frac{\eta}{2}\sum_{j,k} \partial^3_{ijk} L(z_{n})\xi^{j}_n\xi^{k}_n}_{= \frac{\eta}{2} \partial_i \sum_{jk} \partial^2_{jk} L(z_{n})\xi^{j}_n\xi^{k}_n } + O(\eta\|\xi_n\|^3),
\end{multline}
where the term under the brace is due to Clairaut's theorem (assuming that $L$ has continuous fourth-order partial derivatives).
By exploiting that $\xi_n$ has mean zero and covariance $\sigma^2 I$, we can express the conditional expectation of each step as
\begin{equation} \label{eq:implicit_bias}
    \mathbb{E}\left[ z_{n+1} \vert z_n  \right]
    =
    z_n - \eta \nabla \tilde{L}(z_n) + O\left(\eta \mathbb{E}[\|\xi_n\|^3] \right),
\end{equation}
where the \emph{modified loss} $\tilde{L}$ is given by
\begin{equation} \label{def:L_tilde}
    \tilde{L}(z) := L(z) + \frac{\sigma^2}{2} \tr(\nabla^2 L(z)),
\end{equation}
where $\tr(A)$ denotes the trace of a square matrix $A$.
(In Appendix~\ref{app:cond_var}, we also compute the conditional variance of Anti-PGD.)
The conditional mean, Eq.~\eqref{eq:implicit_bias}, highlights the \emph{motivation} for Anti-PGD:
When expressed in terms of the variable $z_n$, Anti-PGD in expectation (modulo the impact of the third moment of the noise) takes steps in the direction of a loss which is regularized by adding the trace of the Hessian.
The higher the noise variance $\sigma^2$, the stronger is the influence of the (trace of the) Hessian on Anti-PGD.
This is related to how stochastic gradient noise smoothes the loss in standard SGD \citep{kleinberg2018alternativesgd}, with the difference that, here, we inject artificial noise that \emph{explicitly} regularizes the trace of the Hessian. A discussion on the smoothing literature is postponed to \S\ref{sec:smoothing}.

In the next theorem, we analyze the case where the noise $\xi_n$ follows a symmetric Bernoulli distribution.
We find that, indeed, Anti-PGD (on average) minimizes the regularized loss $\tilde{L}$~--~in the sense that the regularized gradient converges.

\begin{restatable}[Convergence of the regularized gradients]{thm}{generalproof}
Let $L:\mathbb{R}^d\to\mathbb{R}$ be lower bounded with continuous fourth-order partial derivatives and $\beta$-Lipschitz continuous third-order partial derivatives, for some constant $\beta > 0$.
Consider the iterates $\{z_n\}_{n=0}^{N-1}$ computed by Anti-PGD as in \eqref{eq:Anti-PGD_z_formulation}, where for each $n$ the noise coordinate $\xi^n_i$ follows a symmetric centered Bernoulli distribution with variance $\sigma^2$ (i.e., $\sigma$ and $-\sigma$ have probability $1/2$).
If we set $\epsilon>0$ small enough such that $\eta = \Theta(\epsilon/\sigma^2)<\frac{1}{\beta}$ and let $N = \Omega(\sigma^2\epsilon^{-2})$, then it holds true\footnote{The first version of this paper had a mistake in the proof of this result and did not include the $O(\sigma^3)$ error. We note that this correction comes with minor changes in the theorem assumptions.} that
\begin{equation}
    \Exp\left[\frac{1}{N}\sum_{n=0}^{N-1}\|\nabla \tilde L(z_{n})\|^2\right]\le O(\epsilon) + O(\sigma^3).
\end{equation}
\label{thm:implicit_reg}
\end{restatable}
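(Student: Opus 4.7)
The plan is to recast Anti-PGD as a biased SGD scheme on the modified loss $\tilde L$ from \eqref{def:L_tilde} and then run a standard smooth non-convex descent-lemma argument. Throughout I would work with the $z$-variable formulation \eqref{eq:Anti-PGD_z_formulation}, in which the iteration reads $z_{n+1} = z_n - \eta \nabla L(z_n + \xi_n)$, so the question becomes how closely this update tracks a gradient step on $\tilde L$.

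First I would quantify the one-step discrepancy. Taylor expanding $\nabla L(z_n+\xi_n)$ in $\xi_n$ through second order around $z_n$, with remainder controlled by the $\beta$-Lipschitzness of $\nabla^3 L$ (Taylor's theorem gives a bound of the form $\tfrac{\beta}{6}\|\xi_n\|^3$), and then taking conditional expectation while using the Bernoulli moments $\Exp[\xi_n]=0$ and $\Exp[\xi_n^j \xi_n^k]=\sigma^2\delta_{jk}$ together with Clairaut's theorem exactly as in \eqref{eq:z_n_dyn_by_dim}--\eqref{eq:implicit_bias}, the order-two term collapses to $\tfrac{\sigma^2}{2}\nabla\tr\nabla^2 L(z_n)$, yielding
\begin{equation*}
\Exp\!\left[\nabla L(z_n+\xi_n)\mid z_n\right] = \nabla\tilde L(z_n) + r_n,\qquad \|r_n\|=O(\sigma^3),
\end{equation*}
since for symmetric Bernoulli coordinates $\|\xi_n\|=\sqrt{d}\,\sigma$ almost surely. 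A parallel computation bounds the conditional second moment by $\Exp[\|\nabla L(z_n+\xi_n)\|^2\mid z_n]\le 2\|\nabla\tilde L(z_n)\|^2+O(\sigma^2)$.

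Next I would apply the descent lemma to $\tilde L$. Because $\nabla^3 L$ is $\beta$-Lipschitz, $\nabla\tilde L = \nabla L + \tfrac{\sigma^2}{2}\nabla\tr\nabla^2 L$ is Lipschitz with some constant $L_{\tilde L}=O(1+\sigma^2\beta)$ on the sublevel set of interest. Writing $z_{n+1}-z_n = -\eta\nabla\tilde L(z_n)+\eta e_n$ with $e_n:=\nabla\tilde L(z_n)-\nabla L(z_n+\xi_n)$, the descent lemma with $\eta<1/\beta$ combined with a Young inequality on the cross term $\eta\langle\nabla\tilde L(z_n),\Exp[e_n\mid z_n]\rangle$ yields a per-step bound of the form
\begin{equation*}
\Exp[\tilde L(z_{n+1})\mid z_n] \le \tilde L(z_n) - \tfrac{\eta}{2}\|\nabla\tilde L(z_n)\|^2 + C_1\eta\sigma^6 + C_2 L_{\tilde L}\eta^2\sigma^2.
\end{equation*}
Telescoping from $n=0$ to $N-1$, using that $\tilde L$ is lower bounded (inherited from the lower bound on $L$ and boundedness of $\|\nabla^2 L\|$ on the sublevel set), dividing by $\eta N/2$, and substituting $\eta=\Theta(\epsilon/\sigma^2)$, $N=\Omega(\sigma^2/\epsilon^2)$ produces $(\tilde L(z_0)-\tilde L^{*})/(N\eta/2)=O(\epsilon)$, the variance contribution $L_{\tilde L}\eta\sigma^2=O(\epsilon)$, and a noise-bias slack of $O(\sigma^6)\le O(\sigma^3)$ (for $\sigma\le 1$), matching the claim.

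The main obstacle I expect is the Young-inequality calibration needed to guarantee an $O(\sigma^3)$ error floor rather than $O(\sigma^6)$ or worse: a naive Cauchy--Schwarz on the cross term produces $O(\sigma^6)$, which collapses to the advertised $O(\sigma^3)$ only under $\sigma\le 1$, and a more refined split $|\langle a,b\rangle|\le\tfrac{\alpha}{2}\|a\|^2+\tfrac{1}{2\alpha}\|b\|^2$ with carefully chosen $\alpha$ may be needed to get $O(\sigma^3)$ intrinsically. A subtler point is that $\nabla L$ is only locally Lipschitz under the stated hypotheses, so the descent lemma on $\tilde L$ should be applied on a sublevel set of $\tilde L$ that the expected-descent inequality itself leaves invariant once $\eta$ is small enough.
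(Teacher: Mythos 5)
Your proposal is correct and follows essentially the same route as the paper's proof: rewrite the update in the $z$-variable, Taylor expand the gradient to isolate $\nabla\tilde L(z_n)$ plus a conditionally mean-zero perturbation plus an $O(\sigma^3)$ remainder (exact for symmetric Bernoulli noise since $(\xi_n^j)^2=\sigma^2$ a.s.), apply the descent lemma to $\tilde L$, telescope, and tune $\eta=\Theta(\epsilon/\sigma^2)$, $N=\Omega(\sigma^2\epsilon^{-2})$. Your two flagged concerns are reasonable but not obstacles: the paper simply bounds the bias cross term as $\eta\langle\nabla\tilde L(z_n),O(\sigma^3)\rangle=O(\eta\sigma^3)$ (your Young-inequality variant yields the even smaller $O(\sigma^6)$, which still satisfies the stated $O(\sigma^3)$), and the paper likewise invokes global smoothness of $\tilde L$ rather than the sublevel-set localization you describe.
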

\vspace{-2mm}
For a proof, see Appendix~\ref{app:thm_implicit_reg}. Now that we have seen how exactly anticorrelated perturbations lead to a reduction of the trace of the Hessian appearing in \eqref{def:L_tilde}, we connect this finding to previous work relating to regularizing by the trace of the Hessian.

\subsection{Connection with PAC-Bayes Bounds} \label{sec:conection_pacbayes}

PAC-Bayes bounds can be interpreted as bounds on the average loss over a posterior distribution $Q$. These bounds connect to the curvature of the loss through the concept of expected sharpness. The following theorem makes this connection precise.

\begin{restatable}[\cite{Neyshabur2017ExploringGeneralization,tsuzuku2020normalized}]{thm}{pac}
\label{thm:pac}
Let $Q(w|w^*)$ be any distribution over the parameters, centered at the solution $w^*$ found by a gradient-based method. For any non-negative real number $\lambda$, with probability at least
$1-\delta$ one has
\vspace{-2mm}
\begin{align*}
    &\Ls_{\rm true}(Q(w|w^*)) \le \Ls(w^*) +\frac{\lambda}{2M} +\frac{1}{\lambda}\ln\left(\frac{1}{\delta}\right)\\& + \underbrace{\Ls(Q(w|w^*))-\Ls(w^*)}_{\text{expected sharpness}} +\frac{1}{\lambda} \KL[Q(w|w^*) || P(w)],
\end{align*}
where $\Ls_{\rm true}$ is the generalization loss; $L(Q) := \mathbb{E}_{w \sim Q} L(w)$ and $L_{\rm true}(Q) := \mathbb{E}_{w \sim Q} L_{\rm true}(w)$; $P$ is a distribution over parameters; and $\KL$ denotes the Kullback-Leibler divergence.
\end{restatable}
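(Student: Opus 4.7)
The plan is to follow the standard PAC-Bayes argument, which proceeds in three main stages: (i) a single-weight concentration bound via the moment generating function, (ii) a change of measure from the prior $P$ to the posterior $Q$, and (iii) algebraic rearrangement to surface the expected-sharpness term.

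First I would fix $w\in\R^d$ and bound the exponential moment of the generalization gap $L_{\rm true}(w) - L(w)$ under the random draw of the dataset $\{(x^{(i)},y^{(i)})\}_{i=1}^M$. Assuming the per-sample loss is bounded (or sub-Gaussian), Hoeffding's lemma yields, for every $\lambda\ge 0$,
\begin{equation*}
    \Exp_{\text{data}}\bigl[\exp\bigl(\lambda(L_{\rm true}(w) - L(w)) - \tfrac{\lambda^2}{2M}\bigr)\bigr] \le 1,
\end{equation*}
where the constant $\lambda^2/(2M)$ comes from summing $M$ independent terms with variance proxy scaling like $1/M^2$. Taking expectation under the data-independent prior $P(w)$ and applying Fubini gives
\begin{equation*}
    \Exp_{\text{data}}\Exp_{w\sim P}\bigl[\exp\bigl(\lambda(L_{\rm true}(w) - L(w)) - \tfrac{\lambda^2}{2M}\bigr)\bigr] \le 1.
\end{equation*}
A Markov/Chernoff step then shows that, with probability at least $1-\delta$ over the data, the inner expectation (over $w\sim P$ only) is at most $1/\delta$.

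Next I would use the Donsker--Varadhan variational formula (equivalently, the change-of-measure inequality) to move from an expectation under $P$ to one under the data-dependent posterior $Q(w|w^\ast)$. For any measurable $h$, $\Exp_{w\sim Q}[h(w)] \le \KL[Q\|P] + \ln\Exp_{w\sim P}[\exp(h(w))]$. Applying this with $h(w) = \lambda(L_{\rm true}(w) - L(w)) - \lambda^2/(2M)$ and combining with the previous high-probability bound on the log moment gives, on the same event,
\begin{equation*}
    \lambda\Exp_{w\sim Q}[L_{\rm true}(w) - L(w)] - \tfrac{\lambda^2}{2M} \le \KL[Q\|P] + \ln(1/\delta).
\end{equation*}
Dividing by $\lambda$ and rearranging,
\begin{equation*}
    L_{\rm true}(Q) \le L(Q) + \tfrac{\lambda}{2M} + \tfrac{1}{\lambda}\KL[Q\|P] + \tfrac{1}{\lambda}\ln(1/\delta).
\end{equation*}

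Finally, I would introduce the reference point $w^\ast$ by writing $L(Q) = L(w^\ast) + \bigl(L(Q) - L(w^\ast)\bigr)$, at which point the bracketed difference is exactly the ``expected sharpness'' appearing in the theorem. Regrouping the five terms reproduces the displayed inequality. The main obstacle is essentially a bookkeeping one: one must be careful that the constant in the sub-Gaussian bound for $L_{\rm true}(w)-L(w)$ matches the $\lambda/(2M)$ in the statement, which in turn depends on the precise normalization and boundedness assumption made on the loss (a hypothesis that is implicit in the cited references rather than stated here). Once that constant is pinned down, the rest is standard PAC-Bayes algebra; no further probabilistic tool beyond Hoeffding, Markov, and Donsker--Varadhan is needed.
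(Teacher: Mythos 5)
Your proposal is correct and follows exactly the standard PAC-Bayes route (Hoeffding moment bound, Markov over the data, Donsker--Varadhan change of measure, then adding and subtracting $\Ls(w^*)$ to expose the expected sharpness), which is the argument of the cited references; the paper itself does not reprove this result but defers to \citet{tsuzuku2020normalized}. The only point to pin down is the one you already flag: the constant $\lambda/(2M)$ presumes a specific boundedness normalization of the per-sample loss (and the division step requires $\lambda>0$ strictly), both of which are implicit in the sources rather than stated in the theorem.
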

For a proof, see~\citet{tsuzuku2020normalized}. 
In the setting of this theorem, by picking $Q$ to be Gaussian with variance $s^2$, one obtains the following approximation of the expected sharpness
\vspace{-1mm}
\begin{equation}
\label{eq:trace_hess_pac}
    \Ls(Q(w | w^*),w^*)-\Ls(w^*) \approx \frac{s^2}{2} \tr(\nabla^2\Ls(w^*)).
\end{equation}
Thus, by minimizing the trace of the Hessian, Anti-PGD is expected to also reduce the PAC-Bayes bound from Thm.~\ref{thm:pac}.
In fact, the reasoning behind the bound in Thm.~\ref{thm:pac} has motivated researchers to find an explicit link between stochastic gradient noise and the trace of the Hessian at the solution found by SGD. Empirically, these quantities have a high correlation in many settings~\citep{yao2020pyhessian,smith2021origin}: usually, the lower the trace~(i.e., the flatter the minima), the higher is the test accuracy.
Similar bounds involving the trace of the Hessian are also discussed by~\cite{dziugaite2018data,wang2018identifying}.

\begin{figure*}
    \centering
    \includegraphics[width =0.35\linewidth]{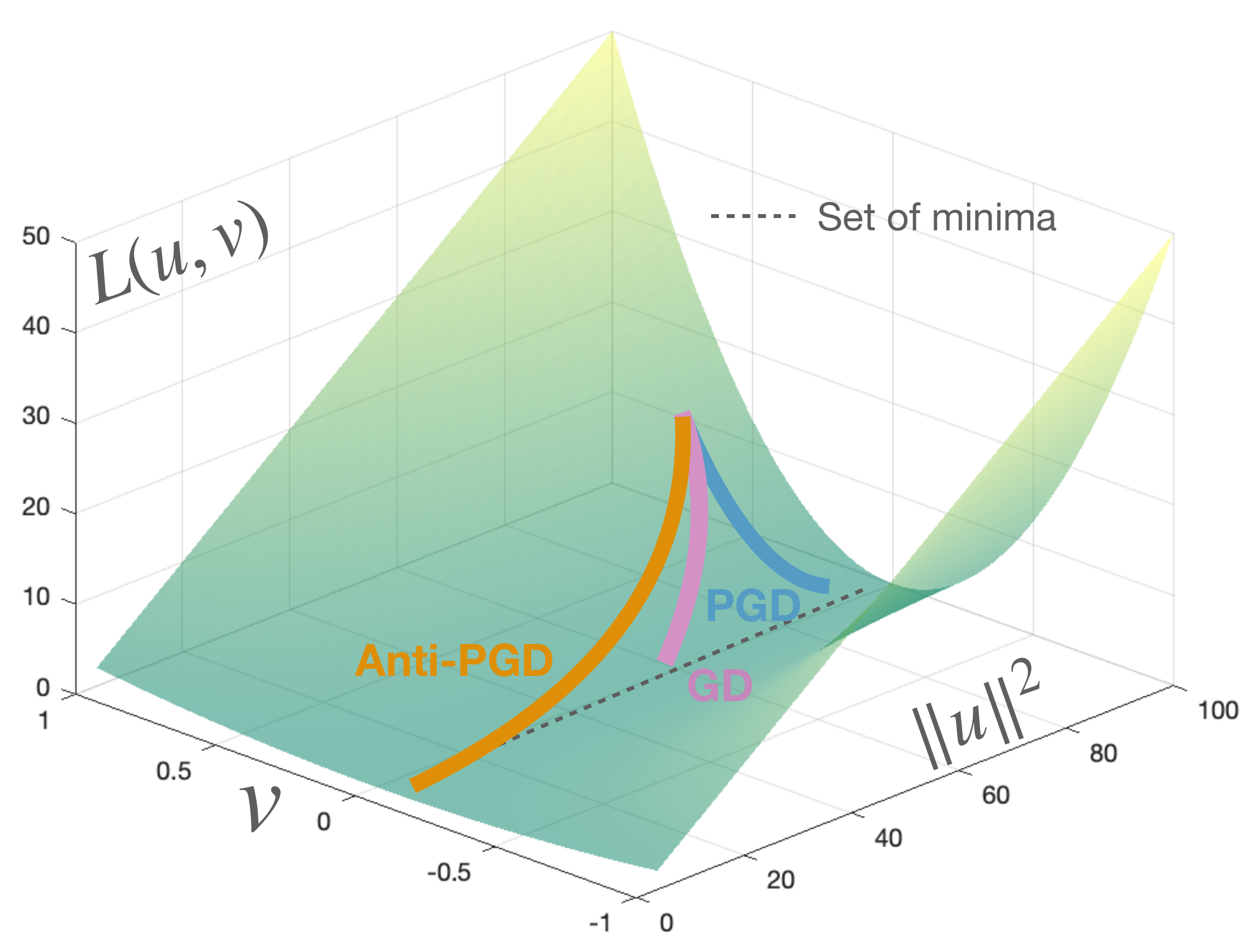}
    \hspace{1.3cm}
    \includegraphics[width=0.43\linewidth]{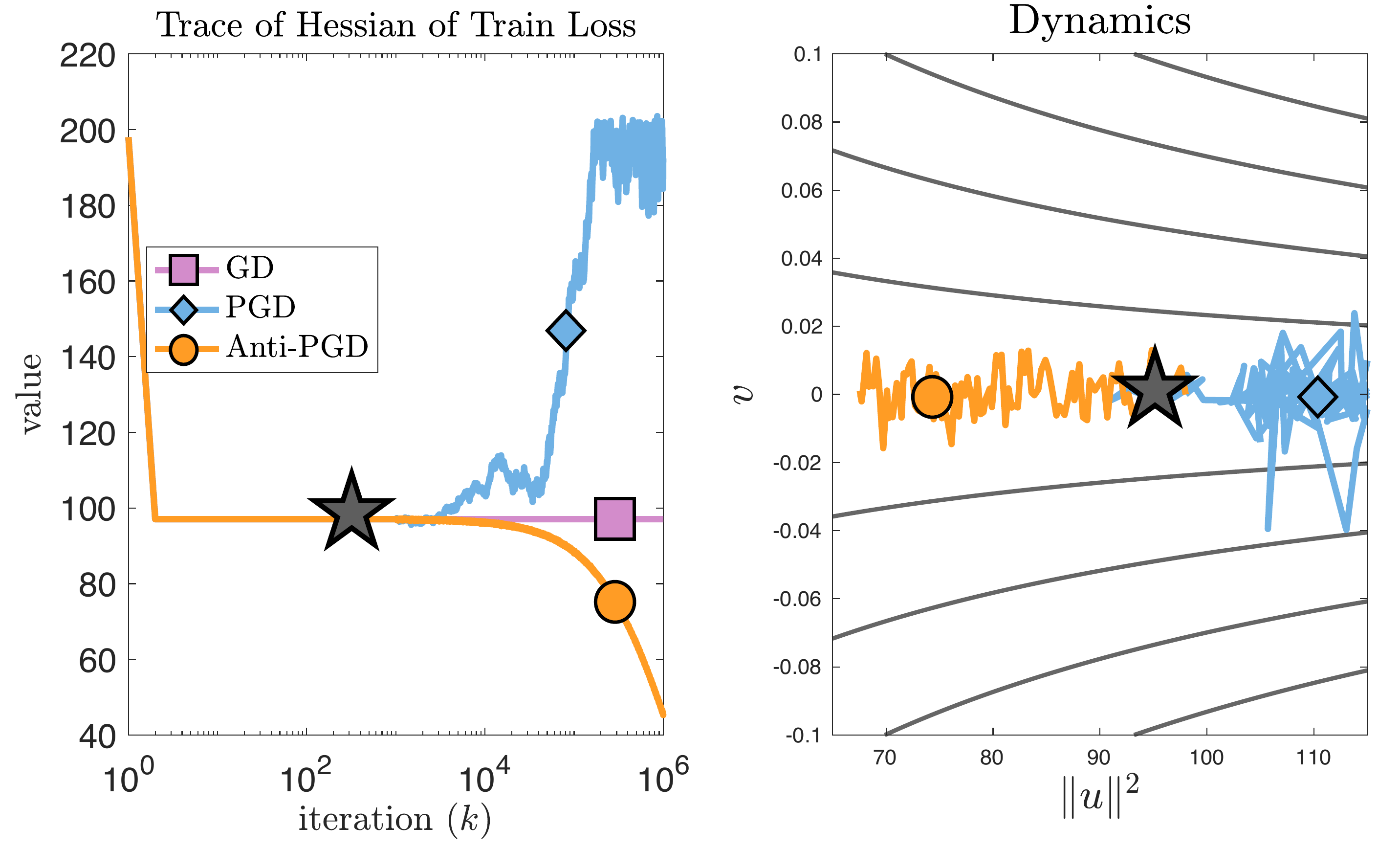}
    \vspace{-3mm}
    \caption{\textbf{Left}: Illustration of the \emph{widening valley} loss $L$, Eq.~\eqref{eq:def_wv}. A valley of minima with loss $L(u,v)=0$ for all $(u,v)$ with $v=0$; the smaller $\|u\|$, the flatter the minimum. GD gets stuck where it first touches this valley. PGD diverges to sharp regions (with high $\|u\|$). Anti-PGD converges to a flat minimum (with small $\|u\|$). \textbf{Right}: Simulation of the considered algorithms on the widening valley. After convergence of GD~(black star), we start injecting uncorrelated and anticorrelated noise. We choose $\eta = 0.01$, and $\sigma = 0.005$~--~yet the findings generalize to all sets of stable parameters. The observed behavior is supported by Thm.~\ref{thm:main_tube}. The plot looks similar for both Gaussian and Bernoulli noise injection.}
    \vspace{-2mm}
    \label{fig:tube_pic}
\end{figure*}

\subsection{Comparison with Label Noise}\label{subsec:Connection_with_label_noise}

Instead of perturbing $w$ as in PGD, label-noise methods perturb the label $y^{(i)}$ of the data.
If we denote $f_w(x)$ as the output of our model for input $x$, the label-noise update in the full-batch setting with squared loss is $w_{n+1} = w_n -\eta \nabla \bar L(w_n)$, with $\bar L(w) = \frac{1}{2}\sum_{i=1}^M\left[f_w(x^{(i)}) - y^{(i)} + \xi_{n+1} \right]^2$, for a set of random perturbations $\{\xi_n\}_{n=0}^N$.  It is instructive to compare the label-noise loss $\bar{L}$ with the Anti-PGD loss $L(\cdot + \xi_n)$ from Eq.~\eqref{eq:Anti-PGD_z_formulation}.
The above formula gives the gradient as
\begin{equation}
    \nabla \bar L(w) = \nabla  L(w) +\sum_{i=1}^M \nabla f_w(x^{(i)}) \xi_{n+1}.
\end{equation}
Hence, while label noise was observed to yield an improvement in terms of generalization~\citep{blanc2020implicit,haochen2021shape,damian2021label}, its effect (in general) is highly dependent on the model and on the data. 
Instead, the noise injection we propose is both data and model independent, as can be seen from the regularization in Eq.~\eqref{eq:implicit_bias}.

\subsection{Connection to Smoothing}
\label{sec:smoothing}
Eq.~\eqref{def:L_tilde} shows that Anti-PGD amounts to optimizing a regularized loss, which we can also interpret as a smoothing of the original objective function. Smoothing is of course not a new concept in the field of optimization as it is often used to regularize non-differentiable functions in order to compute approximate derivatives~\citep{nesterov2017random}, or to obtain faster rates of convergence~\citep{lin2018catalyst}.

In the context of deep learning, noise injection~(or even stochastic gradient noise) is often linked to smoothing \citep{kleinberg2018alternativesgd, stichescaping, bisla2022low}. As we saw in Eq.~\eqref{eq:Anti-PGD_z_formulation}, anticorrelated noise injection is equivalent to smoothing after a change of variables~--~this property was crucial in deriving the trace regularizer. We are not aware of any similar explicit regularization result in the smoothing literature~(most work focuses on the resulting landscape properties and convergence guarantees). Even though Anti-PGD is linked to smoothing, it is much more convenient to analyze: $\nabla f(x+\xi)$ follows a data-dependent distribution that is complex to characterize. Instead, in Anti-PGD, the smoothing effect comes from adding --- this is a \emph{linear} operation --- anticorrelated random variables. This is very convenient and will be leveraged in the proof for the next result, Thm.~\ref{thm:main_tube}.

\section{Convergence in Widening Valleys}  \label{sec:widevalley}

We have seen above that Anti-PGD acts as a regularizer on the trace of the Hessian. In this section, we will analyze the dynamics of Anti-PGD in more detail on the ``widening valley''~--~the simplest possible loss landscape with a changing trace of the Hessian. In the following subsections, we will introduce this model (\S\ref{subsec:wv_landscape}), demonstrate with experiments that Anti-PGD successfully finds flat minima in this model (\S\ref{subsec:wv_empirical}), prove this behaviour theoretically (\S\ref{subsec:wv_theoretical}), and explain how the widening valley relates to more realistic problems like sparse regression (\S\ref{subsec:wv_sparse}).

\subsection{The Widening Valley Landscape} \label{subsec:wv_landscape}

The \emph{widening valley} is defined as the loss function
\vspace{-1mm}
\begin{equation} \label{eq:def_wv}
    L(u,v) = \tfrac{1}{2} v^2 \|u\|^2,
\end{equation}
where $\|\cdot\|$ is the Euclidean norm, $v\in\mathbb{R}$, and $u\in\mathbb{R}^d$; see Fig.~\ref{fig:tube_pic}. 
The gradient and Hessian of $L$ are given by
\begin{equation} \label{eq:gradientHessianWV}
    \nabla L(u,v) = \begin{bmatrix} v^2\cdot u   \\ \| u \|^2 v   \end{bmatrix}, \quad    
    \nabla^2 L(u,v) = \begin{bmatrix} v^2 I_d & 2vu \\ 2vu^{\top} & \| u \| \end{bmatrix}.
\end{equation}
The trace of the Hessian is thus 
\begin{equation} \label{eq:trace_of_hessian_wv}
    \tr(\nabla^2 L(u,v))
    =  d v^2 + \| u \|^2. 
\end{equation}
We consider $L$ as a suitable problem to analyze the dynamics of GD and Anti-PGD as it has a relatively simple structure consisting of a valley of minima with monotonously changing flatness (as measured by the trace of the Hessian):
All $(u,v)$ with $v = 0$ are minima, but we also require $\| u \|$ to be minimized as well to get a small trace of the Hessian.

The widening valley can also be seen as a simplified local model of the landscape close to a minimizer. Indeed,~\citet{draxler2018essentially} showed that minimizers in neural networks are often connected by a path where the loss is exactly zero: no jumping is required for an optimizer to gradually increase the solution flatness.
While these valleys are not straight in general and the flatness might not change monotonously, our straight valley \eqref{eq:def_wv} with monotonously changing flatness serves as a first simplified model.
We will link it to a more realistic regression problem in \S \ref{subsec:wv_sparse}.

\subsection{Empirical Demonstration}  \label{subsec:wv_empirical}

When optimizing the widening valley \eqref{eq:def_wv}, GD will get stuck in any of the global minima $(u,v=0)$, regardless of their flatness. In particular, if the dimension $d\gg 1$, the path of GD will be biased towards making $v$ small and not optimizing $u$~(since the direction along $v$ is the most curved). As a result, the final Hessian trace will be $\|u_0\|^2$. Improving this by injecting noise is challenging: when adding stochastic perturbations, one has to balance perturbing $v$ away from zero~--~to get a gradient \eqref{eq:gradientHessianWV} to reduce $\|u\|$~--~while preventing $\|u\|$ from growing too much.

We find empirically that Anti-PGD succeeds to do this and moves to flat parts of the valley, while PGD does not; see Fig.~\ref{fig:tube_pic}.
This means that Anti-PGD converges to flat parts of the valley, while PGD diverges to sharper regions; see Fig.~\ref{fig:tube_verification}.

\subsection{Theoretical Analysis}  \label{subsec:wv_theoretical} 

\begin{figure}
    \centering
    \includegraphics[width = 0.49\linewidth]{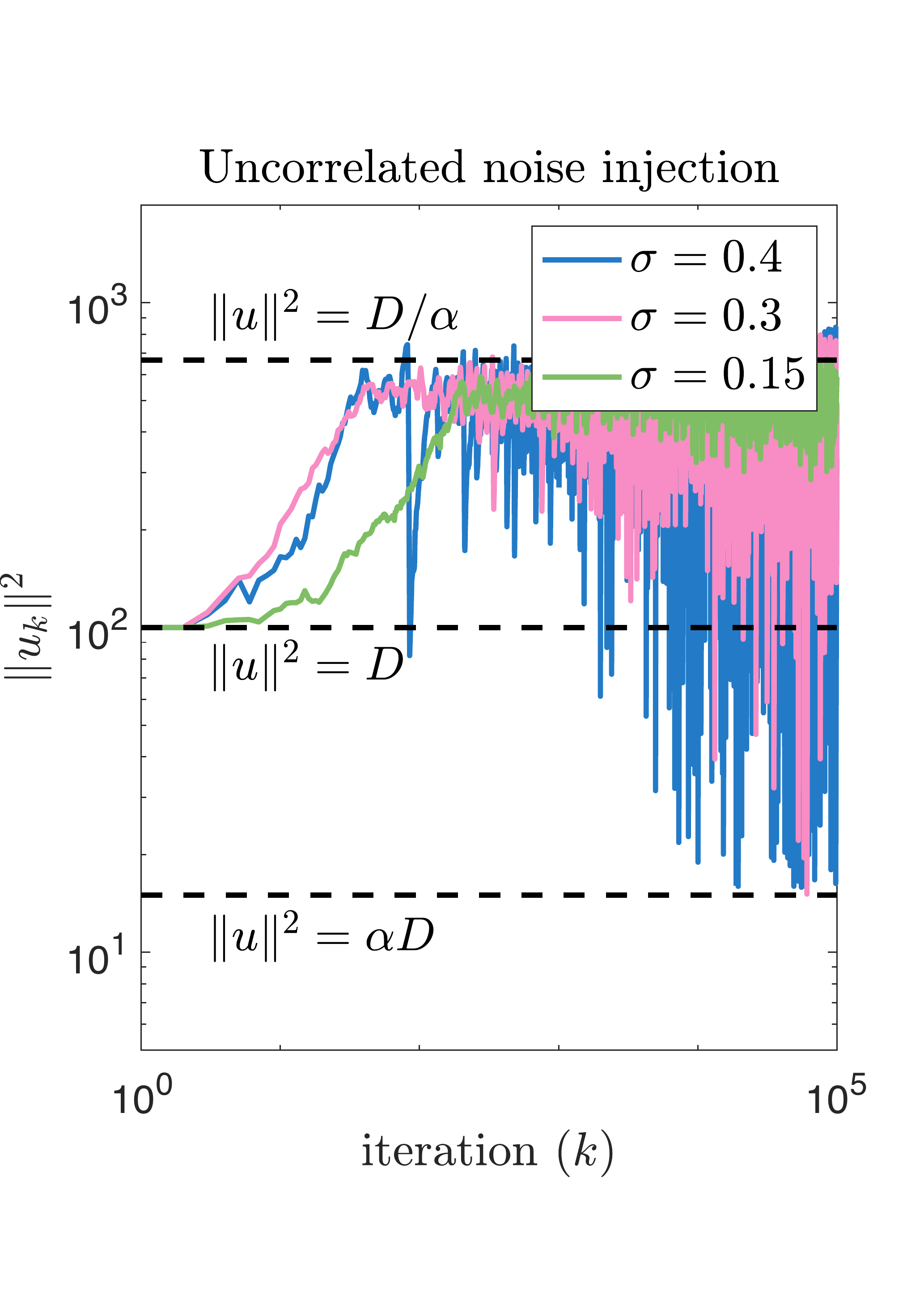}
    \includegraphics[width = 0.49\linewidth]{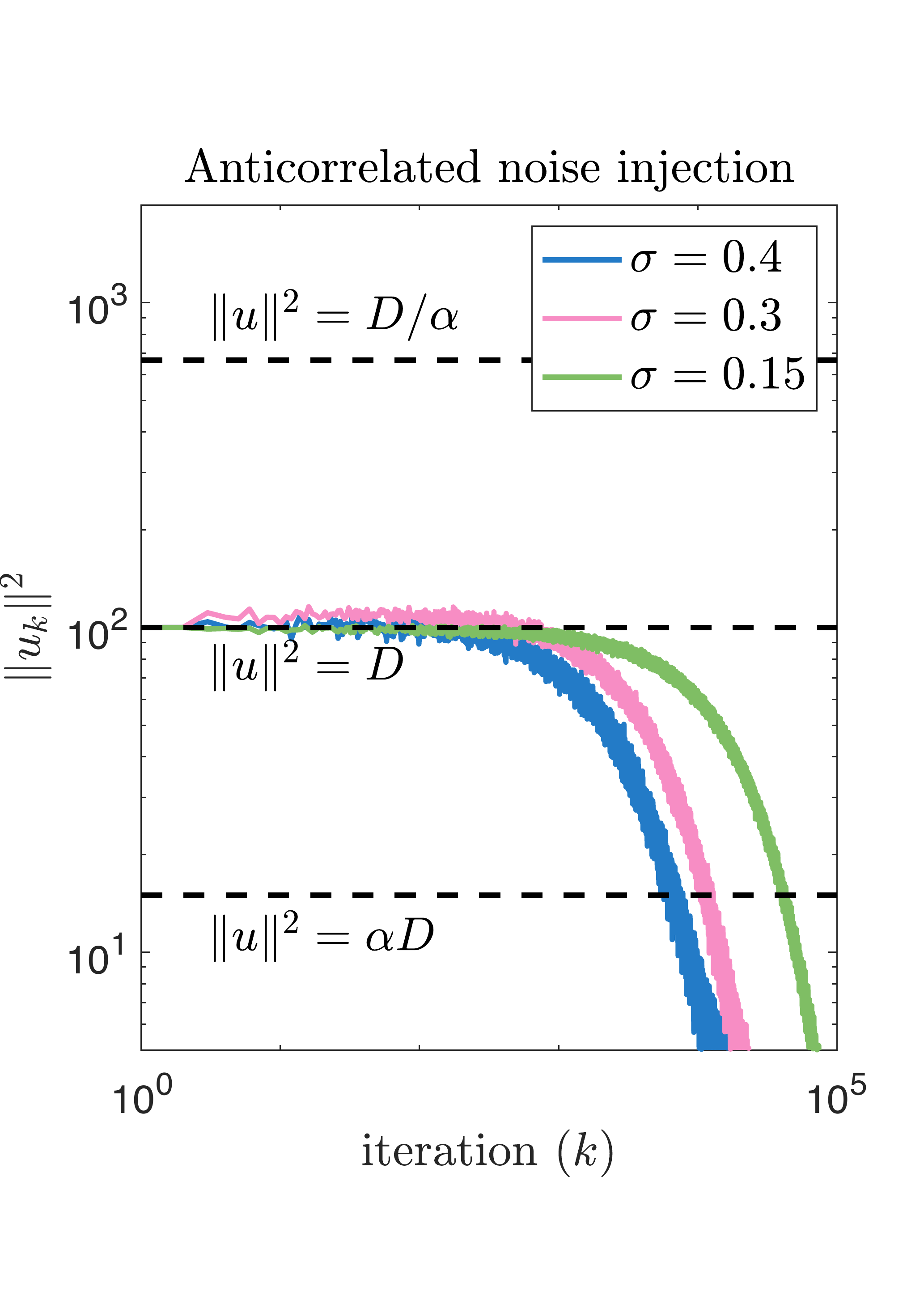}
    \vspace{-5mm}
    \caption{\small Numerical illustration and verification of Thm.~\ref{thm:main_tube}. Performance of PGD~(\textbf{left}) and Anti-PGD~(\textbf{right}) on the widening valley in Eq.~\eqref{eq:def_wv}. The setting and the notation is as described in Thm.~\ref{thm:main_tube}, and the simulation confirms the result: that is, Anti-PGD effectively decreases $\|u\|^2$ below $\alpha D$, where for this plot we consider $\alpha = 0.25$, $\eta = \alpha/D$ and $d=100$. Instead, the high problem dimensionality $d\ge 2/\alpha^2 = 32$ induces an increase in $\|u\|^2$ for standard PGD, which gets bigger than $D/\alpha$.}
    \vspace{-2mm}
    \label{fig:tube_verification}
\end{figure}

The following theorem proves what we empirically demonstrated in the preceding section.
\begin{restatable}[Widening Valley]{thm}{tube}
Let $L:\mathbb{R}^{d+1}\to\mathbb{R}$ be the \emph{widening valley} loss from Eq.~\eqref{eq:def_wv}.
We start optimizing from a point $w_0=(u_0,0)$, where $\|u_0\|^2=D\gg 1$~(e.g. the solution found by gradient descent), around which we consider the domain $\mathcal{D}_\alpha :=\{(u,v) \in \R^{d+1}:  \|u\|^2\in(\alpha D,D/\alpha)\}$ for some fixed $\alpha \in (0,1)$.
We want to compare the long-term stochastic dynamics of PGD and Anti-PGD, as defined in Eqs.~\eqref{eq:def_PGD} and \eqref{eq:def_antiPGD}, in terms of where they exit $\mathcal{D}_\alpha$.
As a noise model, we assume that the i.i.d.~perturbations $\xi_n$ are distributed according to a symmetric centered Bernoulli distribution  (i.e., $\sigma$ and $-\sigma$ have probability $1/2$) whose variance $\sigma^2$ is upper bounded by $\sigma^2 \in\left(0, \min\left\{\tfrac{\alpha^3 D}{2},  \tfrac{D}{8\alpha}\right\} \right]$. 
As a step size, we set $\eta = \frac{\alpha}{2D}$ which, for both methods, leads to stable dynamics inside of $\mathcal{D}_\alpha$. We find that (on average) PGD and Anti-PGD exit through different sides of $\mathcal{D}_\alpha$:
\vspace{-2mm}
\begin{enumerate}
    \item {\bfseries In high dimensions, PGD diverges away from zero.} If $d \geq \frac{2}{\alpha^2}$, then it holds for any admissible  $\sigma^2$ that
    \begin{equation}  \label{eq:thm_wv_PGD}
        \lim_{n \to \infty} \Exp\left[ \| u_n \|^2 \right] \geq D/\alpha,
    \end{equation}
    where $u_n$ are the first $d$ coordinates of $w_n$ computed by PGD as in \eqref{eq:def_PGD}. 
    \item {\bfseries Independent of dimensions, Anti-PGD goes to zero.} For  any $d \in \mathbb{N}$, if we choose any admissible $\sigma^2$ such that $\sigma^2 \leq \frac{\alpha D}{2d}$, then
    \begin{equation} \label{eq:thm_wv_antiPGD}
        \lim_{n \to \infty} \Exp\left[ \| u_n \|^2 \right] \leq \alpha D,
    \end{equation}
    where $u_n$ are the first $d$ coordinates of $w_n$, computed by Anti-PGD as in \eqref{eq:def_antiPGD}. 
\end{enumerate}
\label{thm:main_tube}
\end{restatable}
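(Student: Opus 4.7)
My proof would exploit a two-time-scale decomposition inside $\mathcal{D}_\alpha$. With $\eta = \alpha/(2D)$, one has $\eta\|u_n\|^2 \in [\alpha^2/2,\,1/2]$ throughout $\mathcal{D}_\alpha$, so $v_n$ is a \emph{fast} (strongly contracting) variable while $\|u_n\|^2$ evolves \emph{slowly}. Both claims then follow from computing the net one-step drift on $\|u_n\|^2$ after substituting the quasi-stationary size of $v_n^2$. The qualitative difference between the two methods will be whether the injected noise produces an additive $d\sigma^2$ inflation on $\|u_n\|^2$ (PGD, pushing it up) or is absorbed by a change of variables (Anti-PGD, leaving only the contraction).

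For Part~1 (PGD), the core computations are
\[
\Exp[\|u_{n+1}\|^2 \mid \mathcal{F}_n] = (1-\eta v_n^2)^2 \|u_n\|^2 + d\sigma^2, \qquad \Exp[v_{n+1}^2 \mid \mathcal{F}_n] = (1-\eta\|u_n\|^2)^2 v_n^2 + \sigma^2.
\]
The second recursion, combined with $\eta\|u_n\|^2 \geq \alpha^2/2$ inside $\mathcal{D}_\alpha$, yields a stationary upper bound $\Exp[v_n^2] \lesssim \sigma^2/(\eta\|u_n\|^2)$. Substituting this into the first recursion, the contractive $-2\eta v_n^2 \|u_n\|^2$ term contributes only $O(\sigma^2/\alpha^2)$, so the net drift on $\|u_n\|^2$ is at least $(d - c/\alpha^2)\sigma^2$ for an explicit constant $c$; the assumption $d\geq 2/\alpha^2$ renders this strictly positive. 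A stopping-time argument with $\tau = \inf\{n : w_n \notin \mathcal{D}_\alpha\}$ then forces exit through the upper side of $\mathcal{D}_\alpha$, yielding~\eqref{eq:thm_wv_PGD}.

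For Part~2 (Anti-PGD), the key move is the change of variables $z_n = w_n - \xi_n$ from~\eqref{eq:Anti-PGD_z_formulation}, splitting $z_n = (a_n, b_n)$ with $a_n \in \R^d$. Since $z_n$ is $\mathcal{F}_{n-1}$-measurable, $\xi_n$ is independent of it; exploiting this independence, the independence of $\xi_n^{(u)}$ and $\xi_n^{(v)}$, and the Bernoulli identity $\|\xi_n^{(u)}\|^2 = d\sigma^2$ (deterministic), a direct expansion gives
\[
\Exp[\|a_{n+1}\|^2 \mid \mathcal{F}_{n-1}] = \|a_n\|^2 \bigl(1 - 2\eta(b_n^2+\sigma^2)\bigr) + \eta^2 R_n,
\]
where $R_n$ is a polynomial in $(\|a_n\|^2, b_n^2, \sigma^2, d)$ rendered negligible by the admissible ranges of $\sigma^2$. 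A parallel recursion for $b_n^2$ yields the quasi-stationary bound $\Exp[b_n^2] = O(\eta\sigma^2 \|a_n\|^2)$, so the leading drift on $\|a_n\|^2$ is strictly negative at rate $\approx 2\eta\sigma^2$, with no additive $d\sigma^2$ inflation. Therefore $\Exp[\|a_n\|^2]$ decays geometrically to an $o(d\sigma^2)$ stationary level, and since $\Exp[\|u_n\|^2] = \Exp[\|a_n\|^2] + d\sigma^2$, the assumption $d\sigma^2 \leq \alpha D/2$ produces~\eqref{eq:thm_wv_antiPGD}.

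The main technical obstacle will be justifying the quasi-stationary substitutions rigorously, since the fast and slow variables are not independent. I would handle this with a joint Lyapunov function tracking $(\Exp[v_n^2], \Exp[\|u_n\|^2])$ (resp.\ $(\Exp[b_n^2], \Exp[\|a_n\|^2])$) simultaneously, or with explicit two-time-scale averaging inequalities. A secondary subtlety is the stopping-time argument for PGD, which requires drift estimates that remain valid up to $\tau$; and in the Anti-PGD case, the polynomial remainder $R_n$ must be bookkept carefully using both conditions $\sigma^2 \leq \alpha^3 D/2$ and $\sigma^2 \leq D/(8\alpha)$, since otherwise the $\eta^2 R_n$ term could threaten to dominate the $-2\eta\sigma^2 \|a_n\|^2$ contraction.
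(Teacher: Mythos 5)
Your proposal is sound and reaches both conclusions, but by a genuinely different route from the paper's. For PGD, the paper does not run a drift argument: it derives the almost-sure bound $|v_k|\le 2\sigma/\alpha^2$ from the $v$-recursion, substitutes this \emph{best-case} contraction factor $\rho = 1-\eta v_{\max}^2$ into the $u$-recursion, and computes the exact stationary second moment $d\sigma^2/(1-\rho^2)$ via a variation-of-constants formula~--~which in their derivation actually forces the stronger condition $d\ge 2/\alpha^4$, even though the theorem states $d\ge 2/\alpha^2$. Your drift computation, which uses the quasi-stationary in-expectation bound $\Exp[v_n^2]\lesssim \sigma^2/\alpha^2$ (obtainable from a deterministic recursion because $\|u_k\|^2\ge \alpha D$ holds almost surely inside $\mathcal{D}_\alpha$, which also defuses your correlation worry via $\Exp[v_n^2\|u_n\|^2]\le (D/\alpha)\,\Exp[v_n^2]$), recovers the stated threshold $d\ge 2/\alpha^2$; the price is that you must still convert ``positive drift'' into the limit statement \eqref{eq:thm_wv_PGD}, a step the paper itself handles only informally by computing the limit of the best-case linearized system. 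For Anti-PGD the divergence is larger: the paper stays in the $w$-variables and proves an anticorrelated variation-of-constants formula in which each accumulated noise term carries a factor $(A_{i+1}-I)=-\eta v_{i+1}^2 I$, then shows by induction that the coefficient $\nu_k=\rho_k^2\nu_{k-1}+(1-\rho_k)^2$ stays below $1$, giving $\lim_k\Exp[\|u_k\|^2]\le 2d\sigma^2$. Your change of variables $z=w-\xi$ (which the paper introduces to derive Eq.~\eqref{eq:Anti-PGD_z_formulation} but does not actually deploy in this proof) replaces that machinery with the one-step identities $\Exp[\|a_{n+1}\|^2\mid\mathcal{F}_{n-1}]=\bigl(1-2\eta(b_n^2+\sigma^2)\bigr)\|a_n\|^2+\eta^2R_n$ and $\Exp[\|u_n\|^2]=\Exp[\|a_n\|^2]+d\sigma^2$; both check out for Bernoulli noise, with $R_n=(b_n^4+6b_n^2\sigma^2+\sigma^4)(\|a_n\|^2+d\sigma^2)$ indeed dominated by the $2\eta\sigma^2$ contraction under the admissible range of $\sigma^2$ once $|b_n|=O(\sigma)$ is established (the paper's inductive bound $|v_k|\le 4\sigma$ transfers directly). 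This route is arguably cleaner and makes the mechanism more transparent: for Anti-PGD the $d\sigma^2$ noise floor is the \emph{entire} limiting value, whereas for PGD it is an additive per-step inflation that compounds.
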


As expected, this theorem implies that, as $n \to \infty$, Anti-PGD reduces the trace of Hessian while PGD increases it. 
For a proof, see Appendix~\ref{app:wv_proof}.

\begin{restatable}[The trace of the Hessian in the widening valley]{corollary}{cortube} \label{corollary:main_tube}
In the same setting as Thm.~\ref{thm:main_tube}, let $\eta = \frac{\alpha}{2D}$, $\sigma^2 \in\left(0, \min\left\{\tfrac{\alpha^3 D}{2},  \tfrac{D}{8\alpha},\frac{\alpha D}{2d}\right\} \right]$ and $d\ge\frac{2}{\alpha^2}$. If $\alpha\ll 1$, then
\begin{align*}
    &\lim_{n \to \infty} \Exp[\tr(\nabla^2 L(w^{\rm anti}_n)))]
    \leq
    16 \alpha D \ll \Exp[\tr(\nabla^2 L(w_0)))]\\
    &\lim_{n \to \infty} \Exp[\tr(\nabla^2 L(w^{\text{un}}_n)))]\ge  \ D/\alpha \ \gg
    \Exp[\tr(\nabla^2 L(w_0)))],
\end{align*}
where $w^{\text{un}}_n=(u_n,v_n)$ and $w^{\rm anti}_n=(u_n,v_n)$ are the weights computed by Anti-PGD and PGD respectively.
\end{restatable}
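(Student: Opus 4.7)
The plan is to reduce the corollary to Theorem \ref{thm:main_tube} by combining its bounds on $\Exp[\|u_n\|^2]$ with a separate second-moment estimate on $v_n$. The starting point is the explicit trace formula from Eq.~\eqref{eq:trace_of_hessian_wv}, namely $\tr(\nabla^2 L(u,v)) = dv^2 + \|u\|^2$, which immediately gives $\Exp[\tr(\nabla^2 L(w_0))] = D$ since $v_0=0$ and $\|u_0\|^2 = D$.

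For PGD, the argument is one line: since $\tr(\nabla^2 L(u,v)) \geq \|u\|^2$ pointwise, the second bullet of Theorem \ref{thm:main_tube} yields
\begin{equation*}
    \lim_{n\to\infty}\Exp[\tr(\nabla^2 L(w^{\text{un}}_n))] \;\geq\; \lim_{n\to\infty}\Exp[\|u_n\|^2] \;\geq\; D/\alpha,
\end{equation*}
which is $\gg D$ in the regime $\alpha \ll 1$. The nontrivial direction is the upper bound for Anti-PGD. Theorem \ref{thm:main_tube} already supplies $\lim_n\Exp[\|u_n\|^2]\leq \alpha D$, so the task reduces to showing $d\,\Exp[v_n^2] = O(\alpha D)$ in steady state. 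Since the noise budget is $\sigma^2 \leq \alpha D/(2d)$, it suffices to prove $\Exp[v_n^2] = O(\sigma^2)$.

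To obtain this, I would analyze the scalar recursion coming from Eq.~\eqref{eq:def_antiPGD} specialized to the widening valley:
\begin{equation*}
    v_{n+1} = a_n v_n + (\xi^v_{n+1} - \xi^v_n), \qquad a_n := 1 - \eta\|u_n\|^2,
\end{equation*}
and pass to the decorrelated variable $y_n := v_n - \xi^v_n$, which satisfies $y_{n+1} = a_n y_n - \eta\|u_n\|^2\,\xi^v_n$. A short induction on the noise history shows that $y_n$ is a function of $\{\xi^u_{0:n-1},\xi^v_{0:n-1}\}$ only, and hence is independent of $\xi^v_n$. This gives the clean identity $\Exp[v_n^2] = \Exp[y_n^2] + \sigma^2$ (and causes the cross term to vanish when squaring the $y_{n+1}$ recursion). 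Squaring and taking expectations, together with the deterministic bounds $\|u_n\|^2 \geq \alpha D$ and $\eta = \alpha/(2D)$ valid on $\mathcal{D}_\alpha$, produces the drift inequality
\begin{equation*}
    \Exp[y_{n+1}^2] \leq (1 - c\alpha^2)\,\Exp[y_n^2] + \eta^2\sigma^2\,\Exp[\|u_n\|^4]
\end{equation*}
for an absolute constant $c>0$. Solving in the stationary regime and using $\Exp[\|u_n\|^4] \leq (D/\alpha)\Exp[\|u_n\|^2] \leq D^2$ (valid on $\mathcal{D}_\alpha$ combined with the Anti-PGD conclusion of Theorem \ref{thm:main_tube}) yields $\Exp[y_n^2] = O(\sigma^2)$, hence $d\,\Exp[v_n^2] = O(\alpha D)$. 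Adding the two contributions gives $\Exp[\tr(\nabla^2 L(w^{\rm anti}_n))] \leq 16\alpha D$ with a margin to spare on the constant.

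The main obstacle is that Theorem \ref{thm:main_tube} only controls expectations of $\|u_n\|^2$ and does not by itself certify almost-sure confinement in $\mathcal{D}_\alpha$ in the limit, which is what I am implicitly using to justify both the lower bound $\|u_n\|^2 \geq \alpha D$ (needed for the $(1-c\alpha^2)$ contraction) and the upper bound $\|u_n\|^2 \leq D/\alpha$ (needed to control $\Exp[\|u_n\|^4]$). Closing this gap rigorously requires reusing the confinement/drift estimates developed in the proof of Theorem \ref{thm:main_tube} in Appendix \ref{app:wv_proof}, either by invoking a stopping-time argument until exit from $\mathcal{D}_\alpha$ or by recycling the intermediate martingale bounds on $\|u_n\|^2$. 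Once that technicality is handled, the $v$-side estimate above gives the claimed bound and completes the proof.
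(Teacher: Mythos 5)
Your proposal is correct and the PGD half is exactly the paper's argument, but for the Anti-PGD half you take a genuinely different and more elaborate route. The paper simply recycles the almost-sure bound $|v_n|\le 4\sigma$, which was already established by induction in Eq.~\eqref{eq:for_ind_vk} inside the proof of Theorem~\ref{thm:main_tube}; combined with $d\sigma^2\le \alpha D/2$ this gives $d\,\Exp[v_n^2]\le 16 d\sigma^2\le 8\alpha D$ in one line, and adding $\lim_n\Exp[\|u_n\|^2]\le\alpha D$ yields the stated $16\alpha D$ with room to spare. You instead prove the weaker-in-form but sufficient statement $\Exp[v_n^2]=O(\sigma^2)$ via the decorrelated variable $y_n=v_n-\xi^v_n$, the identity $\Exp[v_n^2]=\Exp[y_n^2]+\sigma^2$, and a contraction inequality for $\Exp[y_n^2]$; the algebra checks out (with $a_n^2\le 1-\tfrac34\alpha^2$ on $\mathcal{D}_\alpha$ and $\eta^2\sigma^2 D^2/(c\alpha^2)=O(\sigma^2)$ you even get a constant well below $16$), so this is a valid alternative. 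What your route buys is robustness: it only needs second moments of the noise, so it would survive replacing the bounded Bernoulli perturbations by, say, Gaussian ones, whereas the paper's $|v_n|\le 4\sigma$ bound relies on the noise being almost surely bounded. What it costs is the extra independence bookkeeping and the drift computation. Finally, the ``main obstacle'' you flag --- that confinement to $\mathcal{D}_\alpha$ is assumed rather than proved --- is not specific to your argument: the paper's proof of Theorem~\ref{thm:main_tube} is itself framed as a worst-case analysis under the hypothesis that $\|u_k\|^2$ remains in $(\alpha D, D/\alpha)$, and the corollary inherits exactly the same conditioning, so no additional stopping-time machinery is expected of you here.
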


\subsection{Relation to Linear Networks with One Hidden Unit}  \label{subsec:wv_sparse}

In this section, we explain how widening valleys, similar to our model \eqref{eq:def_wv}, might appear in more realistic learning problems.
To this end, consider sparse regression with $M$ input-output pairs $\{(x^i,y^i)\}_{i=1}^M$, where $x^i\in \mathbb{R}^{m+d}$, $d,m>1$, and $y^i\in\mathbb{R}$ for all $i\in[M]$.
To induce sparseness, we consider the setting where only the first $m$ features of each $x^i$, i.e., $(x^i_1,x^i_2,\dots,x^i_m)$ are relevant predictors, while the other features $(x^i_{m+1},x^i_{m+2},\dots,x^i_{m+d})$ are uncorrelated from the target.
Further, we assume that the input has isotropic standardised distribution. 
As predictor, we consider a neural network with one hidden neuron and standard square loss 
\begin{equation}
    L(u,v) = \frac{1}{2M}\sum_{i=1}^M \left ( y^i - v\cdot u^\top x^i \right )^2.
\end{equation}
\begin{figure}
    \centering
    \includegraphics[width = 0.2\textwidth]{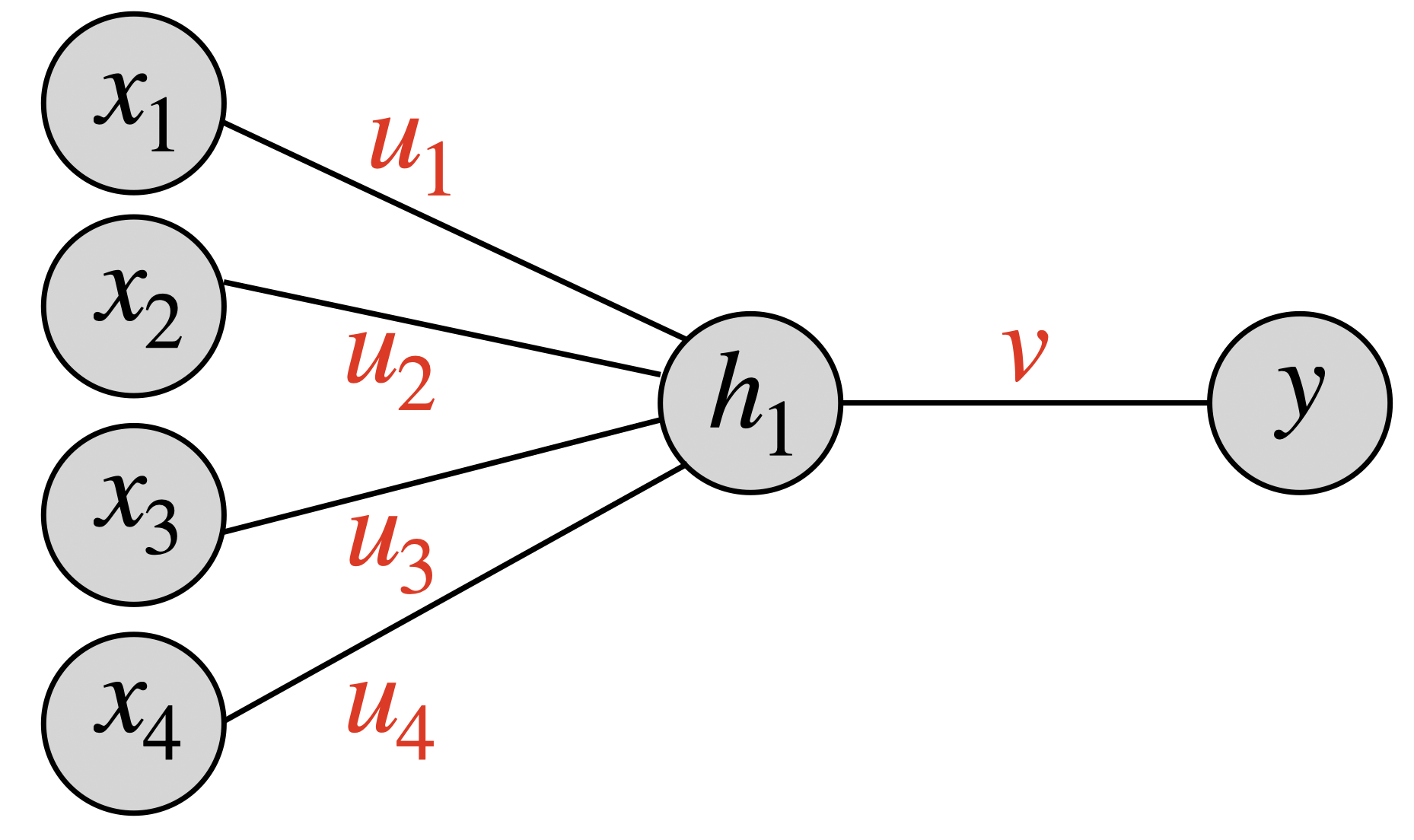}
    \vspace{-2mm}
    \caption{Pictorial illustration of the network~(linear activations, one hidden unit) we study in~\S\ref{subsec:wv_sparse}. The associated loss function, Eq.~\eqref{eq:sparse_loss}, has striking similarities to the widening valley, Eq.~\eqref{eq:def_wv}.}
    \vspace{-3mm}
\end{figure}
Such a loss is highly non-convex, due to the non-linear interaction between $v$ and $u$. By expanding the square, we obtain
\begin{equation*}
    2 L(u,v) = \Exp_i[ (y^i)^2 ] - 2v\cdot u^\top \Exp[y^ix^i] + v^2\Exp_i[ (u^\top x^i)^2].
\end{equation*}
We can drop the first term since it is irrelevant for optimization. Further, the last term can be written as
\begin{equation}
    v^2\Exp_i [ (u^\top x^i)^2 ] 
    = v^2\Exp_i [ \tr(u^\top x^i (x^i)^\top u) ].
\end{equation}
Using the cyclic property of the trace and the assumption $\Exp_i [x^i (x^i)^\top] = I$, we get
\begin{equation}
    v^2\Exp_i[(u^\top x)^2] = v^2\tr( u u^\top) = v^2\|u\|^2.
\end{equation}
Therefore, we obtain $L(u,v) = \frac{1}{2}v^2\|u\|^2- 2v\cdot u^\top \Exp[y^ix^i]$. We now use the sparseness assumption: since the features $(x^i_{m+1},x^i_{m+2},\dots,x^i_{m+d})$ are uncorrelated from the target, we have 
\begin{equation}\label{eq:sparse_loss}
    L(u,v) = \frac{1}{2}v^2\|u\|^2- 2v\cdot u_{1:m}^\top \Exp[y^ix^i_{1:m}].
\end{equation}
This loss \eqref{eq:sparse_loss} is very similar to the widening valley \eqref{eq:def_wv}:
For good generalization, the weights relative to the spurious coordinates $(u_{m+1}, u_{m+2},\dots, u_{m+d})$ have to be set to zero. Unfortunately, the solution of gradient descent~(without further regularization), in general does not have this property~(see \S\ref{subsec:wv_empirical}). 
This fact motivates us to look at the dynamics in the space $(u_{m+1}, u_{m+2},\dots, u_{m+d},v)$, ignoring the dynamics on the space $(u_{1}, u_{2},\dots, u_{m})$. In the spurious subspace of the parameter space, the last term in the last equation is a constant, and therefore the effective loss becomes $L(u,v) = \frac{1}{2}v^2\|u\|^2$, where $u\in\mathbb{R}^d$ denotes the vector $(u_{m+1}, \dots, u_{m+d})$.

\begin{figure*}[ht]
    \centering
    \includegraphics[height = 0.28\textwidth]{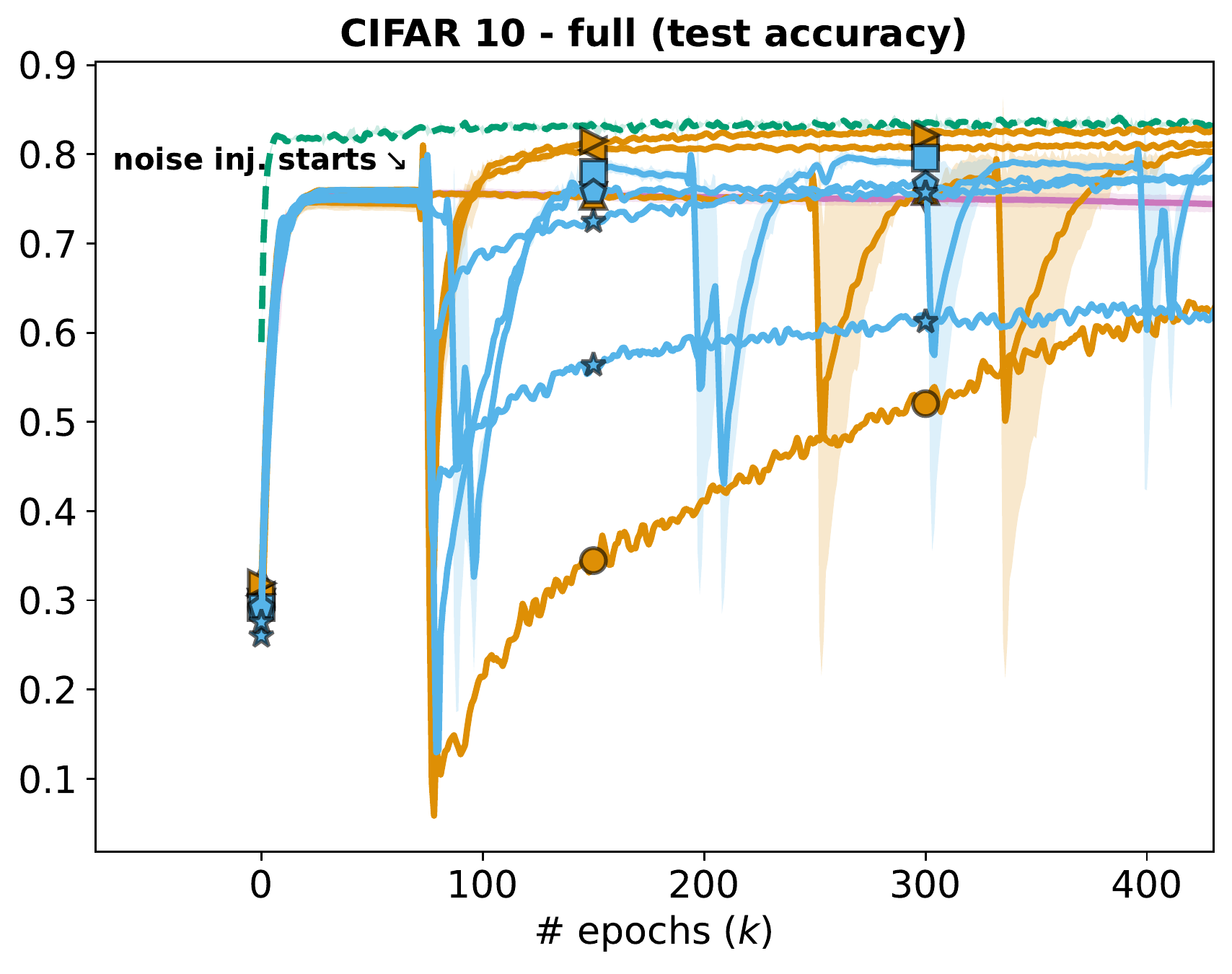}
        \includegraphics[height = 0.28\textwidth]{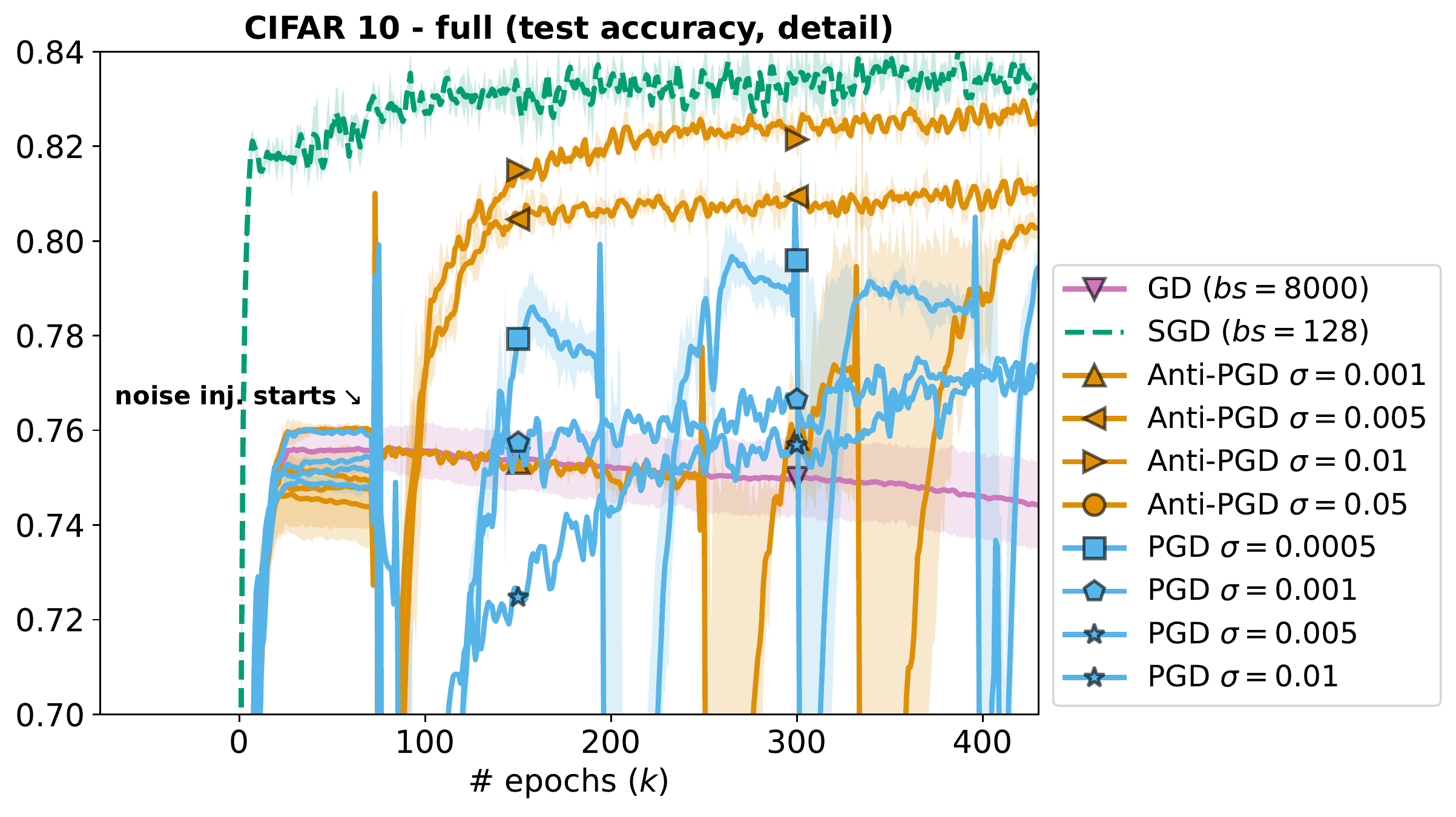}
    \vspace{-2mm}
    \caption{\small Anti-PGD, PGD and GD on CIFAR 10, using a ResNet18-like architecture~(details and further plots in the Appendix). Mean and one standard deviation are plotted. We train all algorithms with a step size of $0.05$, which leads to fast convergence of both GD and SGD~(batch size 128) in 70 iteration. Compared to Fig.~\ref{fig:page1}, we start the noise injection at epoch 75. Results are discussed in the main text. }
    \vspace{-3mm}
    \label{fig:cifar-full-inj-init}
\end{figure*}

\vspace{-2mm}
\section{Additional Experiments and Details}
\label{sec:exp}
We demonstrated the validity of our theoretical findings in Fig.~\ref{fig:front_experiments} by showcasing the performance of Anti-PGD on the three different problems (see the next three paragraphs).
Finally, in another experiment on CIFAR 10, we will show that Anti-PGD can recover from a sharp minimum. 
\vspace{-3mm}
\paragraph{Quadratically-parametrized linear regression.} For a data matrix $X\in\mathbb{R}^{n\times d}$~($d=100, n=40$) and targets $y\in\R^d$, this loss is $L(w) = \frac{1}{4n}\|X (w\odot w) - y\|^2$, where $\odot$ denotes the element-wise product. While the expressive power of the underlying model is limited, it has a few interesting features which make it a compelling case study~(see discussion by~\citet{haochen2021shape}). First, note that the nonlinear parametrization makes the loss non-convex: to see this simply note that changing the sign in any weight does not change the loss. Furthermore, inspecting the Hessian~(see \S\ref{sec:squared_regr}) one can easily see that minima with different curvature exist. On this problem, Anti-PGD~(with Gaussian perturbations) is able to find a flat minimum; a pictorial illustration of the corresponding dynamics can be found in Fig.~\ref{fig:page1}. In \S\ref{sec:squared_regr} we show that this good performance is robust to different choices of hyperparameters: we found no setting where well-tuned PGD~(with Gaussian perturbations) outperforms Anti-PGD.
\vspace{-3mm}
\paragraph{Matrix sensing.} The corresponding loss function has some similarities with quadratically-parametrized linear regression, and was considered by~\citet{blanc2020implicit} to study label noise. All the findings of the above paragraph hold true in this setting as well. Details on the experimental setup and hyperparameter tuning can be found in~\S\ref{sec:matrix_sensing}.
\vspace{-3mm}
\paragraph{CIFAR10 on ResNet 18.} We consider training a ResNet18-like architecture~\cite{he2016deep} with batch normalization. Architecture details are provided in~\S\ref{sec:cifar_app}. The performance on this network greatly depends on careful hyperparameter tuning, algorithmic choices~(e.g., adaptive step sizes), schedulers, etc. Here, to keep things simple, we train with a simple SGD optimizer (with momentum $0.9$), and select a learning rate of $0.05$. To approximate full-batch gradient descent we use a very large batch size of $7500$ samples~(i.e. until saturation of 5 GPUs). To isolate the effect of noise injection, without mixing it with mini-batch noise, we also run PGD and Anti-PGD~(with Gaussian perturbations) in this high batch regime~($1/7$ of the dataset). For SGD, we instead select a batch size of $128$, and keep the learning rate at $0.05$. For convergence of the test accuracy and the Hessian trace, it is convenient to kill the noise injection after 250 epochs~--~so that the optimizer converges to the nearest minimum. Again, we see that (compared with PGD) Anti-PGD find a flatter minimum that generalizes better. Crucially, we show in Appendix~\S\ref{sec:cifar_app} that~--~as in the problems above~--~no tuning of $\sigma$~(noise injection level) can help PGD to reach the generalization performance of Anti-PGD. This is confirmed by the results in the next paragraph.

\vspace{-3mm}
\paragraph{Recovering from a sharp minimum.} In this experiment, we keep the parameter settings as in the last paragraph, but instead consider injecting noise only after 75 epochs~--~i.e., after convergence of full-batch gradient descent. As a result, Anti-PGD and PGD are trapped in the minimum found by GD, until noise injection starts. The behavior we observe~(Fig.~\ref{fig:cifar-full-inj-init}) after noise injection resembles our widening valley model in Fig.~\ref{fig:tube_pic}: noise injection makes the dynamics suddenly unstable, and we observe several points during training where the algorithm is~(probably) switching between minima. Note that this behavior is very different from the one observed in Fig.~\ref{fig:front_experiments}, where we start noise injection from the very beginning~--~and then kill the noise injection at the end. In contrast, we here do the reverse: we first initialize at a bad minimum and then inject noise to recover. We observe that Anti-PGD is able to recover from the bad initialization much better than PGD. Interestingly, while stopping noise injection at the end of training was needed for good accuracy in Fig.~\ref{fig:front_experiments}, here no such step is needed for Anti-PGD to get an accuracy close to SGD~(i.e., we directly recover from a bad minimum). We postulate that this difference comes from the different landscape properties when (a) close to initialization or (b) close to a local minimizer. For the last two experiments, we provide further plots~(e.g. test-train loss) in Appendix~\ref{sec:cifar_app}.
In summary, we found that Anti-PGD reliably finds flat minima that generalize well~--~as predicted by the theory in \S \ref{sec:finding_flat_minima} and \S \ref{sec:widevalley}.

\vspace{-3mm}
\paragraph{Anti-SGD.} In the appendix, we show that a combination of mini-batch noise and anti-correlated noise is able to further improve on the performance~(see Fig.~\ref{fig:anti-sgd}). 



\vspace{-3mm}

\section{Conclusion and Future Work}

Motivated by recent findings on the correlation of the flatness of minima with their generalization performance, we demonstrated that anticorrelated noise injection can improve the generalization of machine learning models~--~by biasing the optimization toward flatter minima.
To this end, we replaced the i.i.d.~perturbations in perturbed gradient descent with anticorrelated ones.
We proved that the resulting method \emph{Anti-PGD} regularizes the trace of the Hessian, a common measure of flatness.
In order to provide further theoretical justification, we introduced the \emph{widening valley} model and proved that Anti-PGD converges to the flat part of the valley~--~while GD and standard PGD remain in sharper regions.
In realistic experiments with real data (e.g.~CIFAR 10), we likewise observed that Anti-PGD converges to flat minima that generalize well (compared with GD and standard PGD).

\vspace{-1mm}
These discoveries lead us to hypothesize that anticorrelated noise can improve the generalization performance of a model~--~which opens up several directions to investigate.
First of all, the range from uncorrelated to anticorrelated perturbations should be explored further, and combined with different noise distributions. 
Since uncorrelated noise can help to quickly exit saddle points \citep{Jin2021pgd}, a compromise (or adaptive schedule) between uncorrelated and anticorrelated might be beneficial.

\vspace{-1mm}
Moreover, it seems worthwhile to explore the implications of our findings for standard SGD. Unlike common noise injection techniques, the noise of SGD is data-dependent and its magnitude is determined by the ratio of step size and batch size. One could, however, modify the selection of the batches to  (negatively) correlate the stochastic gradient noise of subsequent steps.
One could also add anticorrelated noise on top of the existing noise in SGD, or inject it only after the test loss of SGD (or another optimizer) plateaus. Both theoretical and empirical results are likely to provide novel insights about the importance of noise in optimization.

\paragraph{Acknowledgement.} We would like to thank Nacira
Agram and Bernt Øksendal for some helpful discussions in the early stage of this project, as well as the reviewers for their feedback that greatly helped us improve this manuscript.
Frank Proske acknowledges the financial support of the Center for International Cooperation in Education, project No CPEA-LT-2016/10139, Norway.
Hans Kersting and Francis Bach acknowledge support from the French government under the management of the Agence Nationale de la Recherche as part of the “Investissements d’avenir” program, reference ANR-19-P3IA-0001 (PRAIRIE 3IA Institute), as well as from the European Research Council (grant SEQUOIA 724063). Aurelien Lucchi acknowledges the financial support of the Swiss National Foundation, SNF grant No 207392.

\bibliography{paper}

\begin{thebibliography}{44}
\providecommand{\natexlab}[1]{#1}
\providecommand{\url}[1]{\texttt{#1}}
\expandafter\ifx\csname urlstyle\endcsname\relax
  \providecommand{\doi}[1]{doi: #1}\else
  \providecommand{\doi}{doi: \begingroup \urlstyle{rm}\Url}\fi

\bibitem[Bisla et~al.(2022)Bisla, Wang, and Choromanska]{bisla2022low}
Bisla, D., Wang, J., and Choromanska, A.
\newblock Low-pass filtering {SGD} for recovering flat optima in the deep
  learning optimization landscape.
\newblock \emph{arXiv preprint arXiv:2201.08025}, 2022.

\bibitem[Blanc et~al.(2020)Blanc, Gupta, Valiant, and
  Valiant]{blanc2020implicit}
Blanc, G., Gupta, N., Valiant, G., and Valiant, P.
\newblock Implicit regularization for deep neural networks driven by an
  {O}rnstein-{U}hlenbeck like process.
\newblock In \emph{Conference on learning theory}, pp.\  483--513, 2020.

\bibitem[Bradley \& Gomez-Uribe(2021)Bradley and
  Gomez-Uribe]{Bradley2021IncreasedRandomness}
Bradley, A.~V. and Gomez-Uribe, C.~A.
\newblock How can increased randomness in stochastic gradient descent improve
  generalization?
\newblock \emph{arXiv preprint arXiv:2108.09507}, 2021.

\bibitem[Chaudhari et~al.(2017)Chaudhari, Choromanska, Soatto, LeCun, Baldassi,
  Borgs, Chayes, Sagun, and Zecchina]{Chaudhari2017EntropySGD}
Chaudhari, P., Choromanska, A., Soatto, S., LeCun, Y., Baldassi, C., Borgs, C.,
  Chayes, J., Sagun, L., and Zecchina, R.
\newblock Entropy-{SGD}: Biasing gradient descent into wide valleys.
\newblock In \emph{International Conference on Learning Representations}, 2017.

\bibitem[Damian et~al.(2021)Damian, Ma, and Lee]{damian2021label}
Damian, A., Ma, T., and Lee, J.
\newblock Label noise {SGD} provably prefers flat global minimizers.
\newblock \emph{arXiv preprint arXiv:2106.06530}, 2021.

\bibitem[Deng et~al.(2021)Deng, Huang, and Kawaguchi]{Deng2021Shrinking}
Deng, Z., Huang, J., and Kawaguchi, K.
\newblock How shrinking gradient noise helps the performance of neural
  networks.
\newblock In \emph{2021 IEEE International Conference on Big Data (Big Data)},
  pp.\  1002--1007, 2021.

\bibitem[Dinh et~al.(2017)Dinh, Pascanu, Bengio, and
  Bengio]{Dinh2017SharpMinima}
Dinh, L., Pascanu, R., Bengio, S., and Bengio, Y.
\newblock Sharp minima can generalize for deep nets.
\newblock In \emph{International Conference on Machine Learning}, pp.\
  1019--1028, 2017.

\bibitem[Draxler et~al.(2018)Draxler, Veschgini, Salmhofer, and
  Hamprecht]{draxler2018essentially}
Draxler, F., Veschgini, K., Salmhofer, M., and Hamprecht, F.
\newblock Essentially no barriers in neural network energy landscape.
\newblock In \emph{International conference on machine learning}, pp.\
  1309--1318, 2018.

\bibitem[Dziugaite \& Roy(2017)Dziugaite and Roy]{Dziugaite2017PacBayes}
Dziugaite, G.~K. and Roy, D.~M.
\newblock Computing nonvacuous generalization bounds for deep (stochastic)
  neural networks with many more parameters than training data.
\newblock In \emph{Uncertainty in Artificial Intelligence}, 2017.

\bibitem[Dziugaite \& Roy(2018)Dziugaite and Roy]{dziugaite2018data}
Dziugaite, G.~K. and Roy, D.~M.
\newblock Data-dependent {PAC}-{B}ayes priors via differential privacy.
\newblock \emph{arXiv preprint arXiv:1802.09583}, 2018.

\bibitem[Gurbuzbalaban et~al.(2021)Gurbuzbalaban, Simsekli, and
  Zhu]{gurbuzbalaban2021heavy}
Gurbuzbalaban, M., Simsekli, U., and Zhu, L.
\newblock The heavy-tail phenomenon in {SGD}.
\newblock In \emph{International Conference on Machine Learning}, pp.\
  3964--3975, 2021.

\bibitem[HaoChen et~al.(2021)HaoChen, Wei, Lee, and Ma]{haochen2021shape}
HaoChen, J.~Z., Wei, C., Lee, J., and Ma, T.
\newblock Shape matters: Understanding the implicit bias of the noise
  covariance.
\newblock In \emph{Conference on Learning Theory}, pp.\  2315--2357, 2021.

\bibitem[He et~al.(2016)He, Zhang, Ren, and Sun]{he2016deep}
He, K., Zhang, X., Ren, S., and Sun, J.
\newblock Deep residual learning for image recognition.
\newblock In \emph{Proceedings of the IEEE conference on computer vision and
  pattern recognition}, pp.\  770--778, 2016.

\bibitem[Hochreiter \& Schmidhuber(1995)Hochreiter and
  Schmidhuber]{Hochreiter1995FlatMinima}
Hochreiter, S. and Schmidhuber, J.
\newblock Simplifying neural nets by discovering flat minima.
\newblock In \emph{Advances in Neural Information Processing Systems}, 1995.

\bibitem[Hochreiter \& Schmidhuber(1997)Hochreiter and
  Schmidhuber]{Hochreiter1997FlatMinima}
Hochreiter, S. and Schmidhuber, J.
\newblock Flat minima.
\newblock \emph{Neural Computation}, 9:\penalty0 11--42, 1997.

\bibitem[Hodgkinson \& Mahoney(2021)Hodgkinson and
  Mahoney]{hodgkinson2021multiplicative}
Hodgkinson, L. and Mahoney, M.
\newblock Multiplicative noise and heavy tails in stochastic optimization.
\newblock In \emph{International Conference on Machine Learning}, pp.\
  4262--4274, 2021.

\bibitem[Jiang et~al.(2019)Jiang, Neyshabur, Mobahi, Krishnan, and
  Bengio]{jiang2019fantastic}
Jiang, Y., Neyshabur, B., Mobahi, H., Krishnan, D., and Bengio, S.
\newblock Fantastic generalization measures and where to find them.
\newblock In \emph{International Conference on Learning Representations}, 2019.

\bibitem[Jin et~al.(2021)Jin, Netrapalli, Ge, Kakade, and Jordan]{Jin2021pgd}
Jin, C., Netrapalli, P., Ge, R., Kakade, S.~M., and Jordan, M.~I.
\newblock On nonconvex optimization for machine learning: Gradients,
  stochasticity, and saddle points.
\newblock \emph{J. ACM}, 68\penalty0 (2), 2021.

\bibitem[Keskar et~al.(2016)Keskar, Mudigere, Nocedal, Smelyanskiy, and
  Tang]{keskar2016large}
Keskar, N.~S., Mudigere, D., Nocedal, J., Smelyanskiy, M., and Tang, P. T.~P.
\newblock On large-batch training for deep learning: Generalization gap and
  sharp minima.
\newblock \emph{arXiv preprint arXiv:1609.04836}, 2016.

\bibitem[Kleinberg et~al.(2018)Kleinberg, Li, and
  Yuan]{kleinberg2018alternativesgd}
Kleinberg, B., Li, Y., and Yuan, Y.
\newblock An alternative view: When does {SGD} escape local minima?
\newblock In \emph{International Conference on Machine Learning}, 2018.

\bibitem[Langford \& Caruana(2002)Langford and
  Caruana]{LangfordCaruna2002PacBayes}
Langford, J. and Caruana, R.
\newblock {(Not)} bounding the true error.
\newblock In \emph{Advances in Neural Information Processing Systems}, 2002.

\bibitem[Li et~al.(2022)Li, Wang, and Arora]{li2022what}
Li, Z., Wang, T., and Arora, S.
\newblock What happens after {SGD} reaches zero loss? -- {A} mathematical
  framework.
\newblock In \emph{International Conference on Learning Representations}, 2022.

\bibitem[Lin et~al.(2018)Lin, Mairal, and Harchaoui]{lin2018catalyst}
Lin, H., Mairal, J., and Harchaoui, Z.
\newblock Catalyst acceleration for first-order convex optimization: from
  theory to practice.
\newblock \emph{Journal of Machine Learning Research}, 18\penalty0
  (1):\penalty0 7854--7907, 2018.

\bibitem[Neelakantan et~al.(2015)Neelakantan, Vilnis, Le, Sutskever, Kaiser,
  Kurach, and Martens]{Neelakantan2015AddingGradientNoise}
Neelakantan, A., Vilnis, L., Le, Q.~V., Sutskever, I., Kaiser, L., Kurach, K.,
  and Martens, J.
\newblock Adding gradient noise improves learning for very deep networks.
\newblock \emph{arXiv preprint arXiv:1511.06807}, 2015.

\bibitem[Nesterov(2018)]{nesterov2018book}
Nesterov, Y.
\newblock \emph{Lectures on Convex Optimization}.
\newblock Springer, 2nd edition, 2018.

\bibitem[Nesterov \& Spokoiny(2017)Nesterov and Spokoiny]{nesterov2017random}
Nesterov, Y. and Spokoiny, V.
\newblock Random gradient-free minimization of convex functions.
\newblock \emph{Foundations of Computational Mathematics}, 17\penalty0
  (2):\penalty0 527--566, 2017.

\bibitem[Neyshabur et~al.(2017)Neyshabur, Bhojanapalli, McAllester, and
  Srebro]{Neyshabur2017ExploringGeneralization}
Neyshabur, B., Bhojanapalli, S., McAllester, D., and Srebro, N.
\newblock Exploring generalization in deep learning.
\newblock In \emph{Advances in Neural Information Processing Systems}, 2017.

\bibitem[Neyshabur et~al.(2018)Neyshabur, Bhojanapalli, and
  Srebro]{Neyshabur2018PacBayes}
Neyshabur, B., Bhojanapalli, S., and Srebro, N.
\newblock A {PAC}-{B}ayesian approach to spectrally-normalized margin bounds
  for neural networks.
\newblock In \emph{International Conference on Learning Representations}, 2018.

\bibitem[Nguyen et~al.(2019)Nguyen, Simsekli, Gurbuzbalaban, and
  Richard]{nguyen2019first}
Nguyen, T.~H., Simsekli, U., Gurbuzbalaban, M., and Richard, G.
\newblock First exit time analysis of stochastic gradient descent under
  heavy-tailed gradient noise.
\newblock In \emph{Advances in Neural Information Processing Systems}, 2019.

\bibitem[Panigrahi et~al.(2019)Panigrahi, Somani, Goyal, and
  Netrapalli]{panigrahi2019non}
Panigrahi, A., Somani, R., Goyal, N., and Netrapalli, P.
\newblock Non-{G}aussianity of stochastic gradient noise.
\newblock \emph{arXiv preprint arXiv:1910.09626}, 2019.

\bibitem[Petzka et~al.(2021)Petzka, Kamp, Adilova, Sminchisescu, and
  Boley]{petzka2021relative}
Petzka, H., Kamp, M., Adilova, L., Sminchisescu, C., and Boley, M.
\newblock Relative flatness and generalization.
\newblock In \emph{Advances in Neural Information Processing Systems}, 2021.

\bibitem[Simsekli et~al.(2019)Simsekli, Sagun, and
  Gurbuzbalaban]{simsekli2019tailindex}
Simsekli, U., Sagun, L., and Gurbuzbalaban, M.
\newblock A tail-index analysis of stochastic gradient noise in deep neural
  networks.
\newblock In \emph{International Conference on Machine Learning}, 2019.

\bibitem[Smith et~al.(2020)Smith, Elsen, and De]{smith2020generalization}
Smith, S., Elsen, E., and De, S.
\newblock On the generalization benefit of noise in stochastic gradient
  descent.
\newblock In \emph{International Conference on Machine Learning}, pp.\
  9058--9067, 2020.

\bibitem[Smith et~al.(2021)Smith, Dherin, Barrett, and De]{smith2021origin}
Smith, S.~L., Dherin, B., Barrett, D.~G., and De, S.
\newblock On the origin of implicit regularization in stochastic gradient
  descent.
\newblock \emph{arXiv preprint arXiv:2101.12176}, 2021.

\bibitem[Stich \& Harshvardhan(2021)Stich and Harshvardhan]{stichescaping}
Stich, S. and Harshvardhan, H.
\newblock Escaping local minima with stochastic noise.
\newblock \emph{Neurips Workshop on Optimization for Machine Learning}, 2021.

\bibitem[Tsuzuku et~al.(2020)Tsuzuku, Sato, and
  Sugiyama]{tsuzuku2020normalized}
Tsuzuku, Y., Sato, I., and Sugiyama, M.
\newblock Normalized flat minima: Exploring scale invariant definition of flat
  minima for neural networks using {PAC}-{B}ayesian analysis.
\newblock In \emph{International Conference on Machine Learning}, pp.\
  9636--9647, 2020.

\bibitem[Wang et~al.(2018)Wang, Keskar, Xiong, and Socher]{wang2018identifying}
Wang, H., Keskar, N.~S., Xiong, C., and Socher, R.
\newblock Identifying generalization properties in neural networks.
\newblock \emph{arXiv preprint arXiv:1809.07402}, 2018.

\bibitem[Wang et~al.(2022)Wang, Oh, and Rhee]{wang2022eliminating}
Wang, X., Oh, S., and Rhee, C.-H.
\newblock Eliminating sharp minima from {SGD} with truncated heavy-tailed
  noise.
\newblock In \emph{International Conference on Learning Representations}, 2022.

\bibitem[Wei \& Schwab(2019)Wei and Schwab]{wei2019noise}
Wei, M. and Schwab, D.~J.
\newblock How noise affects the {Hessian} spectrum in overparameterized neural
  networks.
\newblock \emph{arXiv preprint arXiv:1910.00195}, 2019.

\bibitem[Xie et~al.(2020)Xie, Sato, and Sugiyama]{xie2020diffusion}
Xie, Z., Sato, I., and Sugiyama, M.
\newblock A diffusion theory for deep learning dynamics: Stochastic gradient
  descent exponentially favors flat minima.
\newblock \emph{arXiv preprint arXiv:2002.03495}, 2020.

\bibitem[Yang et~al.(2019)Yang, Sun, and Roy]{yang2019fast}
Yang, J., Sun, S., and Roy, D.~M.
\newblock Fast-rate {PAC}-{B}ayes generalization bounds via shifted
  {R}ademacher processes.
\newblock \emph{Advances in Neural Information Processing Systems}, 2019.

\bibitem[Yao et~al.(2020)Yao, Gholami, Keutzer, and Mahoney]{yao2020pyhessian}
Yao, Z., Gholami, A., Keutzer, K., and Mahoney, M.~W.
\newblock Pyhessian: Neural networks through the lens of the {Hessian}.
\newblock In \emph{2020 IEEE International Conference on Big Data (Big Data)},
  pp.\  581--590, 2020.

\bibitem[Zhang et~al.(2019)Zhang, Li, Nado, Martens, Sachdeva, Dahl, Shallue,
  and Grosse]{zhang2019algorithmic}
Zhang, G., Li, L., Nado, Z., Martens, J., Sachdeva, S., Dahl, G., Shallue, C.,
  and Grosse, R.~B.
\newblock Which algorithmic choices matter at which batch sizes? {I}nsights
  from a noisy quadratic model.
\newblock \emph{Advances in neural information processing systems}, pp.\
  8196--8207, 2019.

\bibitem[Zhou et~al.(2019)Zhou, Liu, Li, Lin, Zhou, and Zhao]{zhou2019pgd}
Zhou, M., Liu, T., Li, Y., Lin, D., Zhou, E., and Zhao, T.
\newblock Toward understanding the importance of noise in training neural
  networks.
\newblock In \emph{International Conference on Machine Learning}, pp.\
  7594--7602, 2019.

\end{thebibliography}
\bibliographystyle{icml2022}

\newpage

\appendix
\onecolumn


\section{Computation of Conditional Variance}\label{app:cond_var}

We know from Eq.~\eqref{eq:implicit_bias} that the \emph{conditional mean} of an Anti-PGD step is
\begin{equation}
    \mathbb{E}\left[ z_{n+1} \vert z_n  \right]  =
    z_n - \eta \nabla \tilde{L}(z_n) + O\left(\eta \mathbb{E}[\|\xi_n\|^3] \right),  
\end{equation}
where the \emph{modified loss} $\tilde{L}$ is given by
\begin{equation}
    \tilde{L}(z) := L(z) + \frac{\sigma^2}{2} \tr(\nabla^2 L(z)).
\end{equation}
This stands in contrast to a standard PGD step (Eq.~\eqref{eq:def_PGD}) whose conditional mean coincides with gradient descent (i.e.~includes no implicit bias):
\begin{equation}
    \mathbb{E}\left[ w_{n+1} \vert w_n  \right]  =
    w_n - \eta \nabla L(w_n).
\end{equation}

In this section, we additionally compute the conditional variance of Anti-PGD and PGD.
We start with PGD.
There, we obtain by a second-order Taylor expansion of $\partial_i L(\cdot)$ around $z_n$ in Eq.~\eqref{eq:def_PGD} that
\begin{equation}
    w_{n+1}^i  =
    w_n^i - \eta \partial_i L(w_n) - \eta \sum_{j=1} \partial_{ij}^2 L(w_n) - \frac{\eta}{2} \partial_i \sum_{j,k} \partial_{j,k}^2 L(w_n) + \xi^i_{n+1}.
\end{equation}
Hence, the conditional variance for PGD is simply
\begin{equation}
    \operatorname{var}\left[w^{(i)}_{n+1} \vert w_n\right] = \operatorname{var}\left[ \xi^i_{n+1} \right] = \sigma^2.
    \label{eq:conditional_variance_vanilla}
\end{equation}
For Anti-PGD on the other hand, we compute 
\begin{align*}
    \operatorname{var}\left[z^{(i)}_{n+1} \vert z_n\right]  &\overset{Eq.~\eqref{eq:z_n_dyn_by_dim}}{=} 
    \mathbb{E}\Big[ \Big[ \eta \partial_i L(z_{n}) + \eta \sum_{j}\partial^2_{ij} L(z_{n})\xi_{n}^j  + \frac{\eta}{2}\sum_{j,k} \partial^3_{ijk} L(z_{n})\xi^{j}_n\xi^{k}_n + O(\eta\|\xi_n\|^3)\Big]^2 \vert z_n \Big]  \\ &= 
    \eta^2 [\partial_i L(z_n)]^2 + \eta^2 \sigma^2 \sum_{j=1}^d [\partial_{ij}^2 L(z_n)]^2 + \frac{\eta^2 \sigma^4}{4} \sum_{j \neq k} \partial^3_{ijk} L(z_n) + \frac{\eta^2}{4} \sum_{j=1}^d \partial_{ijj} L(z_n) \mathbb{E} [(\xi_n^j)^4] \\ 
    &\phantom{=}\ + \frac{\eta^2 \sigma^4}{4} \sum_{j \neq k} [\partial_{ijj} L(z_n)] \cdot [\partial_{ikk} L(z_n)] + \eta^2 O(\mathbb{E}[\|\xi_n\|^6]) \\ &\phantom{=}\,
    +2 \Bigg[ \Big(0 + 0 + \frac{\eta^2 \sigma^2}{2} \sum_{j=1}^d \partial_{ijj} L(z_n) + \eta^2 \partial_i L(z_n) O(\mathbb{E}[\|\xi_n\|^3])  \Big) + \Big( 0 + 0 + 0 + 0 \Big) + \Big( 0 + 0 \Big) \\ &\phantom{=\ 2 \Bigg[} \,  + \frac{\eta^2 \sigma^2}{2} \left[ \sum_{j=1}^d \partial_{ijj} L(z_n) \right] O(\mathbb{E}[\|\xi_n\|^3]) \Bigg]
\end{align*}
By rearranging the summands, we obtain
\begin{align}
    &\operatorname{var}\left[z^{(i)}_{n+1} \vert z_n\right]
    = \notag \\
    &\quad \eta^2 [\partial_i L(z_n)]^2 
    + \eta^2 \sigma^2 \sum_{j=1}^d [\partial_{ij}^2 L(z_n)]^2 
    + \frac{\eta^2 \sigma^4}{4} \sum_{j \neq k} \left( \partial^3_{ijk} L(z_n) + [\partial_{ijj} L(z_n)] \cdot [\partial_{ikk} L(z_n)] \right)
    + \eta^2 \sigma^2 \sum_{j=1}^d \partial_{ijj} L(z_n) \notag \\ &\quad
    + \left[\frac{1}{2}\partial_i L(z_n) + \frac{\sigma^2}{2} \sum_{j=1}^d \partial_{ijj} L(z_n)\right]O(\mathbb{E}[\eta^2 \|\xi_n\|^3])
    + \left[\frac{1}{4} \sum_{j=1}^d \partial_{ijj} L(z_n) \right]O(\mathbb{E}[\eta^2 \|\xi_n\|^4])
    + O(\eta^2 \mathbb{E}[\|\xi_n\|^6]).
    \label{eq:conditional_variance_anti}
\end{align}
The above expression is the asymptotic (second order) expansion of the conditional variance of Anti-PGD, as $\sigma \to 0$.
It consists of two parts: Its first line contains summands which come from the first two moments of the distribution of $\xi$. Its second line contains the summands from the third, fourth, and sixth moment of the distribution of $\xi$. 
The precise size of the conditional variance will therefore depend on the first six moments of the noise distribution.
Without assuming more on the noise distribution, it is thus difficult to make further statements about the conditional variance.
By eyeballing Eq.~\eqref{eq:conditional_variance_anti}, it however seems likely that this variance is larger than the one from PGD; cf.~\eqref{eq:conditional_variance_vanilla}.

\section{Proof of Theorem~\ref{thm:implicit_reg}} \label{app:thm_implicit_reg}

\generalproof*

\begin{proof} 
First, we observe that Eq.~\eqref{eq:z_n_dyn_by_dim} implies~--~in the considered case of Bernoulli perturbations $\xi_n$ with $(\xi_i^n)^2 = \sigma^2$ a.s.~--~that
\begin{equation}
    z^{i}_{n+1} - z^{i}_{n} = - \underbrace{\partial_i \eta L(z_{n}) - \partial_i  \eta\frac{\sigma^2}{2}\sum_{j}\partial^2_{jj} L(z_{n})}_{= \eta \partial_i \tilde{L}(z_n)\ \text{(Regularized Gradient)}} -\underbrace{\eta\sum_{j}\partial^2_{ij} L(z_{n})\xi_{n}^j - \frac{\eta}{2} \partial_i \sum_{j\ne k}\partial^2_{jk} L(z_{n})\xi_{n}^j\xi_{n}^k}_{\text{Mean-zero Perturbation}} + \underbrace{O(\eta\sigma^3)}_{\text{Expansion error (small)}}. \label{eq:writing_out_z_n_for_Ber}
\end{equation}
Since we assumed that $L$ has $\beta$-Lipschitz continuous third-order partial derivatives, Theorem~2.1.5 from \citet{nesterov2018book} implies that $\tilde L$ is $\beta$-smooth. For some tensor $B$ which captures all summands from the mean-zero perturbation of Eq.~\eqref{eq:writing_out_z_n_for_Ber} we have
\begin{align*}
    \tilde L(z_{n+1}) &\le \tilde L(z_{n}) +\langle\nabla \tilde L(z_{n}), z_{n+1} - z_{n}\rangle + \frac{\beta}{2}\|z_{n+1} - z_{n}\|^2\\
    &\overset{\eqref{eq:writing_out_z_n_for_Ber}}{=} \tilde L(z_{n}) -\eta\langle\nabla \tilde L(z_{n}),  \nabla \tilde L(z_n) + B(z_n)\odot \xi_n + O(\sigma^3)\rangle + \frac{\beta\eta^2}{2}\|\nabla \tilde L(z_n) + B(z_n) \odot \xi_n + O(\sigma^3)\|^2\\
    &= \tilde L(z_{n}) -\eta\|\nabla \tilde L(z_{n})\|^2 + \frac{\beta\eta^2}{2}\|\nabla \tilde L(z_n)\|^2\\
    &\quad -\eta\langle\nabla \tilde L(z_{n}), B(z_n)\odot \xi_n\rangle + \eta\langle\nabla \tilde L(z_{n}), O(\sigma^3)\rangle\\
    &  \quad + \frac{\beta\eta^2}{2}\left[\|B(z_n) \odot \xi_n\|^2 +O(\sigma^6)+ 2 \langle \nabla \tilde L(z_n) + O(\sigma^3), B(z_n) \odot \xi_n\rangle + 2 \langle \nabla \tilde L(z_n), O(\sigma^3)\rangle\right]
\end{align*}

Taking the expectation, most of the terms cancel out and we get
\begin{align}
   \Exp[\tilde L(z_{n+1})] \le
& \Exp\left[\tilde L(z_{n}) -\eta\|\nabla \tilde L(z_{n})\|^2 + \frac{\beta\eta^2}{2}\|\nabla \tilde L(z_n)\|^2\right]\\
&\quad + O(\eta\sigma^3)\\
&  \quad + \frac{\beta\eta^2}{2}\left[O(\sigma^2) +O(\sigma^6)+ O(\sigma^3)\right]\\   
&=\Exp[\tilde L(z_{n})] -\left(\eta-\frac{\beta\eta^2}{2}\right)\Exp[\|\nabla \tilde L(z_{n})\|^2] + O(\eta^2\sigma^2) +  O(\eta\sigma^3).
\end{align}
Note that, under the assumption of positive $\eta$,  $\left(\eta-\frac{\beta\eta^2}{2}\right)>0$ implies $0<\eta\le \frac{2}{\beta}$. If we further require  $\left(\eta-\frac{\beta\eta^2}{2}\right)\ge\frac{\eta}{2}$, we get the condition $0<\eta\le \frac{1}{\beta}$. 

Under this condition, the following upper bound holds\footnote{Outlined in blue color below is the updated version of the proof of the theorem.}:

\begin{equation}
   \Exp[\tilde L(z_{n+1})] \le \Exp[\tilde L(z_{n})] -\frac{\eta}{2}\Exp[\|\nabla \tilde L(z_{n})\|^2] + O(\eta^2\sigma^2) +  O(\eta\sigma^3).
\end{equation}

Rearranging the terms we get

\begin{equation}
    \frac{\eta}{2}\Exp[\|\nabla \tilde L(z_{n})\|^2] \le \Exp[\tilde L(z_{n})] - \Exp[\tilde L(z_{n+1})] + O(\eta^2\sigma^2) +  O(\eta\sigma^3).
\end{equation}
After a division by $\eta/2>0$ we arrive at

\begin{equation}
    \Exp[\|\nabla \tilde L(z_{n})\|^2] \le \frac{2}{\eta} \left(\Exp[\tilde L(z_{n})] - \Exp[\tilde L(z_{n+1})]\right) + O(\eta\sigma^2) +  O(\sigma^3).
\end{equation}

Summing all terms, we arrive at the following:

\begin{equation}
    \Exp\left[\sum_{n=0}^{N-1}\|\nabla \tilde L(z_{n})\|^2\right] \le \frac{2}{\eta} \left(\Exp[\tilde L(z_{0})] - \Exp[\tilde L(z_{N})]\right) + O(N\eta\sigma^2) +  O(N\sigma^3).
\end{equation}

After dividing by $N$ we get

\begin{equation}
    \Exp\left[\frac{1}{N}\sum_{n=0}^{N-1}\|\nabla \tilde L(z_{n})\|^2\right] \le \frac{2}{\eta N} \left(\Exp[\tilde L(z_{0})] - \Exp[\tilde L(z_{N})]\right) + O(\eta\sigma^2) +  O(\sigma^3).
\end{equation}

Note that $\tilde L(z_{N})\ge\tilde L^* = \min_z \tilde L(z) > -\infty$. This is because $L$ is lower bounded and also $tr(\nabla^2 L)$ is lower bounded since we assumed $L$ has Lipschitz gradients. Therefore, $\Exp[\tilde L(z_{0})] - \Exp[\tilde L(z_{N})] = O(1)$ and we conclude that

\begin{equation}
    \Exp\left[\frac{1}{N}\sum_{n=0}^{N-1}\|\nabla \tilde L(z_{n})\|^2\right] \le O(\eta^{-1} N^{-1}) + O(\eta\sigma^2) +  O(\sigma^3).
\end{equation}


If we set $\eta = O(\epsilon/\sigma^2)$, then $O(\eta\sigma^2) = O(\epsilon)$. Further, under the choice $\eta = \Theta(\epsilon/\sigma^2)$, we also have $O(\eta^{-1} N^{-1}) = O(\sigma^2 \epsilon^{-1}N^{-1})$. By setting $N = \Omega(\sigma^2\epsilon^{-2})$, we ensure that $O(\eta^{-1} N^{-1}) = O(\epsilon)$. 

All in all, we get $\Exp\left[\frac{1}{N}\sum_{n=0}^{N-1}\|\nabla \tilde L(z_{n})\|^2\right] \le O(\epsilon) + O(\sigma^3)$.

\end{proof}

\section{Proof of Theorem~\ref{thm:main_tube}}  \label{app:wv_proof}

To proof Theorem~\ref{thm:main_tube}, we first need some preliminary preparation; in doing so, we will also provide some intuition for the reader.
The main proof follows afterwards, in \S\ref{app:proof_of_main_result}.

Consider the problem of minimizing the cost function
\begin{equation}
    L(u,v) = \frac{1}{2} v^2 \|u\|^2,
\end{equation}
where $\|\cdot\|$ is the Euclidean norm, $v\in\mathbb{R}$, and $u\in\mathbb{R}^d$. 
To minimize the loss, we use perturbed gradient descent~(PGD), with noise injection. Note that any point where $v=0$ or $\|u\|^2=0$ minimizes the loss. By the considerations in the section above, we want to find a solution $(u,v)$ where $\|u\|$ is small~(i.e. a solution with low curvature). We show that, while standard noise injection does not necessarily induce this bias on the dynamics, injection of anticorrelated noise does. This show that anticorrelated noise effectively minimizes the trace of the Hessian:
\begin{equation}
    \tr(\nabla^2 L(u,v)) = d v^2 + \|u\|^2.
\end{equation}

\paragraph{Preliminary considerations.} Let us start by writing down the update in discrete-time. Recall that the gradient is $(v^2 u, \|u\|^2 v)$, hence:
\begin{align}
    &u_{k+1} = (1 - \eta v_k^2) \cdot u_k +  \varepsilon_{k}^u
 \label{eq:eq_tube_w}\\
    &v_{k+1} = (1 - \eta \|u_k\|^2) \cdot v_k +  \varepsilon_{k}^v
    \label{eq:eq_tube_v}
\end{align}
where $ \varepsilon_{k}^u\in\R^{d}$ and $ \varepsilon_{k}^v\in\R$ are the noise variables.

\begin{enumerate}
    \item For stability~(in the noiseless setting), we need $\eta\le\frac{2}{\max\{v_k^2,\|u_k\|^2\}}$.
    \item Starting from a big $\|u\|$ and any $v$, under noiseless GD, since $d\gg 1$, we converge to $(u_0,0)$, with $\|u_0\|:=D\gg 1$ 
\end{enumerate}

\noindent The key to the proof of effectiveness of anticorrelated noise, compared to uncorrelated noise, relies on the following observation:
\ \\

\noindent\textbf{Empirical Observation:} for the widening valley $L(u,v) = \frac{1}{2}v^2 \|u\|^2$, if we only perturb the $v$ coordinate with \textit{any noise} then we get to a wide minimum.
\ \\

\noindent\textbf{Why? Intuition behind the proof.} Well, of course this is the case! If one knows is advance which direction to move in order to pick up a signal, then \textit{les jeux sont faits}. The problem is that in order to perturb this direction --- we need to perturb \textit{all directions}, and this leads to ``getting lost'' if the noise is not controlled (i.e. does not have an attraction force to the origin). This also motivates why the effect gets more intense as the dimension $d$ increases: there is a lot of bias added, which drives us away from good minima.
\ \\

\noindent\textbf{Plan:} To show the result, we follow the following procedure:
\begin{enumerate}
    \item Starting from $(u_0,0)$, we start injecting noise and want to reach $(\tilde u, \tilde v)$ such that $\|\tilde u\|^2 = \alpha D$ and $0<\alpha<1$. We want to show here a difference in behavior under different noise correlation.
    \item We proceed by contradiction: starting from $(\tilde u, \tilde v)$, we assume that $\|u_k\|^2 \ge \alpha D$ for all $k\ge 0$~($\alpha\in(0,1)$). Under injection of anticorrelated noise, we show that \textit{this leads to a contradiction} --- i.e. that the dynamics substantially decreases the trace of the Hessian: $\|u_\infty\|^2 < \alpha D$ (worst-case upper bound). Crucially, we also show that the hypothesis does not lead to any contradiction under standard noise injection --- i.e. without anticorrelation we do not significantly decrease the trace of the Hessian. More specifically, we show that $\lim_{n \to \infty} \Exp\left[ \| u_n \|^2 \right] \geq D/\alpha$ under uncorrelated noise injection (worst-case lower bound).
    \item To simplify the computations, assume that coordinate-wise the noise is a result of a Bernoulli$(1/2)$ perturbation $(\xi_k)_i \in \{-\sigma,\sigma\}$. The injected noise is then either $\varepsilon_k =\xi_k$ or $\varepsilon_k =\xi_k-\xi_{k-1}$, for PGD and Anti-PGD respectively.
\end{enumerate}

\subsection{Some Useful Lemmata}

This section is pretty technical, hence the reader can skip the proof on a first read. \textit{The meaning behind the bounds we derive and a numerical verification can be found in Figure~\ref{fig:tube_lemma_verification}}.

\noindent We start by recalling the variation of constants formula, which we will heavily use along the proof. We also extend this to the anticorrelated setting.

\begin{lemma}[Variations of constants formula]
\label{lemma:variations}
Let $w\in\mathbb{R}^d$ evolve with time-varying linear dynamics $w_{k+1} = A_k w_k + \varepsilon_k$, where $A_k\in\mathbb{R}^{d\times d}$ and $\varepsilon_k\in \mathbb{R}^{d}$ for all $k$. Then, with the convention that $\prod_{j=k+1}^{k} A_j = 1$, 
\begin{equation}
    w_{k+1} = \left(\prod_{j=0}^{k}A_j\right)w_0 +\sum_{i=0}^{k}\left(\prod_{j=i+1}^{k} A_j\right)\varepsilon_i.
\end{equation}
\end{lemma}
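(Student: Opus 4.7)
The plan is to prove this by straightforward induction on $k$, with care taken for the ordering convention of the non-commutative products $\prod_{j=a}^{b} A_j$. For consistency with the iteration $w_{k+1} = A_k w_k + \varepsilon_k$, the product must be read as $A_b A_{b-1} \cdots A_a$, i.e.~with the larger index applied last on the left; the stated convention $\prod_{j=k+1}^{k} A_j = 1$ then simply means that the empty product is the identity matrix.

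For the base case $k = 0$, the claimed formula reads $w_1 = A_0 w_0 + \varepsilon_0$ (the homogeneous factor is $\prod_{j=0}^{0} A_j = A_0$ and the sum has a single term whose matrix coefficient is the empty product $\prod_{j=1}^{0} A_j = 1$). This is exactly the defining recursion at the first step.

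For the inductive step I would assume the formula holds for some $k \ge 0$ and compute $w_{k+2} = A_{k+1} w_{k+1} + \varepsilon_{k+1}$ by substituting the inductive hypothesis for $w_{k+1}$. On the homogeneous part, $A_{k+1}$ is absorbed into the product from the left to produce $\left(\prod_{j=0}^{k+1} A_j\right) w_0$. On the sum, each factor $\prod_{j=i+1}^{k} A_j$ gets premultiplied by $A_{k+1}$ and extends to $\prod_{j=i+1}^{k+1} A_j$; finally, the newly added $\varepsilon_{k+1}$ is precisely the $i = k+1$ term of the extended sum, whose matrix coefficient is the empty product $\prod_{j=k+2}^{k+1} A_j = 1$. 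Collecting the pieces gives the claim at index $k+1$.

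The only obstacle I anticipate is bookkeeping, not substance: because matrix multiplication does not commute, one must consistently insert $A_{k+1}$ on the left when unrolling and handle the empty-product edge case at $i = k+1$ correctly. Beyond that, the argument is a routine discrete Duhamel / variation-of-constants computation and presents no conceptual difficulty; it is the cleanest possible warm-up lemma for the more delicate spectral estimates that will follow in the proof of Theorem~\ref{thm:main_tube}.
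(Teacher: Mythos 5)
Your proof is correct and follows essentially the same route as the paper's: a direct induction on $k$, absorbing the new matrix factor on the left and using the empty-product convention to fold the latest perturbation into the sum. Your explicit remark about the ordering of the non-commutative products is a small but welcome clarification that the paper leaves implicit.
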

\begin{proof}
For $k=1$ we get $w_1 = A_0 w_0 + \varepsilon_0$. The induction step yields
\begin{align}
    w_{k+1} &= A_k\left(\left( \prod_{j=0}^{k-1}A_j\right)w_0 +\sum_{i=0}^{k-1}\left(\prod_{j=i+1}^{k-1} A_j\right)\varepsilon_i\right) + \varepsilon_k.\\
    &= \left( \prod_{j=0}^{k}A_j\right)w_0 +\sum_{i=0}^{k-1}A_k\left(\prod_{j=i+1}^{k-1} A_j\right)\varepsilon_i + \varepsilon_k.\\
    &= \left( \prod_{j=0}^{k}A_j\right)w_0 +\sum_{i=0}^{k-1}\left(\prod_{j=i+1}^{k} A_j\right)\varepsilon_i + \left(\prod_{j=k+1}^{k} A_j\right)\varepsilon_k.\\
    &= \left( \prod_{j=0}^{k}A_j\right)w_0 +\sum_{i=0}^{k}\left(\prod_{j=i+1}^{k} A_j\right)\varepsilon_i.
\end{align}
This completes the proof of the variations of constants formula.
\end{proof}

\noindent We extend this formula to the anticorrelated case, where $\varepsilon_k$ has some additional structure.

\begin{corollary}[Anticorrelated variations of constants formula]
\label{cor:anti_variations}
Under the same setting of Lemma~\ref{lemma:variations}, if there exist a family of vectors $\{\xi_k\}$ such that $\varepsilon_0 = \xi_0$ and $\varepsilon_k = \xi_k-\xi_{k-1}$, then
\begin{equation}
    w_{k+1} = \left(\prod_{j=0}^{k}A_j\right)w_0 +\xi_k  +\sum_{i=0}^{k-1}(A_{i+1}-I)\left(\prod_{j=i+2}^{k}  A_j\right)\xi_{i}.
\end{equation}
\end{corollary}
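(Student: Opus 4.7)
The plan is to derive the corollary as a direct algebraic consequence of Lemma~\ref{lemma:variations}, by substituting the anticorrelated structure $\varepsilon_i = \xi_i - \xi_{i-1}$ (with the convention $\xi_{-1} := 0$, so that $\varepsilon_0 = \xi_0$ is included in the same formula) and then performing a telescoping index shift. The main obstacle is purely bookkeeping: one has to track two edge effects correctly, namely the boundary term at $i=0$ (where $\xi_{-1}$ would appear) and the top term at $i=k$ (which is supposed to produce the isolated $\xi_k$ with coefficient identity).

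First, I would apply Lemma~\ref{lemma:variations} verbatim to get
\begin{equation*}
w_{k+1} = \Bigl(\prod_{j=0}^{k} A_j\Bigr) w_0 + \sum_{i=0}^{k}\Bigl(\prod_{j=i+1}^{k} A_j\Bigr)\varepsilon_i,
\end{equation*}
and plug in $\varepsilon_i = \xi_i - \xi_{i-1}$, splitting the sum into a ``$\xi_i$ part'' and a ``$\xi_{i-1}$ part''. In the second part, I would perform the reindexing $\ell = i-1$, which transforms
\begin{equation*}
\sum_{i=1}^{k}\Bigl(\prod_{j=i+1}^{k} A_j\Bigr)\xi_{i-1} = \sum_{\ell=0}^{k-1}\Bigl(\prod_{j=\ell+2}^{k} A_j\Bigr)\xi_\ell.
\end{equation*}

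Next, I would peel off the $i=k$ contribution from the ``$\xi_i$ part'', which uses the empty-product convention $\prod_{j=k+1}^{k}A_j = 1$ and produces exactly the stand-alone $\xi_k$ appearing in the claim. For the remaining indices $i = 0, \dots, k-1$, both sums have the same range, so I would combine them coefficient by coefficient on $\xi_i$:
\begin{equation*}
\Bigl(\prod_{j=i+1}^{k} A_j\Bigr) - \Bigl(\prod_{j=i+2}^{k} A_j\Bigr) = (A_{i+1} - I)\Bigl(\prod_{j=i+2}^{k} A_j\Bigr),
\end{equation*}
where I factor $A_{i+1}$ out from the left of the first product. Substituting this identity back yields exactly the claimed formula.

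Finally, I would run a brief sanity check at small $k$ (e.g.\ $k=0$ gives $w_1 = A_0 w_0 + \xi_0$, consistent with $\varepsilon_0 = \xi_0$; and $k=1$ gives $w_2 = A_1 A_0 w_0 + \xi_1 + (A_1 - I)\xi_0$, which matches direct substitution into Lemma~\ref{lemma:variations}) to confirm that both edge cases and the empty-product convention line up. No analytic machinery beyond the already-proved Lemma~\ref{lemma:variations} is required; the entire argument is an index manipulation.
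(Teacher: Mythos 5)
Your proposal is correct and follows essentially the same route as the paper's proof: substitute the increments into Lemma~\ref{lemma:variations}, reindex the $\xi_{i-1}$ sum, peel off the $i=k$ term via the empty-product convention, and combine the two remaining sums coefficient-wise (the paper simply leaves the coefficient as the difference $\prod_{j=i+1}^{k}A_j-\prod_{j=i+2}^{k}A_j$ rather than writing the factored form, which is harmless since the result is only applied with $A_j=\rho_j I$).
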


\begin{proof}
We have, by direct computation:
\begin{align}
    w_{k+1} &= \left(\prod_{j=0}^{k}A_j\right)w_0 + \left(\prod_{j=1}^{k} A_j\right)\xi_0 +\sum_{i=1}^{k}\left(\prod_{j=i+1}^{k} A_j\right)\xi_i - \sum_{i=1}^{k}\left(\prod_{j=i+1}^{k}  A_j\right)\xi_{i-1}\\
    &= \left(\prod_{j=0}^{k}A_j\right)w_0+ \left(\prod_{j=1}^{k} A_j\right)\xi_0   +\sum_{i=1}^{k-1}\left(\prod_{j=i+1}^{k} A_j\right)\xi_i  +\xi_k  - \sum_{i=0}^{k-1}\left(\prod_{j=i+2}^{k}  A_j\right)\xi_{i}\\
    &= \left(\prod_{j=0}^{k}A_j\right)w_0 +\xi_k  +\sum_{i=0}^{k-1}\left(\prod_{j=i+1}^{k} A_j\right)\xi_i  - \sum_{i=0}^{k-1}\left(\prod_{j=i+2}^{k}  A_j\right)\xi_{i}\\
    &= \left(\prod_{j=0}^{k}A_j\right)w_0 +\xi_k  +\sum_{i=0}^{k-1}\left[\left(\prod_{j=i+1}^{k} A_j\right)-\left(\prod_{j=i+2}^{k}  A_j\right)\right]\xi_{i}.
\end{align}
\end{proof}
\begin{remark}
If $A_i=I$ for all $i$, then the last summand is zero. This showcases the effect of anticorrelation: noise cancellation under noise accumulation.
\end{remark}

\subsubsection[]{Expectation Quantities under Deterministic $\rho_k$}
Using the variation of constants formula, we can write the dynamics of the second moment of stochastic linear time-varying dynamical systems, with either standard or anticorrelated noise.

\begin{proposition}[An It{\^o}-like formula]
Let $w\in\mathbb{R}^d$ evolve with time-varying linear dynamics $w_{k+1} = A_k w_k + \varepsilon_k$, where $A_k\in\mathbb{R}^{d\times d}$ and $\varepsilon_k\in \mathbb{R}^{d}$ for all $k$. Let $\{\xi_k\}$ be a family of uncorrelated zero-mean $d$-dimensional random variables with variance $\mathbb{E}[\|\xi_k\|^2 ]= d\sigma^2$~(dependency on the dimension because additivity of squared norm). Consider $\varepsilon_0 = \xi_0$ and $\varepsilon_k = \xi_k-\xi_{k-1}$ for all $k\ge 1$. Further, assume that $A_k = \rho_k I$ for all $k$ (i.e. $A_k$ is a multiple of the identity), with $\rho_k\in\mathbb{R}$ a deterministic quantity. Then, with the convention that $\prod_{j=k+1}^{k} A_j = 1$, we have
\begin{equation}
    \mathbb{E}[\|w_{k+1}\|^2] = \left(\prod_{j=0}^{k}\rho_j^2\right) \|w_0\|^2 + \left(1+\sum_{i=0}^{k-1}\left[ (1-\rho_{i+1})^2 \prod_{j=i+2}^k \rho_j^2\right]\right)d \sigma^2. 
\end{equation}
Instead, if $\varepsilon_k = \xi_k$ for all $k$~(standard noise injection) we have
\begin{equation}
    \mathbb{E}[\|w_{k+1}\|^2] = \left(\prod_{j=0}^{k}\rho_j^2\right) \|w_0\|^2 + \sum_{i=0}^{k}\left(\prod_{j=i+1}^k \rho_j^2\right)d \sigma^2. 
\end{equation}
\label{prop:ito_discrete}
\end{proposition}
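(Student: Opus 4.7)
The plan is to apply the two variation-of-constants formulas (Lemma~\ref{lemma:variations} and Corollary~\ref{cor:anti_variations}) directly, substitute $A_j = \rho_j I$ so all the matrix products collapse into scalar products, take the squared Euclidean norm, and finally take expectation, using the zero-mean and uncorrelated structure of $\{\xi_k\}$ to kill all cross terms.

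For the uncorrelated case $\varepsilon_k = \xi_k$, Lemma~\ref{lemma:variations} yields
\begin{equation*}
    w_{k+1} = \Bigl(\prod_{j=0}^{k}\rho_j\Bigr) w_0 + \sum_{i=0}^{k} \Bigl(\prod_{j=i+1}^{k}\rho_j\Bigr) \xi_i.
\end{equation*}
Expanding $\|w_{k+1}\|^2$ gives a ``diagonal'' part $\bigl(\prod_{j=0}^k \rho_j^2\bigr)\|w_0\|^2 + \sum_i \bigl(\prod_{j=i+1}^k \rho_j^2\bigr)\|\xi_i\|^2$ together with cross terms of the form $\langle w_0,\xi_i\rangle$ and $\langle \xi_i,\xi_{i'}\rangle$ with $i\neq i'$. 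Taking expectation, the first family vanishes since $\mathbb{E}[\xi_i]=0$, the second since $\{\xi_k\}$ are uncorrelated; then substituting $\mathbb{E}[\|\xi_i\|^2]=d\sigma^2$ produces the claimed formula.

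For the anticorrelated case, Corollary~\ref{cor:anti_variations} with $A_j=\rho_j I$ gives
\begin{equation*}
    w_{k+1} = \Bigl(\prod_{j=0}^{k}\rho_j\Bigr) w_0 + \xi_k + \sum_{i=0}^{k-1}(\rho_{i+1}-1)\Bigl(\prod_{j=i+2}^{k}\rho_j\Bigr)\xi_i,
\end{equation*}
where the prefactor $(\rho_{i+1}-1)$ comes from $A_{i+1}-I=(\rho_{i+1}-1)I$. Squaring the norm and taking expectation, the same argument eliminates every cross term: terms $\langle w_0,\xi_i\rangle$ vanish by zero mean, and $\langle\xi_i,\xi_{i'}\rangle$ for $i\neq i'$ (in particular the products involving the standalone $\xi_k$) vanish by uncorrelatedness. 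What survives is the $w_0$ contribution, the $\mathbb{E}[\|\xi_k\|^2]=d\sigma^2$ from the isolated $\xi_k$, and the sum $\sum_{i=0}^{k-1}(\rho_{i+1}-1)^2\prod_{j=i+2}^k\rho_j^2\cdot d\sigma^2$. Using $(\rho_{i+1}-1)^2=(1-\rho_{i+1})^2$ and factoring $d\sigma^2$ gives exactly the stated expression.

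There is no real obstacle here; the proof is essentially a bookkeeping exercise. The only subtlety worth flagging is the role of the isolated $\xi_k$ term in the anticorrelated formula, which contributes the leading ``$1$'' inside the parenthesis $\bigl(1+\sum_{i=0}^{k-1}\cdots\bigr)$ and must be treated separately from the indexed sum; one should verify that the cross term $\langle \xi_k, \xi_i\rangle$ for $i<k$ vanishes by uncorrelatedness before collecting the surviving diagonal contributions. The convention $\prod_{j=k+1}^{k}\rho_j=1$ must also be applied consistently at the boundary $i=k-1$ (so that the factor $\prod_{j=i+2}^k\rho_j$ equals $1$ there), ensuring the formula agrees with the anticorrelated update for small $k$.
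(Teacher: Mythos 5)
Your proposal is correct and follows essentially the same route as the paper: apply Corollary~\ref{cor:anti_variations} (resp.\ Lemma~\ref{lemma:variations}) with $A_j=\rho_j I$, expand the squared norm, and use zero mean plus uncorrelatedness to kill all cross terms, leaving exactly the stated diagonal contributions. Your explicit bookkeeping of the isolated $\xi_k$ term and the boundary convention is, if anything, slightly more careful than the paper's one-line computation.
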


\begin{proof}
    Using independence of the $\{\xi_k\}$ family, we obtain for the anticorrelated case:
    
    \begin{align}
    \mathbb{E}[w_{k+1}^\top w_{k+1}] &= \left(\prod_{j=0}^{k}\rho_j\right)^2\|w_0\|^2 +\mathbb{E}[\|\xi_k\|^2]  +\sum_{i=0}^{k-1}(\rho_{i+1}-1)^2\left(\prod_{j=i+2}^{k}  \rho_j\right)^2\mathbb{E}[\|\xi_{i}\|^2]\\
    &= \left(\prod_{j=0}^{k}\rho_j\right)^2\|w_0\|^2 +\sigma^2  +\sum_{i=0}^{k-1}(\rho_{i+1}-1)^2\left(\prod_{j=i+2}^{k}  \rho_j^2\right)\sigma^2,
    \end{align}
    where we used the fact that the $\xi_k$ are not correlated. The case $\varepsilon_k = \xi_k$ is similar and therefore left to the reader.
\end{proof}

\begin{corollary}
In the setting of Proposition~\ref{prop:ito_discrete}, assume $\rho_j = \rho \in (0,1)$ is constant for all $j$. If $\varepsilon_0 = \xi_0$ and $\varepsilon_k = \xi_k-\xi_{k-1}$ for all $k\ge 1$ then

\begin{equation}
    \mathbb{E}[\|w_{k+1}\|^2] = \rho^{2(k+1)} \|w_0\|^2 + \left(1+\frac{(1-\rho)^2(1-\rho^{2 (k+1)})}{1-\rho^2}\right)d \sigma^2 \quad\overset{\infty}{\longrightarrow}\quad  \frac{2}{1+\rho} d \sigma^2. 
\end{equation}

\noindent Instead, if $\varepsilon_k = \xi_k$ for all $k$~(standard noise injection) we have

\begin{equation}
    \mathbb{E}[\|w_{k+1}\|^2] = \rho^{2(k+1)} \|w_0\|^2 + \frac{1-\rho^{2 (k+1)}}{1-\rho^2} d \sigma^2\quad \overset{\infty}{\longrightarrow}\quad  \frac{1}{1-\rho^2} d \sigma^2.
\end{equation}
\label{cor:ito_discrete_const}
\end{corollary}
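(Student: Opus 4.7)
The plan is to derive the corollary as a direct specialization of Proposition~\ref{prop:ito_discrete} to the case where $\rho_j = \rho$ is constant in $j$, so all products become powers of $\rho$ and all sums collapse via the geometric series formula. The only real work is bookkeeping of indices and termwise simplification, followed by taking the limit $k\to\infty$ using $\rho^{2(k+1)}\to 0$ for $\rho\in(0,1)$.

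The first step is to evaluate the two product factors appearing in Proposition~\ref{prop:ito_discrete}. Since $\rho_j=\rho$ for all $j$, the product $\prod_{j=0}^{k}\rho_j^2$ contains $k+1$ copies of $\rho^2$ and equals $\rho^{2(k+1)}$, which gives the initial-condition term in both lines of the corollary. Similarly, with the convention $\prod_{j=i+2}^{k}\rho_j^2=1$ when $i+2>k$, one has $\prod_{j=i+2}^{k}\rho_j^2=\rho^{2(k-i-1)}$, and $\prod_{j=i+1}^{k}\rho_j^2=\rho^{2(k-i)}$. These identities reduce the random-variable terms in Proposition~\ref{prop:ito_discrete} to weighted geometric sums.

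Next I substitute these into Proposition~\ref{prop:ito_discrete}. For the anticorrelated update, the noise contribution becomes
\begin{equation*}
\left(1+(1-\rho)^2\sum_{i=0}^{k-1}\rho^{2(k-i-1)}\right)d\sigma^{2}.
\end{equation*}
Reindexing by $m=k-i-1$ turns the inner sum into $\sum_{m=0}^{k-1}\rho^{2m}=\frac{1-\rho^{2k}}{1-\rho^{2}}$, yielding the closed-form expression stated in the corollary (up to the innocuous convention of writing $1-\rho^{2(k+1)}$, which only changes the transient by $O(\rho^{2k})$ and does not affect the limit). For the standard (uncorrelated) update, the noise contribution becomes $\sum_{i=0}^{k}\rho^{2(k-i)}d\sigma^{2}$; reindexing by $m=k-i$ gives $\sum_{m=0}^{k}\rho^{2m}=\frac{1-\rho^{2(k+1)}}{1-\rho^{2}}$, which is exactly the stated formula.

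Finally, passing to the limit is immediate: since $\rho\in(0,1)$, we have $\rho^{2(k+1)}\to 0$, so the initial-condition contribution vanishes and the noise contribution converges. For the anticorrelated case we obtain
\begin{equation*}
\left(1+\frac{(1-\rho)^{2}}{1-\rho^{2}}\right)d\sigma^{2}=\left(1+\frac{1-\rho}{1+\rho}\right)d\sigma^{2}=\frac{2}{1+\rho}d\sigma^{2},
\end{equation*}
using the factorization $1-\rho^{2}=(1-\rho)(1+\rho)$; for the standard case we obtain $\tfrac{1}{1-\rho^{2}}d\sigma^{2}$ by direct inspection. No step is conceptually hard; the only place to be careful is the reindexing of the geometric sum in the anticorrelated case and the simplification $(1-\rho)^2/(1-\rho^2)=(1-\rho)/(1+\rho)$ that produces the clean constant $2/(1+\rho)$ and exhibits the quantitative advantage of anticorrelation as $\rho\uparrow 1$.
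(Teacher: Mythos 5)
Your proposal is correct and follows exactly the route the paper intends: its own proof of this corollary is the one-line remark ``simple application of the formula for geometric series,'' and you have simply filled in the substitution $\rho_j=\rho$, the reindexing, and the limit. Your side observation that the anticorrelated finite-$k$ sum actually evaluates to $(1-\rho)^2(1-\rho^{2k})/(1-\rho^2)$ rather than the stated $(1-\rho)^2(1-\rho^{2(k+1)})/(1-\rho^2)$ is also accurate --- a harmless off-by-one in the transient term that leaves the limit $\tfrac{2}{1+\rho}d\sigma^2$ unchanged.
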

\begin{proof}
Simple application of the formula for geometric series. Numerical verification in Figure~\ref{fig:tube_lemma_verification}.
\end{proof}

\begin{remark}
\label{rmk:reversed trend}
Note that the corollary has a clear interpretation: if $\rho$ is between zero and one, we experience striking difference between uncorrelated and anticorrelated noise. If $\rho$ increases, the total accumulated anticorrelated noise decreases.\footnote{$\frac{2}{1+\rho}$ is a decreasing function or $\rho$, while $1/(1-\rho^2)$ is increasing.} This trend is reversed for normal noise injection: as $\rho\to 1$ the total accumulated variance explodes. Numerical verification can be found in Figure~\ref{fig:tube_lemma_verification}.
\end{remark}

\begin{figure}[ht]
    \centering
    \includegraphics[width=0.32\textwidth]{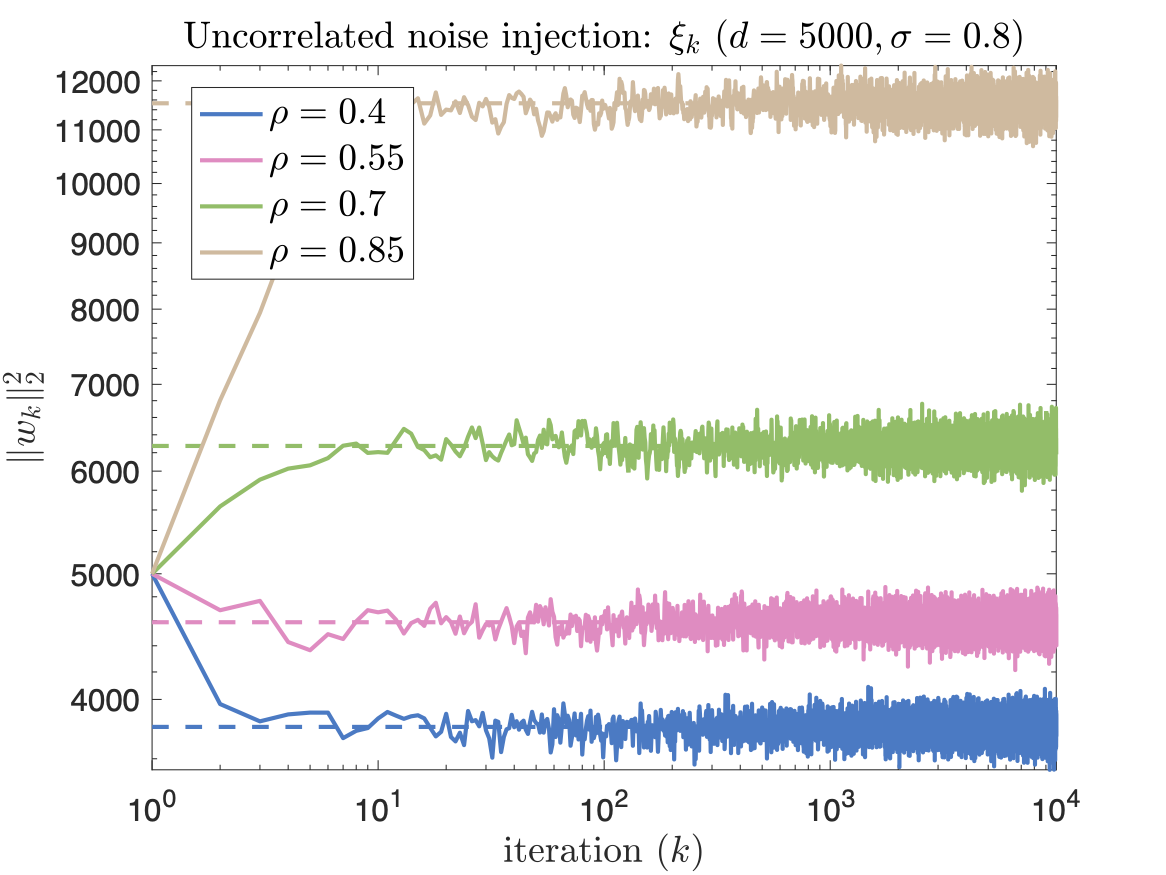}
    \includegraphics[width=0.32\textwidth]{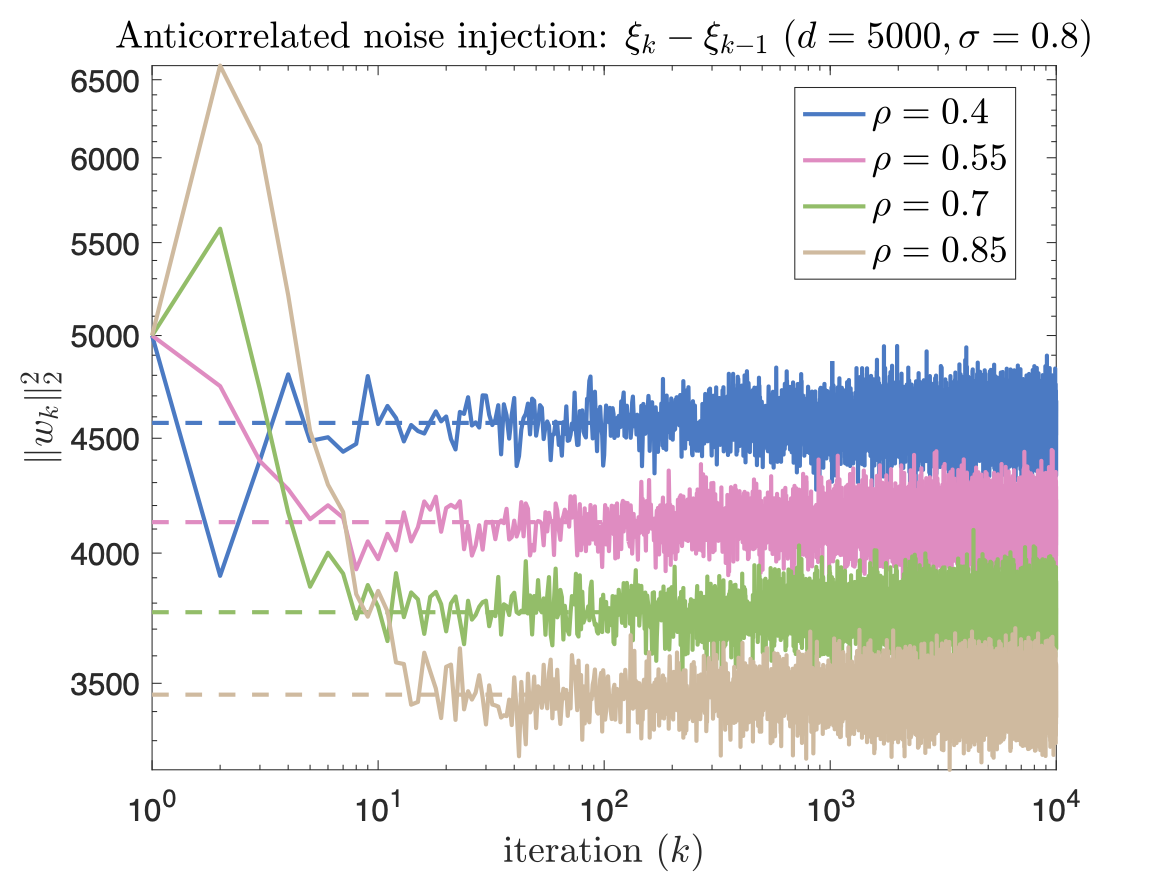}
    \includegraphics[width=0.32\textwidth]{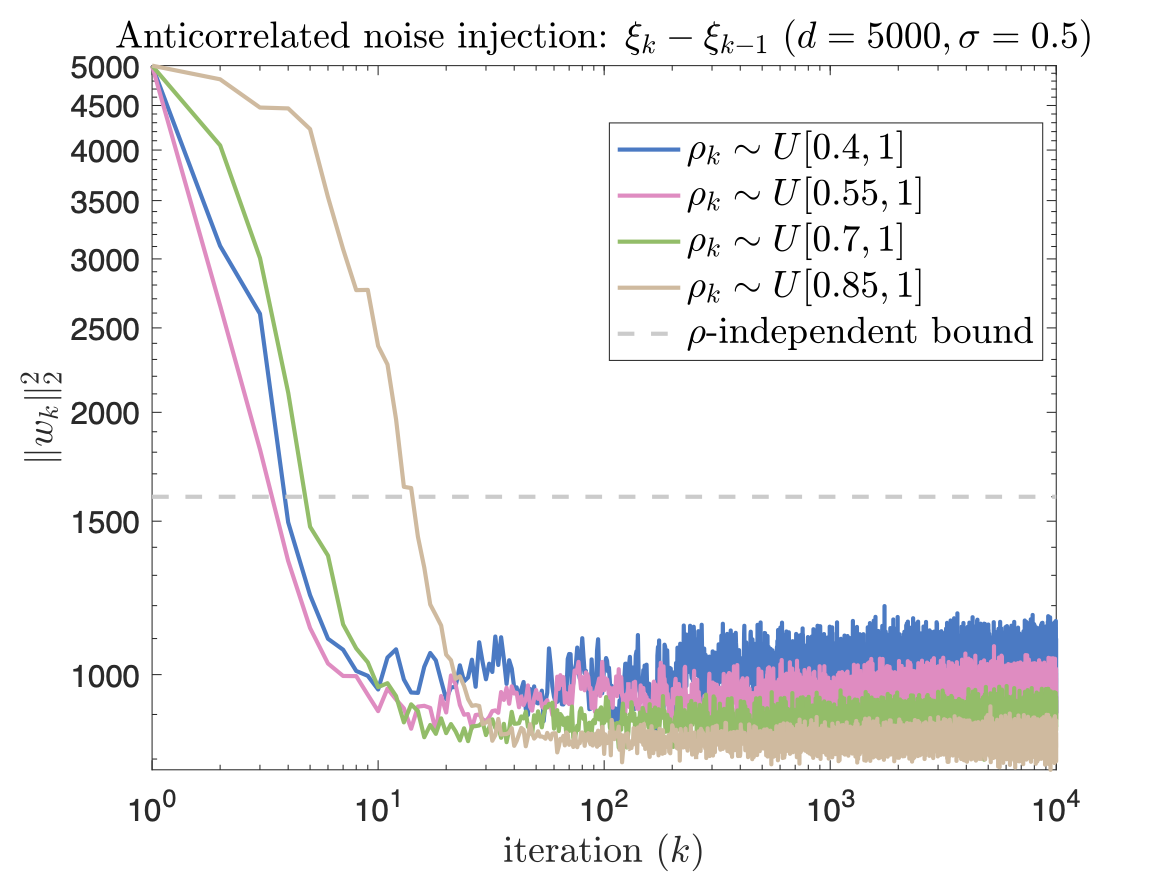}
    \vspace{-3mm}
    \caption{Numerical verification of our final result that we will use in the proof, i.e. Corollary~\ref{cor:ito_discrete_const}~(first and second panel) and Proposition~\ref{prop:ito_discrete_all_rho}~(last panel). The dashed lines indicate our predicted value~(in expectation) by the theory. In the right-most plot, we sample $\rho_k$ at each iteration uniformly on an interval.}
    \label{fig:tube_lemma_verification}
\end{figure}

\subsubsection[]{Dealing with Potential Stochasticity in the $\rho_k$}
For the proof in the next subsection, we need to deal with stochastic $\rho_k$, which are only specified up to an interval.

\begin{proposition}[Limit bound on second moment for anticorrelated noise]
\label{prop:ito_discrete_all_rho}
Let $w\in\mathbb{R}^d$ evolve with time-varying linear dynamics $w_{k+1} = A_k w_k + \varepsilon_k$, where $A_k\in\mathbb{R}^{d\times d}$ and $\varepsilon_k\in \mathbb{R}^{d}$ for all $k$. Let $\{\xi_k\}$ be a family of uncorrelated zero-mean $d$-dimensional random variables with variance $\mathbb{E}[\|\xi_k\|^2 ]= d\sigma^2$~(dependency on the dimension because additivity of squared norm). Consider $\varepsilon_0 = \xi_0$ and $\varepsilon_k = \xi_k-\xi_{k-1}$ for all $k\ge 1$. Further, assume that $A_k = \rho_k I$ for all $k$ (i.e. $A_k$ is a multiple of the identity) and that $\rho_k\in[0,1]$ for all $k$. Assume that the probability of $\rho_k<1$ is non-zero, i.e. that $\rho_k\ne 1$ with non-vanishing probability. Then, we have
\begin{equation}
\lim_{k\to\infty}\mathbb{E}[\|w_{k+1}\|^2]\le 2d\sigma^2.
\end{equation}
\end{proposition}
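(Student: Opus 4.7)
The natural plan is to reduce to Proposition~\ref{prop:ito_discrete} by conditioning on the sequence $\{\rho_j\}$. If we assume that the randomness driving $\rho_k$ is independent of the noise family $\{\xi_k\}$ (which is the setting relevant to the widening-valley application, where $\rho_k = 1-\eta v_k^2$ for the $u$-dynamics depends only on the noise injected into the $v$-coordinate), then conditional on $\{\rho_j\}_{j\ge 0}$ the proposition gives the exact identity
\[
\mathbb{E}[\|w_{k+1}\|^2 \mid \{\rho_j\}]
= \Big(\prod_{j=0}^{k}\rho_j^2\Big)\|w_0\|^2
+ \Big(1+\sum_{i=0}^{k-1} (1-\rho_{i+1})^2 \prod_{j=i+2}^k \rho_j^2\Big)\, d\sigma^2.
\]

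The core algebraic step is to bound the bracketed noise sum uniformly in $k$ (and pathwise in $\{\rho_j\}$) by $2$. For this I would exploit that $\rho_j \in [0,1]$ implies the elementary inequality $(1-\rho)^2 \le 1-\rho^2$. Setting $p_i := \prod_{j=i+1}^k \rho_j^2$ (so $p_k=1$), one checks $(1-\rho_{i+1}^2)\prod_{j=i+2}^k \rho_j^2 = p_{i+1}-p_i$, so the sum telescopes:
\[
\sum_{i=0}^{k-1} (1-\rho_{i+1})^2 \prod_{j=i+2}^k \rho_j^2
\ \le\ \sum_{i=0}^{k-1}(p_{i+1}-p_i)
\ =\ 1 - \prod_{j=1}^k \rho_j^2 \ \le\ 1.
\]
Combining and taking expectations yields
\[
\mathbb{E}[\|w_{k+1}\|^2] \ \le\ \mathbb{E}\Big[\prod_{j=0}^k \rho_j^2\Big]\,\|w_0\|^2 + 2 d\sigma^2.
\]

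It then remains to show $\mathbb{E}\big[\prod_{j=0}^k \rho_j^2\big]\to 0$ as $k\to\infty$. This is where the hypothesis that $\rho_k<1$ with non-vanishing probability is used: under a uniform lower bound $\Pr(\rho_k^2\le 1-\delta\mid\mathcal{F}_{k-1})\ge p>0$ (the precise formalization of ``non-vanishing probability''), the product is dominated by a supermartingale that contracts by a factor bounded away from $1$ infinitely often, so by Borel--Cantelli/bounded convergence $\prod_j \rho_j^2\to 0$ almost surely and, since it is bounded by $1$, in $L^1$ as well. Taking $k\to\infty$ in the displayed inequality gives $\lim_{k\to\infty}\mathbb{E}[\|w_{k+1}\|^2]\le 2d\sigma^2$.

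The main obstacle is the last step: formalizing ``non-vanishing probability'' into a statement strong enough that the expected product of the $\rho_j^2$ vanishes. The algebraic heart of the argument (the trick $(1-\rho)^2\le 1-\rho^2$ followed by a telescoping sum) is clean and dimension-free, and is precisely what captures the noise-cancellation effect highlighted in the remark after Corollary~\ref{cor:ito_discrete_const}; everything else is routine conditional-expectation bookkeeping.
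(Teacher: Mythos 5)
Your proposal is correct and follows the same overall route as the paper: both start from the second-moment identity of Proposition~\ref{prop:ito_discrete}, bound the accumulated-noise coefficient by $1$ (so the noise contribution is at most $2d\sigma^2$), and then argue that the term $\bigl(\prod_{j=0}^k\rho_j^2\bigr)\|w_0\|^2$ vanishes using the hypothesis that $\rho_k<1$ with non-vanishing probability. The one place you genuinely diverge is in how the key bound $\nu_k:=\sum_{i=0}^{k-1}(1-\rho_{i+1})^2\prod_{j=i+2}^k\rho_j^2\le 1$ is obtained: the paper writes $\nu_k=\rho_k^2\nu_{k-1}+(1-\rho_k)^2$ and runs an induction, whereas you use $(1-\rho)^2\le 1-\rho^2$ on $[0,1]$ and telescope, getting the slightly sharper bound $1-\prod_{j=1}^k\rho_j^2$. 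The two arguments are algebraically equivalent in content, but your telescoping version is arguably cleaner and makes the noise-cancellation mechanism more transparent. You are also more careful than the paper on two points it glosses over: (i) Proposition~\ref{prop:ito_discrete} is stated for deterministic $\rho_k$, so one must condition on $\{\rho_j\}$ and invoke independence of $\{\rho_j\}$ from the driving noise (which does hold in the widening-valley application, since $\rho_k=1-\eta v_k^2$ depends only on the $v$-coordinate perturbations); and (ii) the convergence $\Exp\bigl[\prod_j\rho_j^2\bigr]\to 0$ requires a quantitative version of ``$\rho_k\ne 1$ with non-vanishing probability,'' which you correctly flag as the only step needing a more precise hypothesis. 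No gap beyond what is already implicit in the paper's own statement.
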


\begin{proof}
The proof is based on an induction argument, starting from the equation in Proposition~\ref{prop:ito_discrete}:
\begin{equation}
    \mathbb{E}[\|w_{k+1}\|^2] = \left(\prod_{j=0}^{k}\rho_j^2\right) \|w_0\|^2 + \left(1+\sum_{i=0}^{k-1}\left[ (1-\rho_{i+1})^2 \prod_{j=i+2}^k \rho_j^2\right]\right)d \sigma^2. 
\end{equation}
First, note that by assumption on $\rho_k$ the first term vanishes as $k\to\infty$. We just have to deal with the second term. Specifically, we want to show that for whatever sequence $\rho_k\in(0,1)$ we have
\begin{equation}
    \nu_k = \sum_{i=0}^{k-1} (1-\rho_{i+1})^2 \left(\prod_{j=i+2}^k \rho_j^2\right)\le1, \ \quad \forall k\ge 0.
\end{equation}

A fundamental observation, is that the term can be written in a recursive form. Indeed,
\begin{align}
    \nu_k &= \sum_{i=0}^{k-1} \left[(1-\rho_{i+1})^2 \left(\prod_{j=i+2}^k \rho_j^2\right)\right]\\
    &= \sum_{i=0}^{k-2} \left[(1-\rho_{i+1})^2 \left(\prod_{j=i+2}^k \rho_j^2\right)\right] + (1-\rho_{k})^2\\
    &= \rho_k \sum_{i=0}^{k-2} \left[(1-\rho_{i+1})^2 \left(\prod_{j=i+2}^{k-1} \rho_j^2\right)\right] + (1-\rho_{k})^2\\
    & = \rho_k^2\nu_{k-1} + (1-\rho_k)^2,
\end{align}
where the second equality follows from the fact that, as previously noted, our notation implies $\prod_{j=k+1}^k \rho_k^2 = 1$, for all $k$. Let us now proceed again by induction to show that $v_k\in(0,1)$ for all $k\ge 0$. Note that trivially $v_0=0$. Let's proceed with the inductive step:
\begin{equation}
    \nu_k = \rho_k^2\nu_{k-1} + (1-\rho_k)^2 = (\nu_{k-1}+1)\rho_k^2 - 2\rho_k + 1.
\end{equation}
This quantity is less then one if and only if
\begin{equation}
 (\nu_{k-1}+1)\rho_k^2 \le 2\rho_k.
\end{equation}
Note that this is satisfied since $\nu_{k-1}+1\le2$, and $\rho_k^2 \le \rho_k$ since $\rho_k\in(0,1)$. The result follows. 
\end{proof}
A numerical verification of this result can be found in Figure~\ref{fig:tube_lemma_verification}.

\subsection{Proof of the Main Result} \label{app:proof_of_main_result}
Using the results from the last subsection, we are now ready to show the main theorem for optimization of the widening valley under noise injection.

\tube*

\begin{figure}[ht]
    \centering
    \includegraphics[width=0.4\textwidth]{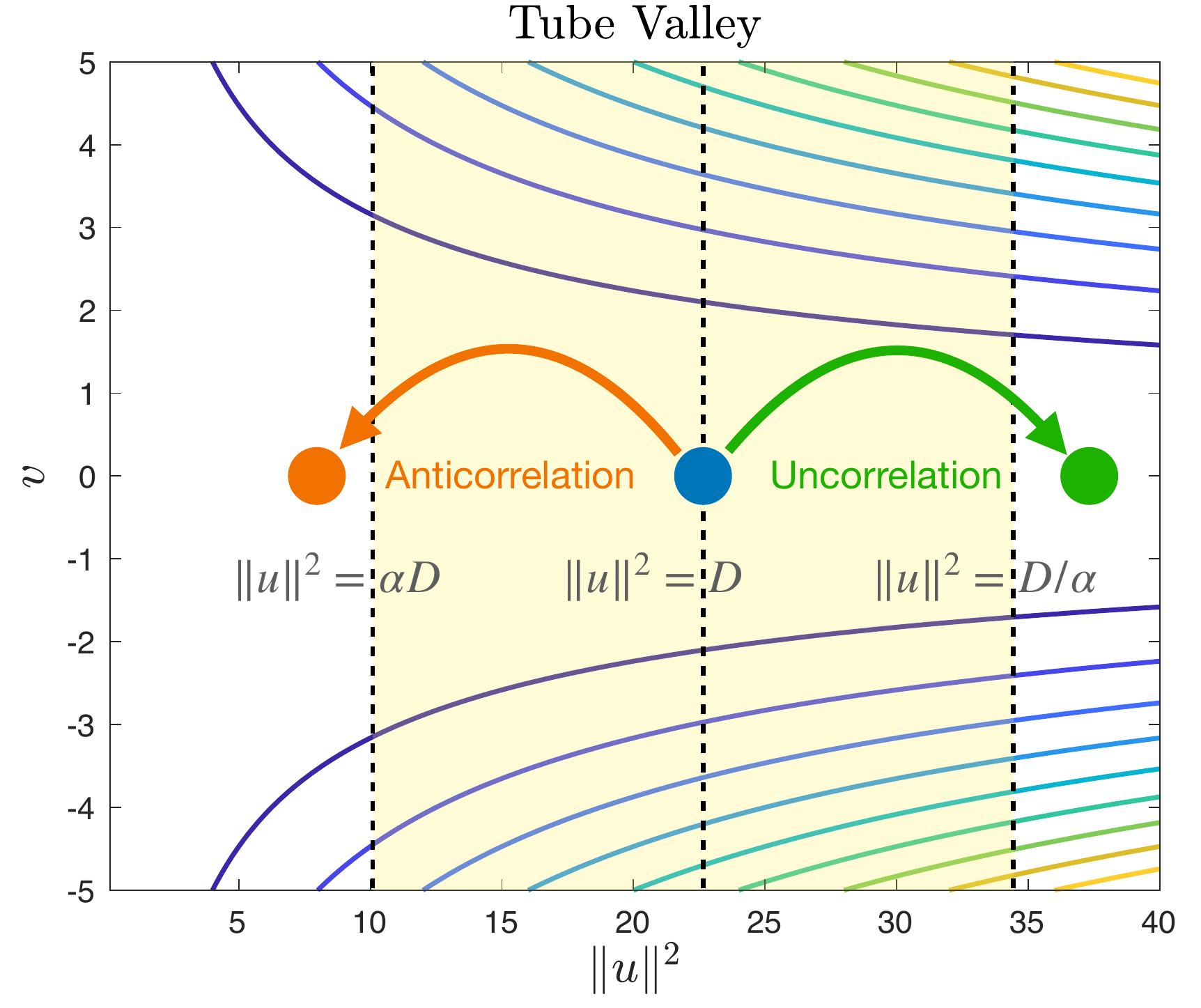}
    \vspace{-3mm}
    \caption{The sketch on the left illustrates the intuition behind the result in~\ref{thm:main_tube}.}
    \label{fig:sketch_proof}
\end{figure}

\begin{proof}
As above, we denote the perturbations by $\varepsilon_k$; i.e., $\varepsilon_k = \xi_k$ for PGD and $\varepsilon_k = \xi_{k+1} - \xi_k$ for Anti-PGD.
Let us start by inspecting the equation
\begin{equation}
    u_{k+1} = (1 - \eta v_k^2) \cdot u_k + \varepsilon_{k}^u,
\end{equation}
where $\varepsilon_{k}^u\in\mathbb{R}^d$ is the projection of the noise $\varepsilon_{k}$ to the first $d$ coordinates.
It is clear that the optimal strategy of making $\|w\|$ small is to increase $|v|$, so to sample nearby points and pick up the gradient. The greater $v$ is in norm, the better. We can increase the norm of $v$ by heavy noise injection~(second equation). However, too much noise also increases $\varepsilon_{k}^u$, which acts adversarially to the decrease of $\|w\|$~(error accumulation increases the Euclidean norm in expectation).

\paragraph{Choice of stepsize and operating region.} We start by motivating the choice of stepsize $\eta =\frac{\alpha}{2D}$. Starting from the point $(u_0,0)$ with $\|u_0\|^2=D>0$, we consider the operating landscape region $\alpha D<\|u_k\|^2<D/\alpha$, with $\alpha\in(0,1)$. We want to show that while standard noise injection makes the process exit the region from the right~($D/\alpha$ side, see Figure~\ref{fig:sketch_proof}), anticorrelated noise injection makes the process exit the region from the left~($\alpha D$ side). In this region, named $\mathcal{D}_\alpha$, the maximal allowed learning rate is $\eta\le\frac{2}{\max_{\mathcal{D}_\alpha}\{v^2,\|u\|^2\}}$. Since $v$ stays small~(we are going to check this later in great detail), we select the stepsize $\eta\le\frac{1}{2\max_{\mathcal{D}_\alpha}\|u\|^2}=\frac{\alpha}{2D}$ --- which guarantees stability in expectaction, i.e. without noise injection~(even allowing for some slack).

\paragraph[]{Lower bound for uncorrelated noise~$(\boldsymbol{\varepsilon_k =\xi_k})$.} For this case, we have to show that standard noise injection cannot possibly work for reaching $\alpha D$, therefore we have to put ourselves in \textit{the best case scenario} for PGD: that is, we have to provide an uniform upper bound for $v_k$ under the second equation~(i.e. the equation for $v$) and show that this is not enough for a substantial decrease in $\|u\|$. In the next paragraph~(anticorrelated noise), we instead have to put ourselves in the \textit{worst case scenario} --- i.e. a lower bound for $|v|$--- and show that this is still enough for anticorrelated noise to yield a substantial decrease in $\|u\|$.

To start, let us then look at the second equation:
\begin{equation}
v_{k+1} = (1 - \eta \|u_k\|^2) \cdot v_k + \varepsilon^v_k,
\end{equation}
where $\varepsilon^v_k$ is the $(d+1)$-th component of $\varepsilon_k$. Since we start from $v_0=0$, the equation is completely dominated by noise, and is strongly mean reverting~(i.e. $v$ is effectively bounded). Indeed, since $\eta = \frac{\alpha}{2D}$ and $\|u_k\|^2\in\left(\alpha D,D/\alpha\right)$ by assumption, we have

\begin{equation}
|v_{k+1}| \le \max\left\{1 - \frac{\alpha^2}{2},\frac{1}{2}\right\} \cdot |v_k| + \sigma = \left(1 - \frac{\alpha^2}{2}\right) |v_k| +\sigma.
\end{equation}
where we used the fact that $|\varepsilon^v_k|=\sigma$ and that $\alpha^2\in(0,1)$. By induction, the last inequality yields that, starting from $v_0=0$, we have
\begin{equation}
    |v_k|\le v_{\max}:=\frac{2\sigma}{\alpha^2},\quad \forall k\ge 0.
\end{equation}
Hence, we found the ``best case scenario'' for the $w$ equation: $|v_k| = \frac{2\sigma}{\alpha^2}$, for all $k$. This gives $w$ the best decrease rate possible.\footnote{Note that noise injection in $u$ is independent of $v$, therefore to minimize $\|u\|$ we need the shrinking factor to be as large as possible. We note that using this bound is precise: with probability one we are in the best scenario~(we are finding a lower bound).}~However, we need to check this value $v_{\max}$ is such that the equation for $w$ is indeed stable~(we promised this to the reader in the last paragraph). We recall that this equation is $w_{k+1} = (1 - \eta v_k^2) \cdot w_k + \varepsilon_{k}^u$. Let us require $(1 - \eta v_k^2)\in(0,1)$, for this we need $1/v_{\max}^2>\eta = \frac{\alpha}{2D}$. Therefore, we need
\begin{equation}
    \frac{1}{v_{\max}^2} = \frac{\alpha^4}{4\sigma^2}\ge \eta = \frac{\alpha}{2D}\quad\implies\quad \sigma^2\le \alpha^3 D/2.
\end{equation}
This is guaranteed by assumption. To proceed, we substitute $v_{\max}$ into the first equation to get

\begin{equation}
    u_{k+1} = (1 - \eta \frac{4\sigma^2}{\alpha^4}) \cdot u_k + \varepsilon_{k}^u = \frac{\alpha^3 D- 2\sigma^2}{\alpha^3 D} u_k + \varepsilon_{k}^u. 
\end{equation}

Let us call $\rho := \left(\frac{\alpha^3 D- 2\sigma^2}{\alpha^3 D}\right)\in(0,1)$ the (best case) shrinking factor. Since $\varepsilon_{i}^u$ is zero-mean, computing the expected value of $\|u_{k}\|^2$ leads to the following limit by Corollary~\ref{cor:ito_discrete_const}:

\begin{equation}
    \lim_{k\to\infty} \mathbb{E}[\|u_{k}\|^2] = \frac{d\sigma^2}{1-\rho^2} = \frac{d D^2 \alpha^6 }{2 D \alpha^3-2\sigma^2}.
\end{equation}
where we assumed $\sigma^2$ strictly positive. Note that this limit is a monotonically increasing function of $\sigma^2\in(0,D \alpha^3/2)$. Hence, we get

\begin{equation}
    \lim_{k\to\infty} \mathbb{E}[\|u_{k}\|^2] \in \left(\frac{d D \alpha^3}{2}, d D \alpha^3\right)
    \label{eq:bound_tube_bm}
\end{equation}
\begin{remark}[Phase transition]
Note that, for $\sigma$ exactly 0, the limit is instead $\|u_0\|^2 = D$. Instead, for any small noise the process will grow up until at least $dD\alpha^2/2$. This might seem weird at first --- but recall that there is an interaction between noise scale and our best-case scenario bound for $v$: they both depend on $\sigma$. This causes a cancellation effect and a transition in behavior at $\sigma=0$.
\end{remark}
 Last, we need to show that this lower bound on $\lim_{k\to\infty} \mathbb{E}[\|u_{k}\|^2]$ coincides with~(or is bigger than) the right boundary of the operating region in Figure~\ref{fig:sketch_proof}. To do this we set:
\begin{equation}
    \frac{d D \alpha^3}{2} \ge \frac{D}{\alpha}\quad \implies \quad d\ge \frac{2}{\alpha^4}.
\end{equation}
This concludes the proof of Eq.~\eqref{eq:thm_wv_PGD}.

\paragraph[]{Upper bound for anticorrelated noise~$(\boldsymbol{\varepsilon_k =\xi_k-\xi_{k-1}})$.}
We consider anticorrelated noise injection $(\xi_k)_i \in \{-\sigma,\sigma\}$, and $\varepsilon_k = (\varepsilon_k^u,\varepsilon_k^v) =\xi_k-\xi_{k-1}$. Again, let us first look at the second equation:
\begin{equation}
v_{k+1} = (1 - \eta \|u_k\|^2) \cdot v_k + \varepsilon^v_k.
\end{equation}
Since by hypothesis  $\eta = \frac{\alpha}{2D}$ and $\alpha D \le \|u_k\|^2\le D/\alpha$, we have $(1 - \eta \|u_k\|^2)\in\left(1-\frac{\alpha^2}{2}, \frac{1}{2}\right)$. Clearly, we have that, for any $k\ge 1$ $v_{k}^2$, is non-zero with a non-vanishing probability. For ~(noiseless) stability, we also need an upper bound on $|v_k|$. An easy~(yet absolutely not tight) upper bound is the following:
\begin{equation} \label{eq:for_ind_vk}
|v_{k+1}| \le \frac{1}{2} |v_k| + 2\sigma \varepsilon^v_k.
\end{equation}
where we simply used the absolute value subadditivity and the fact that $|\varepsilon^v_k|\le |\xi_k|+|\xi_{k-1}| =2\sigma$. Note that the equation directly yields by induction $|v_k|\le 4\sigma$ for all $k\ge 0$.

Let us now deal with the equation for $w$.
\begin{equation}
    u_{k+1} = (1 - \eta v_k^2) \cdot u_k + \varepsilon_{k}^u 
\end{equation}
For this equation, we would want all the coefficients $\rho_k :=1 - \eta v_k^2$ to be between $0$ and $1$ --- i.e. we need to check that $v$ is indeed not too big. Since $|v_k|\le 4\sigma$ for all $k\ge 0$, we have the requirement $1 - \frac{\alpha}{2D} 16\sigma^2>0$, which implies $\sigma^2\le \frac{D}{8\alpha}$ --- that satisfies our hypothesis. 

So, to sum it up, we are in operating regime of Proposition~\ref{prop:ito_discrete_all_rho}: anticorrelated noise, $\rho_k<1$ with non-vanishing probability and $\rho_k$ always between $0$ and $1$. Hence, we get that 
\begin{equation}
    \lim_{k\to\infty}\mathbb{E}[\|u_{k}\|^2]\le 2 \sigma^2 d.
\end{equation}
Hence, for $\sigma^2$ small enough, the value $\alpha D$ is reached.
This directly implies the missing Eq.~\eqref{eq:thm_wv_antiPGD}. The proof is thereby complete.
\end{proof}


\subsection{Proof of Corollary~\ref{corollary:main_tube}} \label{app:corollary_proof}

\cortube*
\begin{proof}
Recall from Eq.~\eqref{eq:trace_of_hessian_wv} that $\tr(\nabla^2 L(u,v)) = d v^2 + \|u\|^2$. 

For the first two inequalities in the corollary, recall from Eq.~\eqref{eq:for_ind_vk} of the proof of Theorem~\ref{thm:main_tube} that $|v_n|\le 4\sigma$ for all $n$, almost surely.
The first two inequalities in the Corollary follow now by Eq.~\eqref{eq:thm_wv_antiPGD}. 

For the last two inequalities in the Corollary, we lower bound the trace by $\|u\|^2$ and make use of Eq.~\eqref{eq:thm_wv_PGD}. 
\end{proof}


\section{Additional Experimental Evidence}
\label{sec:exp_app}
\subsection{Details for Figure~\ref{fig:front_experiments}}
\begin{itemize}
    \item \textbf{Squared regression}: Problem definition, loss function, gradient and Hessian provided in~\S \ref{sec:squared_regr}. We run GD, PGD and anti-PGD with full batch~(40 datapoints in 100 dimensions), while for SGD we select a batch size of $1$. All algorithms except SGD run with a constant learning rate of $\eta=0.1$. For SGD, to improve generalization at such a small batch size, we instead select a slightly smaller learning rate $\eta=0.01$.  Perturbations in PGD and anti-PGD have parameter $\sigma=0.05$. Findings are robust to changing these hyperparameters, as shown in \S~\ref{sec:squared_regr}.  All plots also show one standard deviation for all measures.
    
    \item \textbf{Matrix sensing}: Problem definition, loss function, gradient and Hessian provided in~\S\ref{sec:matrix_sensing}. We run GD, PGD and anti-PGD with full batch~(100 datapoints in 400 dimensions), while for SGD we select a batch size of $10$. All algorithms run with a constant learning rate of $\eta=0.001$. Perturbations in PGD and anti-PGD have parameter $\sigma=0.1$. Findings are robust to changing these hyperparameters, as shown in \S~\ref{sec:matrix_sensing}. For better visualization, here all plots also show two standard deviation for all measures.
    
    \item \textbf{ResNet on CIFAR10.} Details in~\S\ref{sec:exp}. Further supporting experiments in \S\ref{sec:cifar_app}.
\end{itemize}

\subsection{Quadratically Parametrized Model}
\label{sec:squared_regr}
\paragraph{Problem Definition.} 
Consider the standard linear regression setting in $d$ dimensions with $M$ datapoints. The design matrix is $X\in\R^{M\times d}$. We assume there exist sparse~(we also study the effect of sparseness) vector $w\in\mathbb{R}^d$ such that targets $y$ are perfectly predicted as $y = Xw^{\odot 2}$, where $w^{\odot 2}\in\mathbb{R}^d$ is the element-wise product of $w$. This parametrization, also studied in~\cite{haochen2021shape,blanc2020implicit} in the context of label noise, makes the landscape highly non-linear, with minimizers that achieve different generalization properties.

For this loss we have
\begin{align}
    &L(w) = \frac{1}{4n}\|X w^{\odot 2} - y\|^2,\quad  \nabla L(w) = \frac{1}{n} \cdot [X^T  (X w^{\odot 2} - y)] \odot w. \\
    &\nabla^2 L(w) = \frac{2}{n} \cdot \textrm{diag}(w) \cdot X^TX \cdot \textrm{diag}(w) + \frac{1}{n} \cdot \textrm{diag}(X^T(X w^{\odot 2} - y ))
\end{align}

\paragraph{Precise setting.} As in~\cite{haochen2021shape}, we consider the case $d=100$ and $M=40$, and generate $X_{\text{train}}$ at random. We consider $w^* = (1,1,\dots,1,0,0,\dots, 0)$ --- where only $10$ elements are non-zero, and generate $y = X_{\text{train}}(w^*\odot w^*)$. For testing, we use instead $100$ datapoints. We test three different learning rates in Figure~\ref{fig:square_tuning_low},~\ref{fig:square_tuning_mod},~\ref{fig:square_tuning_high}, and for each 3 values of noise injection variance. For SGD, since we choose a very small~(i.e. unit) batch size, the learning rate is scaled down by a factor of 10, to provide stability. 

\paragraph{Findings.} We found that anti-PGD always provides the best test accuracy, and minimizes the trace of the Hessian as well. This finding is quite robust in terms of hyperparameter tuning. Further, we found that the performance is always drastically different from the one of PGD. An explanation of this phenomenon is provided in Theorem~\ref{thm:main_tube}, in the main paper. Mini-batch SGD improves the final test loss if the stepsize is small enough. Larger stepsizes are unstable for SGD.

\begin{figure}[ht!]
    \centering
    
    \includegraphics[height=0.275\textwidth]{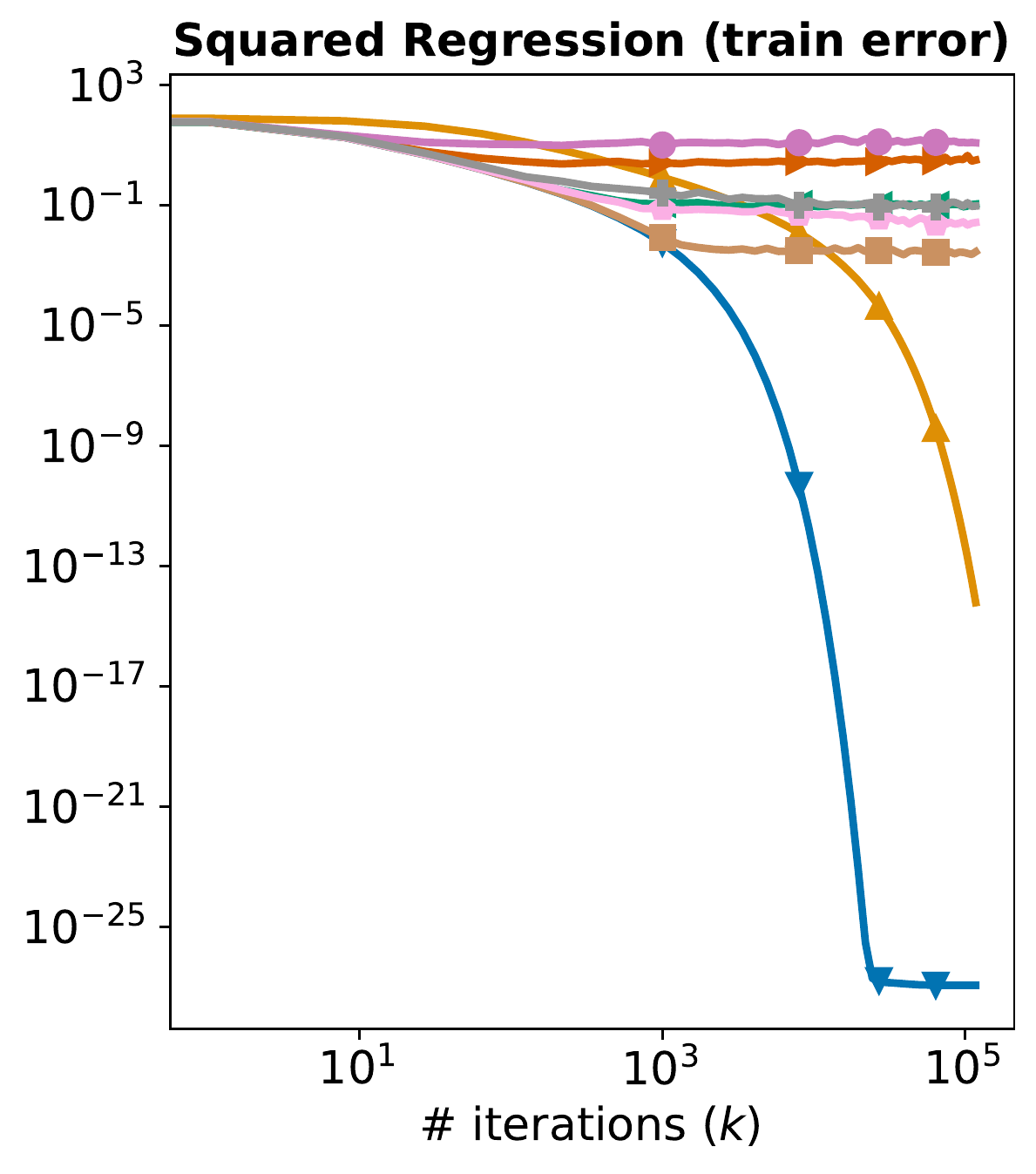}
    \includegraphics[height=0.275\textwidth]{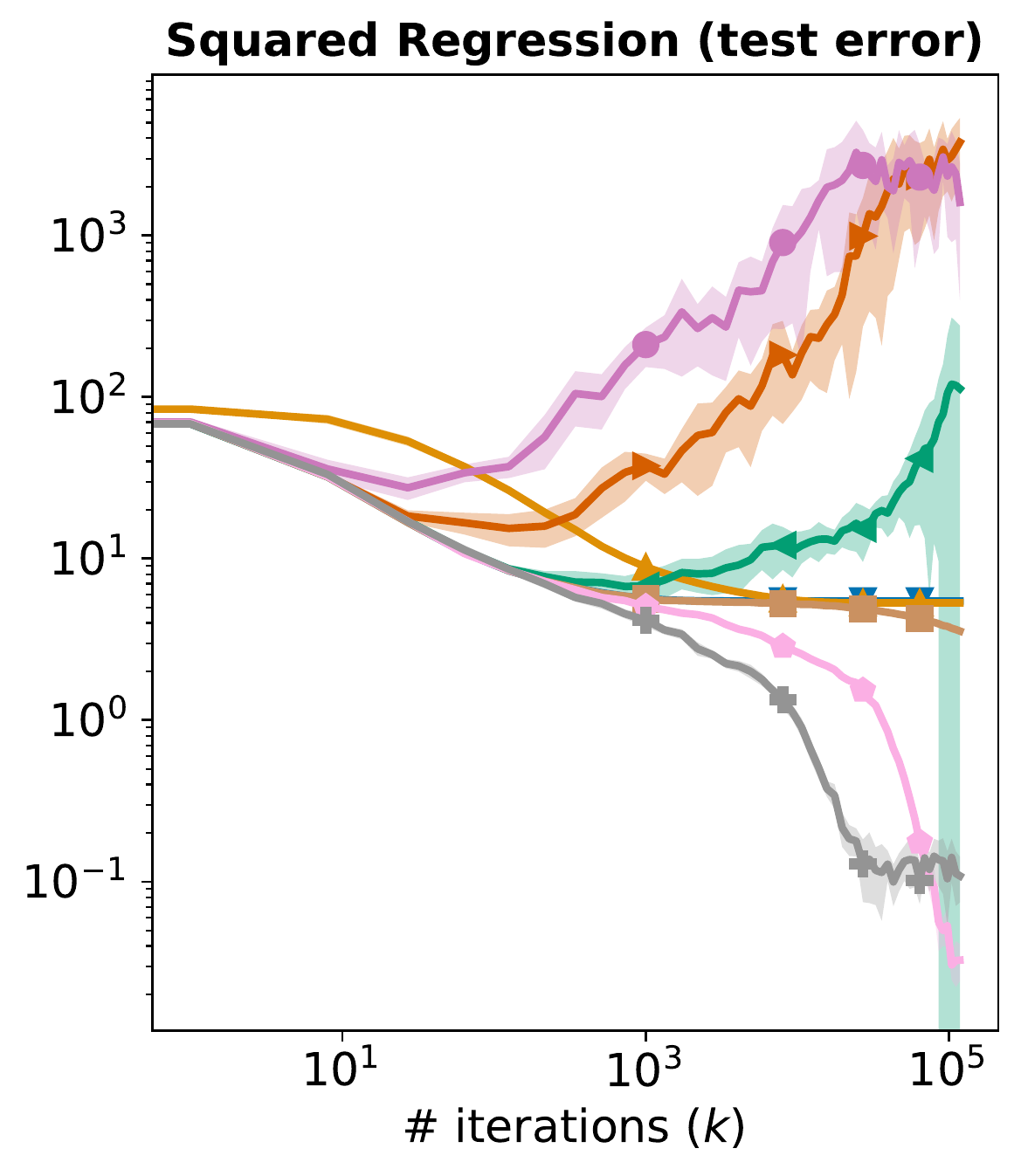}
    \includegraphics[height=0.275\textwidth]{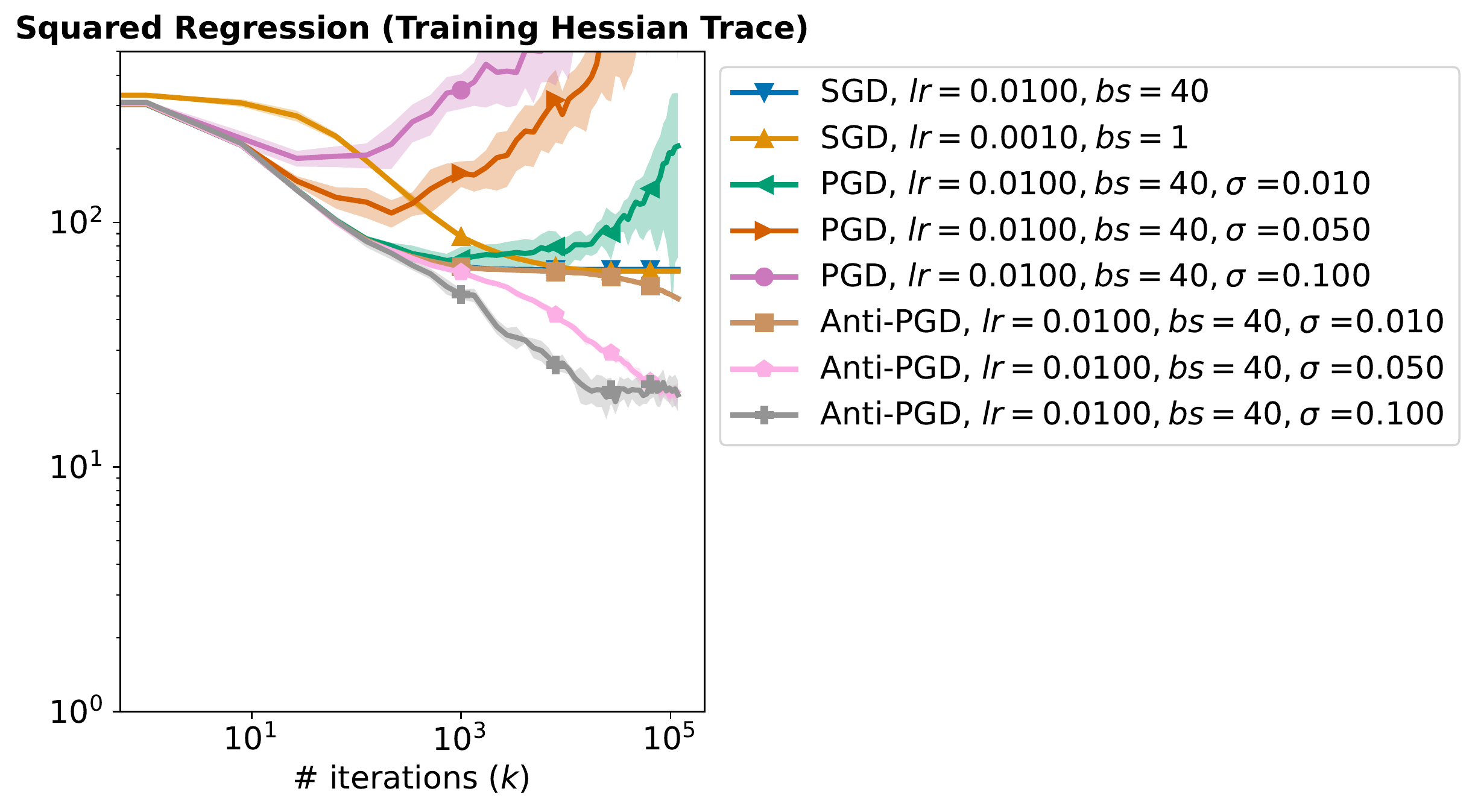}
    
    \vspace{-3mm}
    \caption{Performance of anti-PGD on \textbf{quadratically parametrized linear regression}, for \textbf{low learning rate} and different values of noise injection standard deviation. Plotted is also the error bar relative to 1 standard deviation~(10 runs).}
    
    \label{fig:square_tuning_low}
\end{figure}

\begin{figure}[ht!]
    \centering
    
    \includegraphics[height=0.275\textwidth]{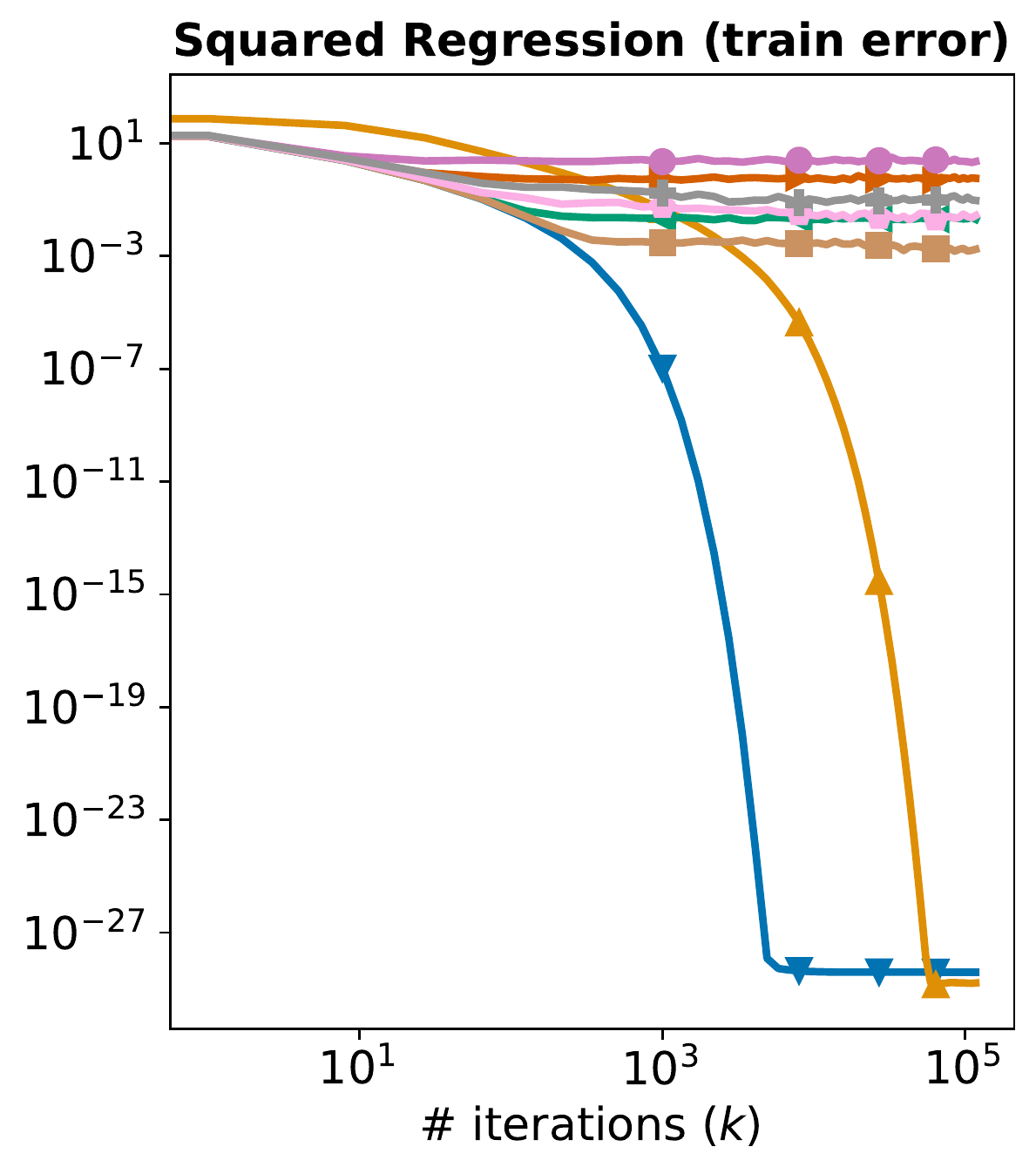}
    \includegraphics[height=0.275\textwidth]{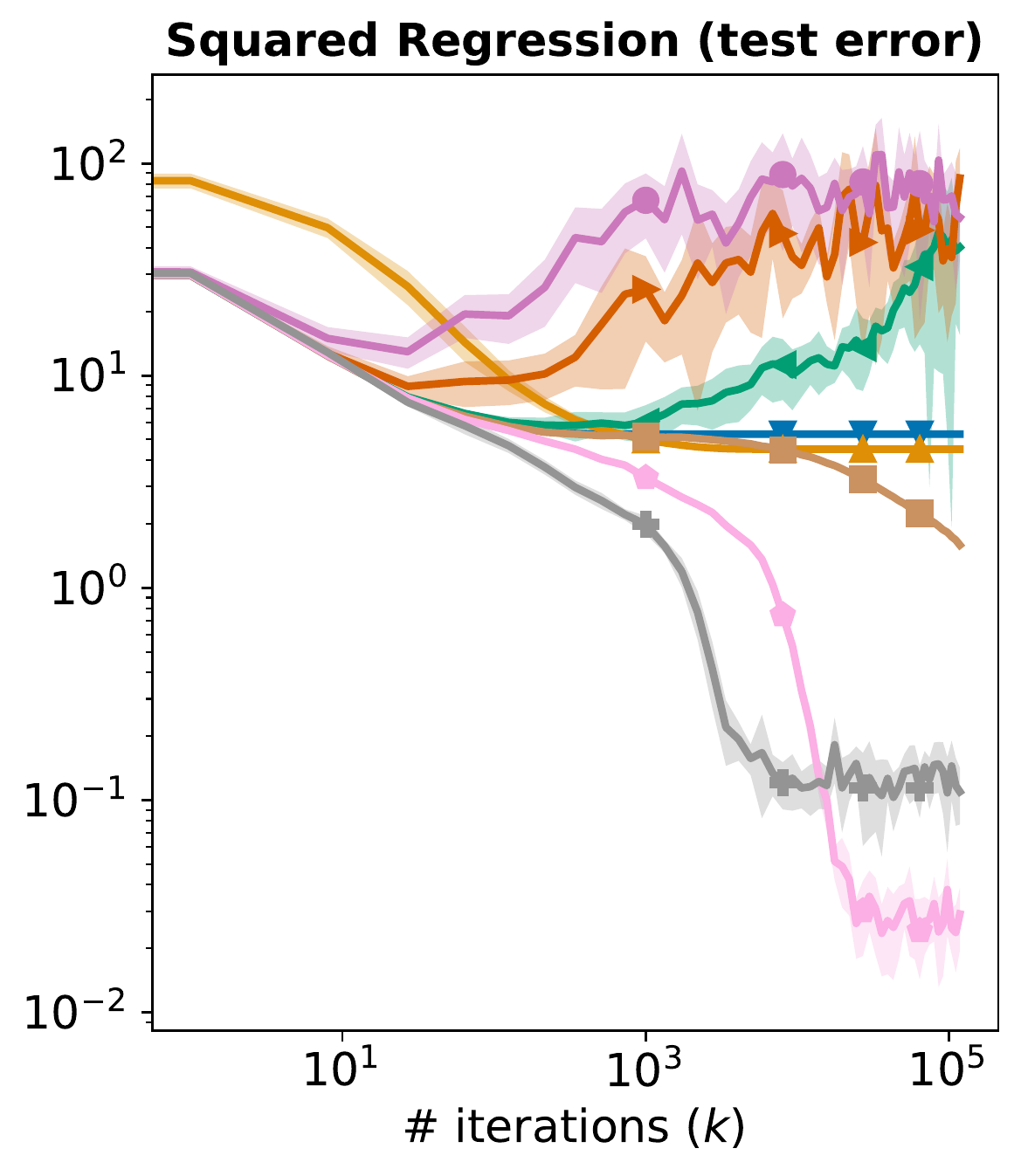}
    \includegraphics[height=0.275\textwidth]{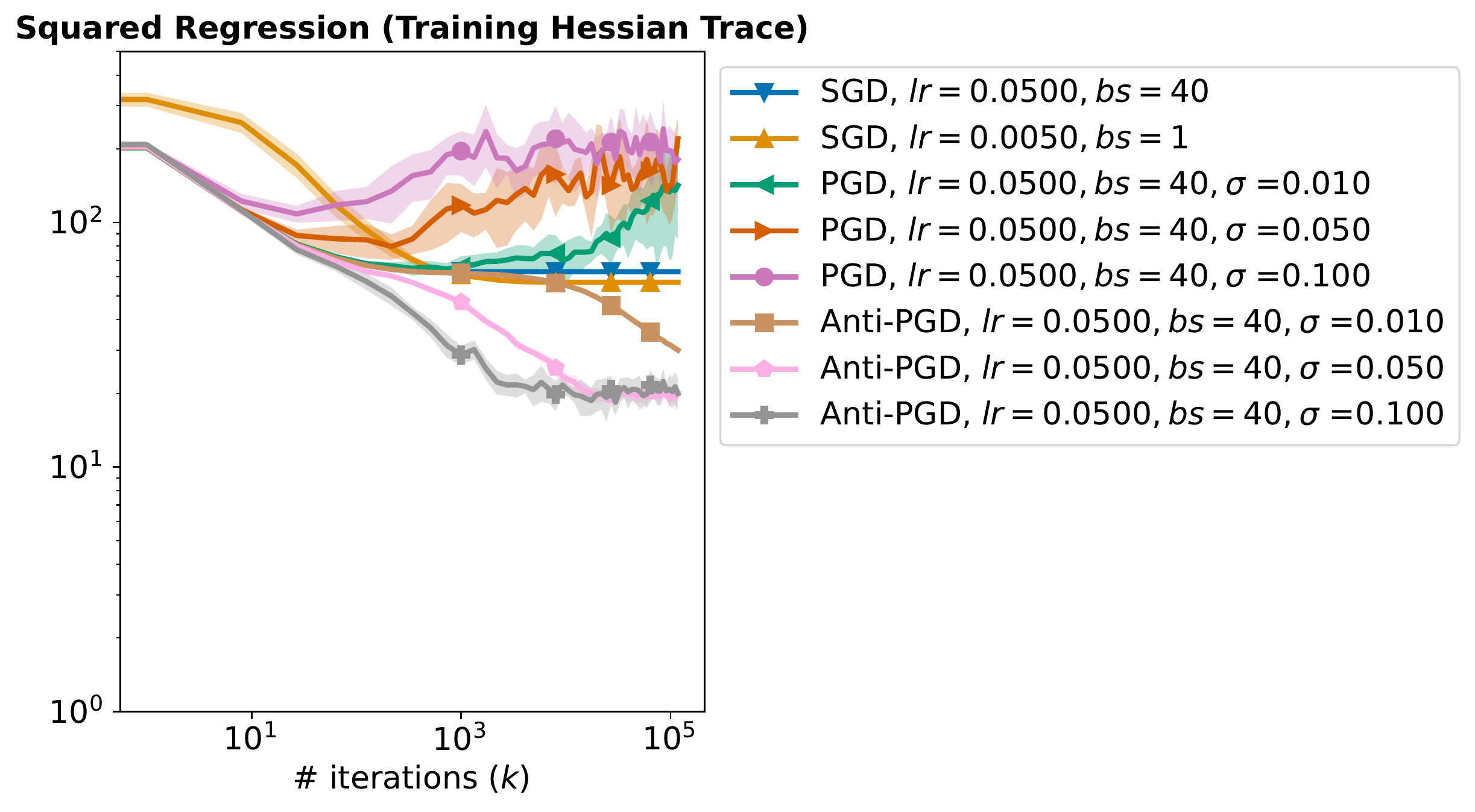}
    
    \vspace{-3mm}
    \caption{Performance of anti-PGD on \textbf{quadratically parametrized linear regression}, for \textbf{moderate learning rate} and different values of noise injection standard deviation. Plotted is also the error bar relative to 1 standard deviation~(10 runs).}
    \label{fig:square_tuning_mod}
\end{figure}

\begin{figure}[ht!]
    \centering

    \includegraphics[height=0.275\textwidth]{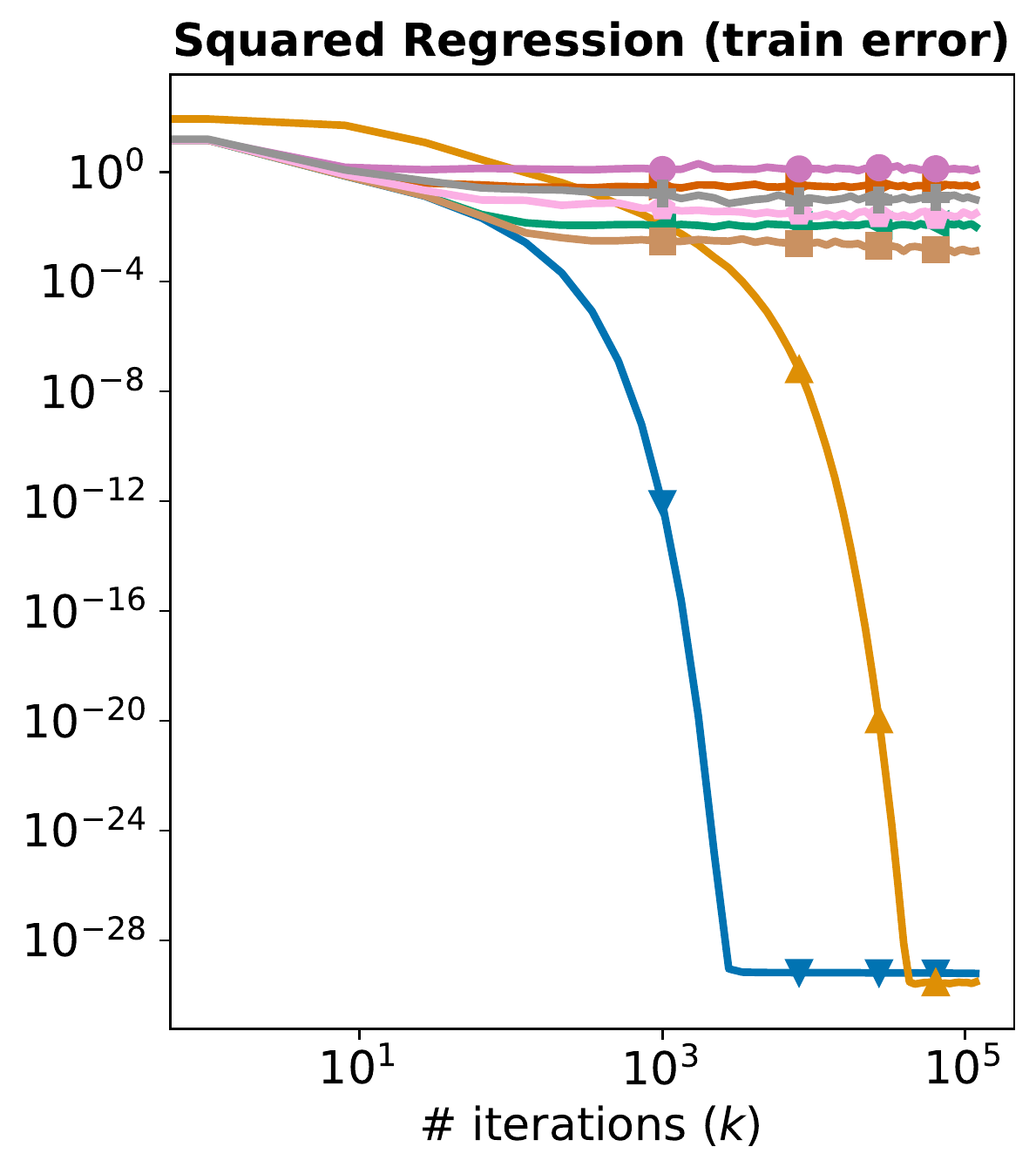}
    \includegraphics[height=0.275\textwidth]{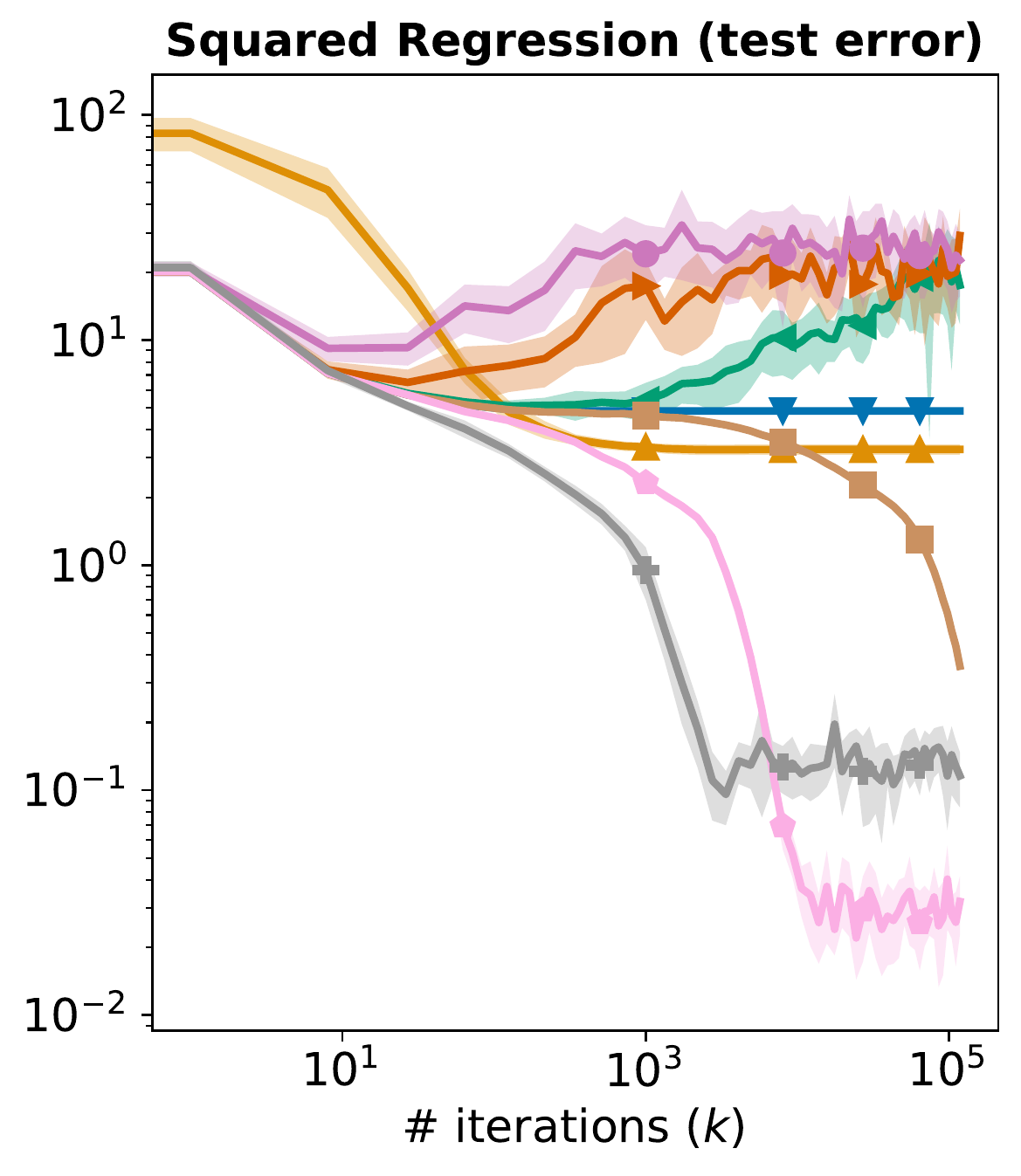}
    \includegraphics[height=0.275\textwidth]{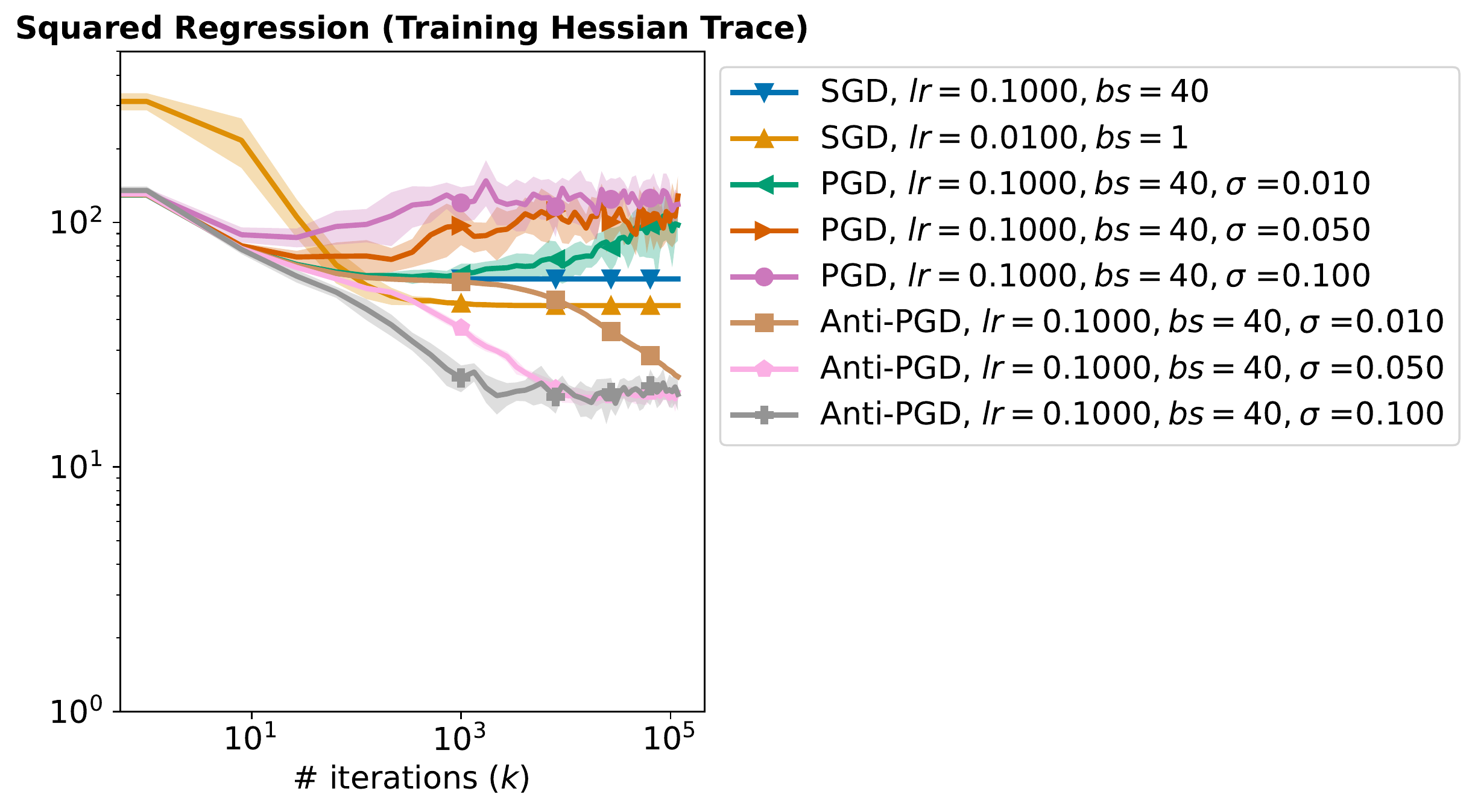}

    \vspace{-3mm}
    \caption{Performance of anti-PGD on \textbf{quadratically parametrized linear regression}, for \textbf{high learning rate} and different values of noise injection standard deviation. Plotted is also the error bar relative to 1 standard deviation~(10 runs).}
    \label{fig:square_tuning_high}
\end{figure}

\subsection{Matrix Sensing}
\label{sec:matrix_sensing}
\paragraph{Problem Definition.} This setting is inspired by the experiment of~\cite{blanc2020implicit} on label noise. Let $X^*$ be an unknown rank-$r$ symmetric positive semidefinite (PSD) matrix in $\mathbb{R}^{n\times n}$ that we aim
to recover. Assume this has unit $2$-norm. Let $A_1,\dots,A_M\in\mathbb{R}^{n\times n}$ be $M$ given (wlog) symmetric measurement matrices. We assume
that the label vector $y \in \mathbb{R}^M$ is generated by linear measurements $y_i = \langle A_i^\top, X^* \rangle = tr(A_i^T X^*).$
We want to minimize the loss
\begin{equation}
    L(U)=\frac{1}{M}\sum_{i=1}^M L_i(U),\quad  L_i(U) =\frac{1}{2}(y_i-\langle A_i, UU^\top\rangle)^2,
\end{equation}
where $U\in\mathbb{R}^{n\times n}$ in general achieves a good test accuracy if has small rank. 

\paragraph{Precise setting.} Our setting is similar to~\cite{blanc2020implicit}. We consider the case $n = 20$, and generate a random $X^* = V^*(V^*)\top$, of rank $5$, by picking $V\in\mathbb{R}^{n\times5}$ with standard Gaussian entries. Also all the $A_i\in\mathbb{R}^{n\times n}$ are sampled at random with standard Gaussian distributed independent entries. We consider learning from $100$ training examples~(corrupted by a small Gaussian noise). At test time, we evaluate the solution against 100 newly sampled measurements. We test three different learning rates in Figure~\ref{fig:sensing_tuning_low},~\ref{fig:sensing_tuning_mod},~\ref{fig:sensing_tuning_high}, and for each 3 values of noise injection variance.

\begin{figure}[ht!]
    \centering
    
    \includegraphics[height=0.28\textwidth]{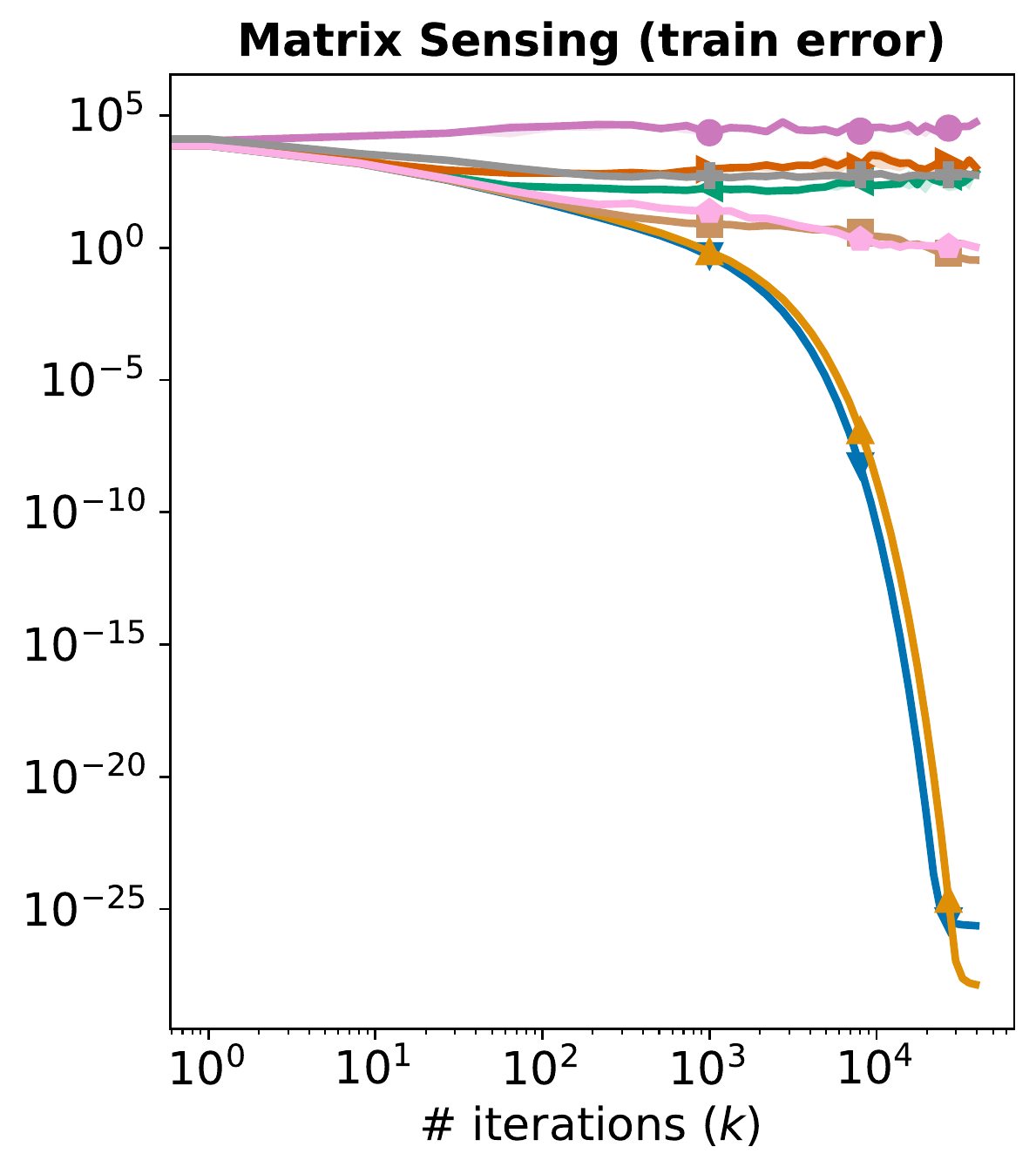}
    \includegraphics[height=0.28\textwidth]{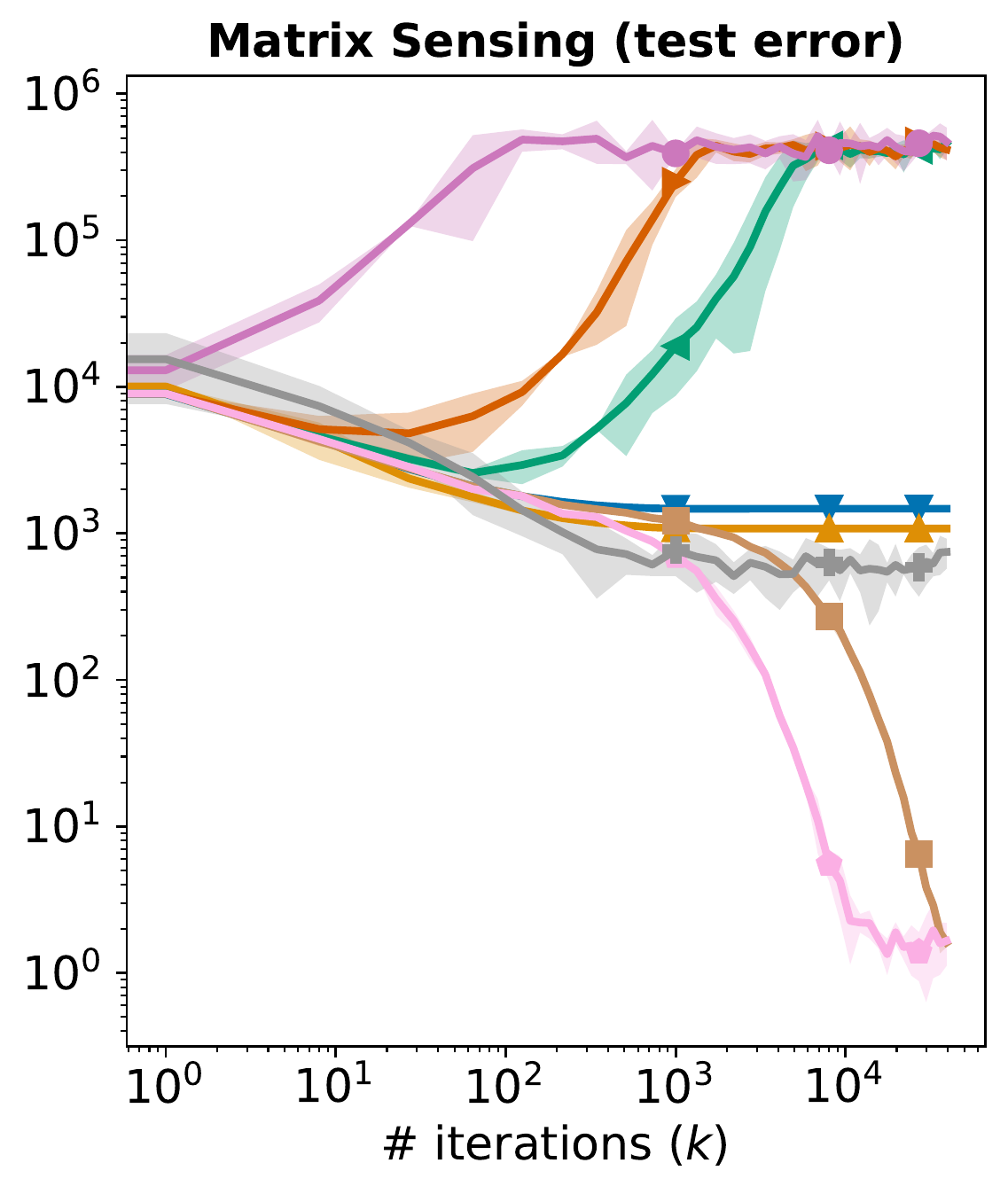}
    \includegraphics[height=0.28\textwidth]{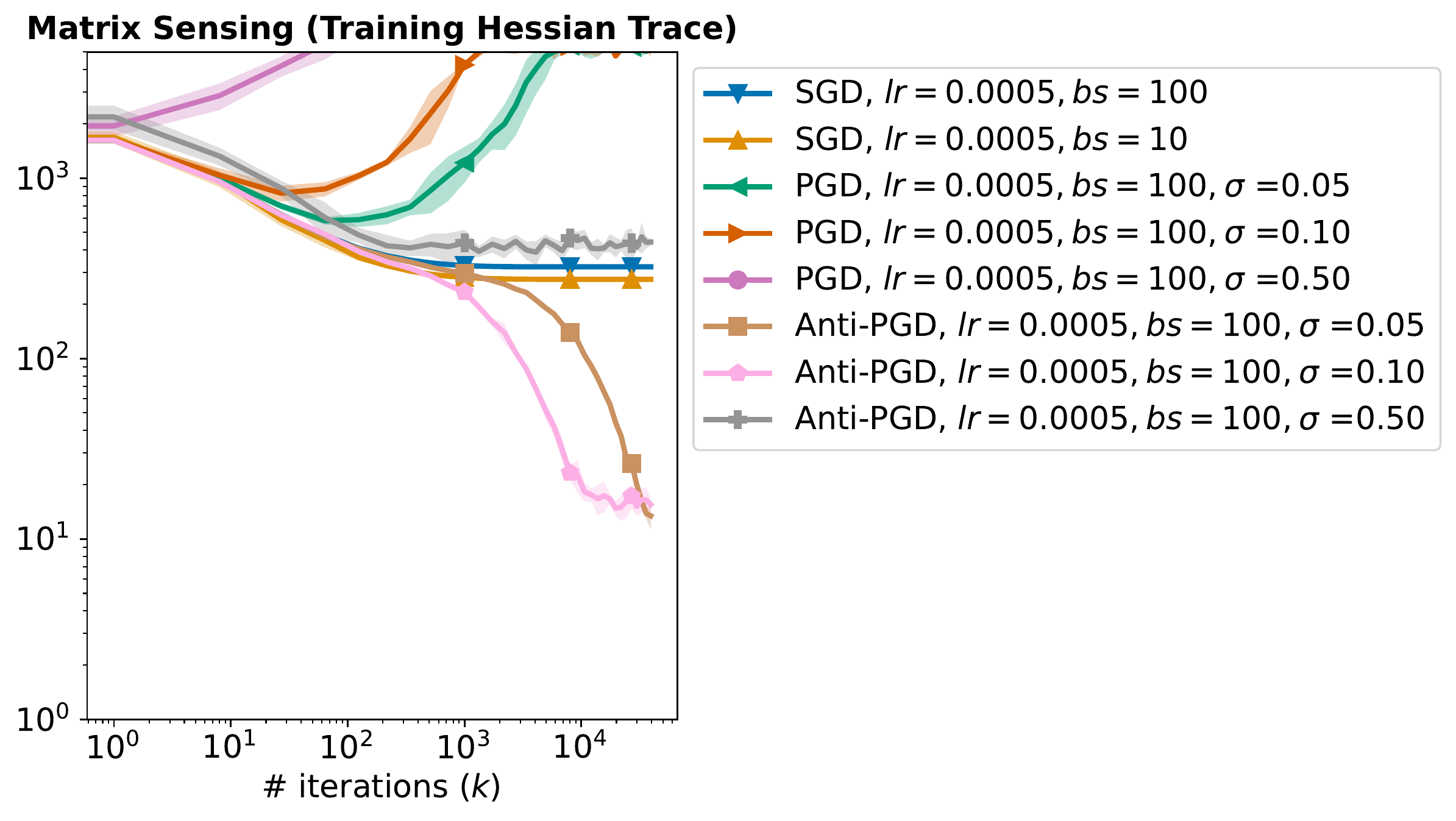}
    
    \vspace{-3mm}
    \caption{Performance of anti-PGD on \textbf{matrix sensing}, for \textbf{low learning rate} and different values of noise injection standard deviation. Plotted is also the error bar relative to 2 standard deviation~(5 runs). }
    
    \label{fig:sensing_tuning_low}
\end{figure}

\begin{figure}[ht!]
    \centering
    
    \includegraphics[height=0.28\textwidth]{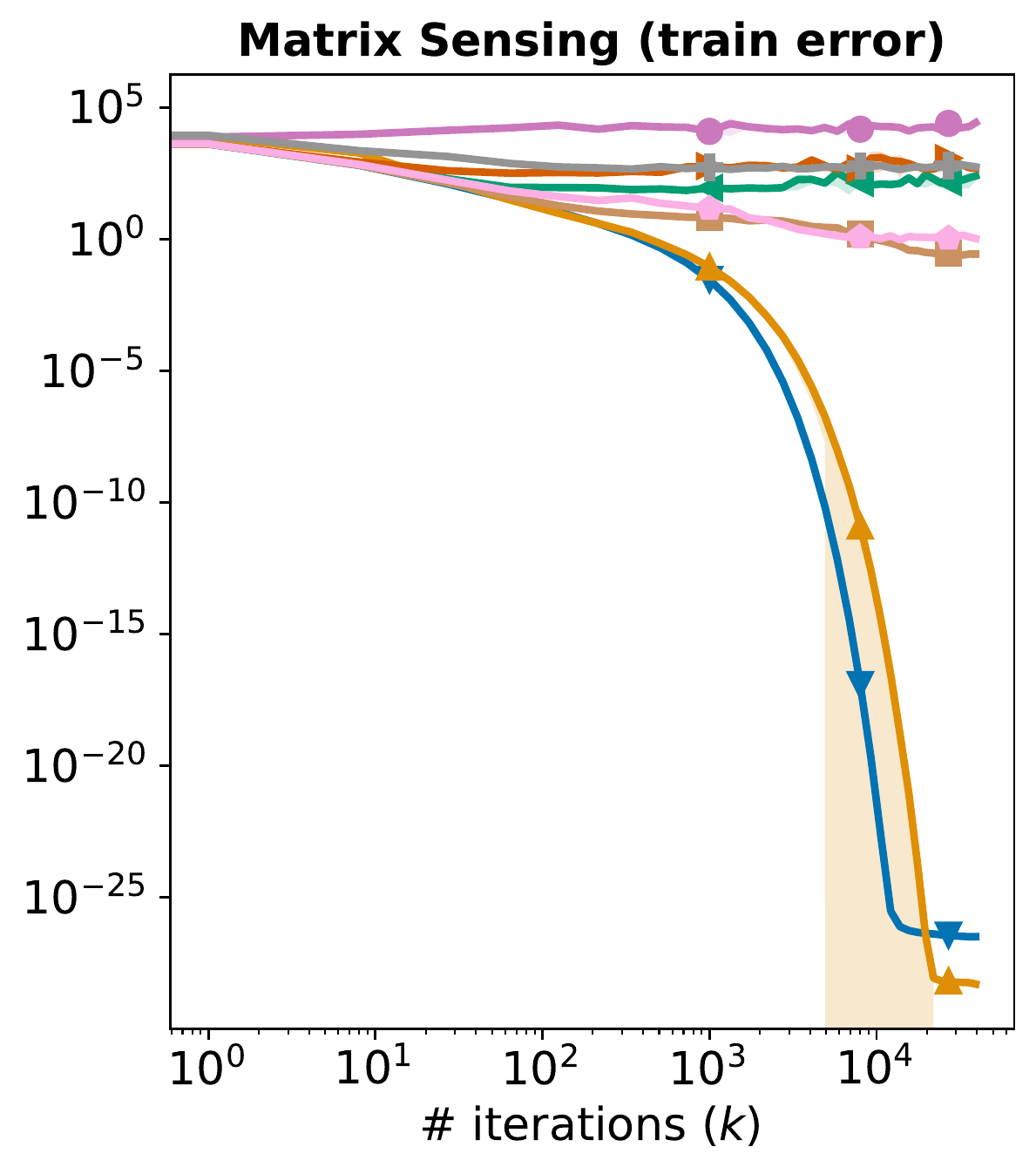}
    \includegraphics[height=0.28\textwidth]{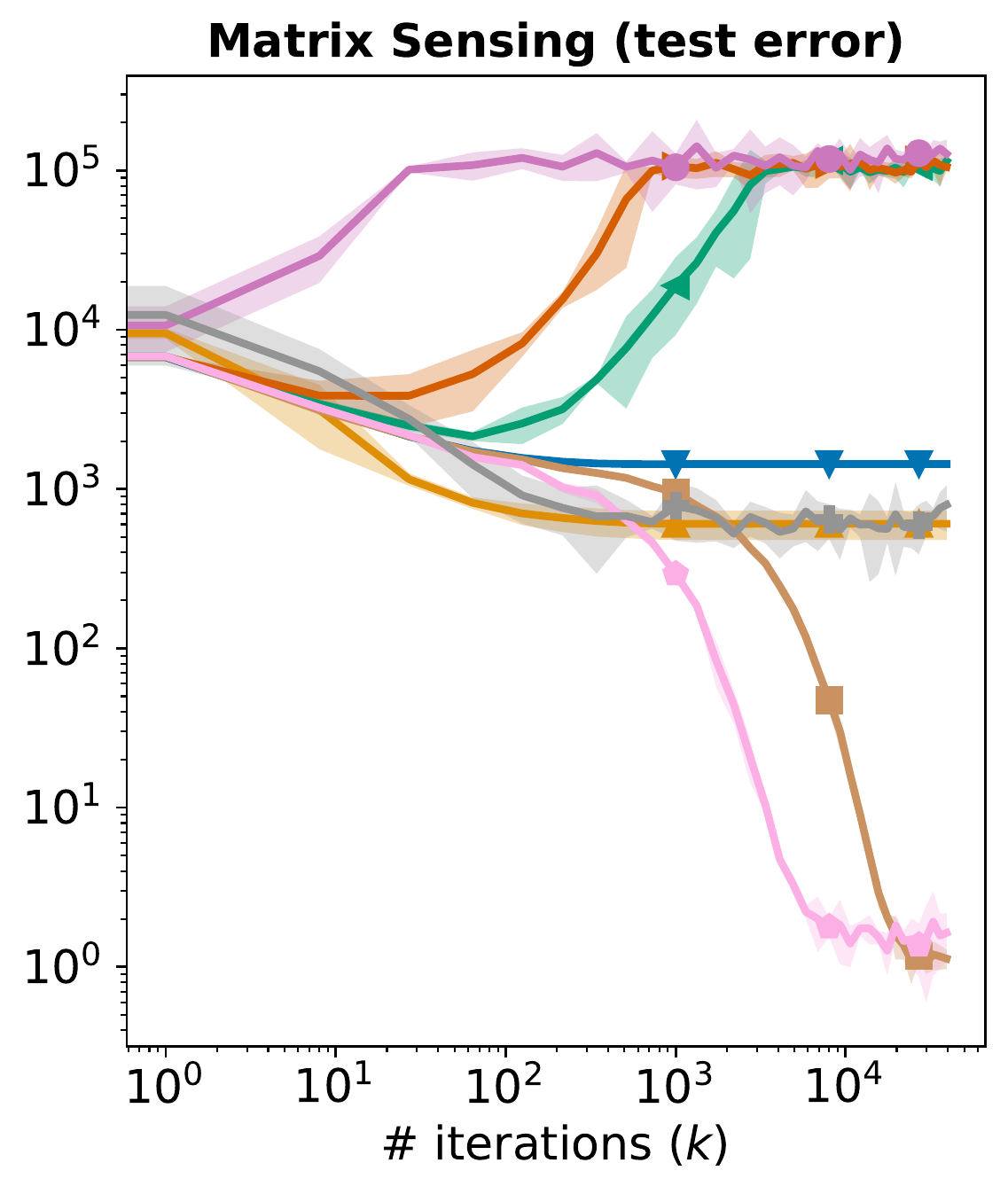}
    \includegraphics[height=0.28\textwidth]{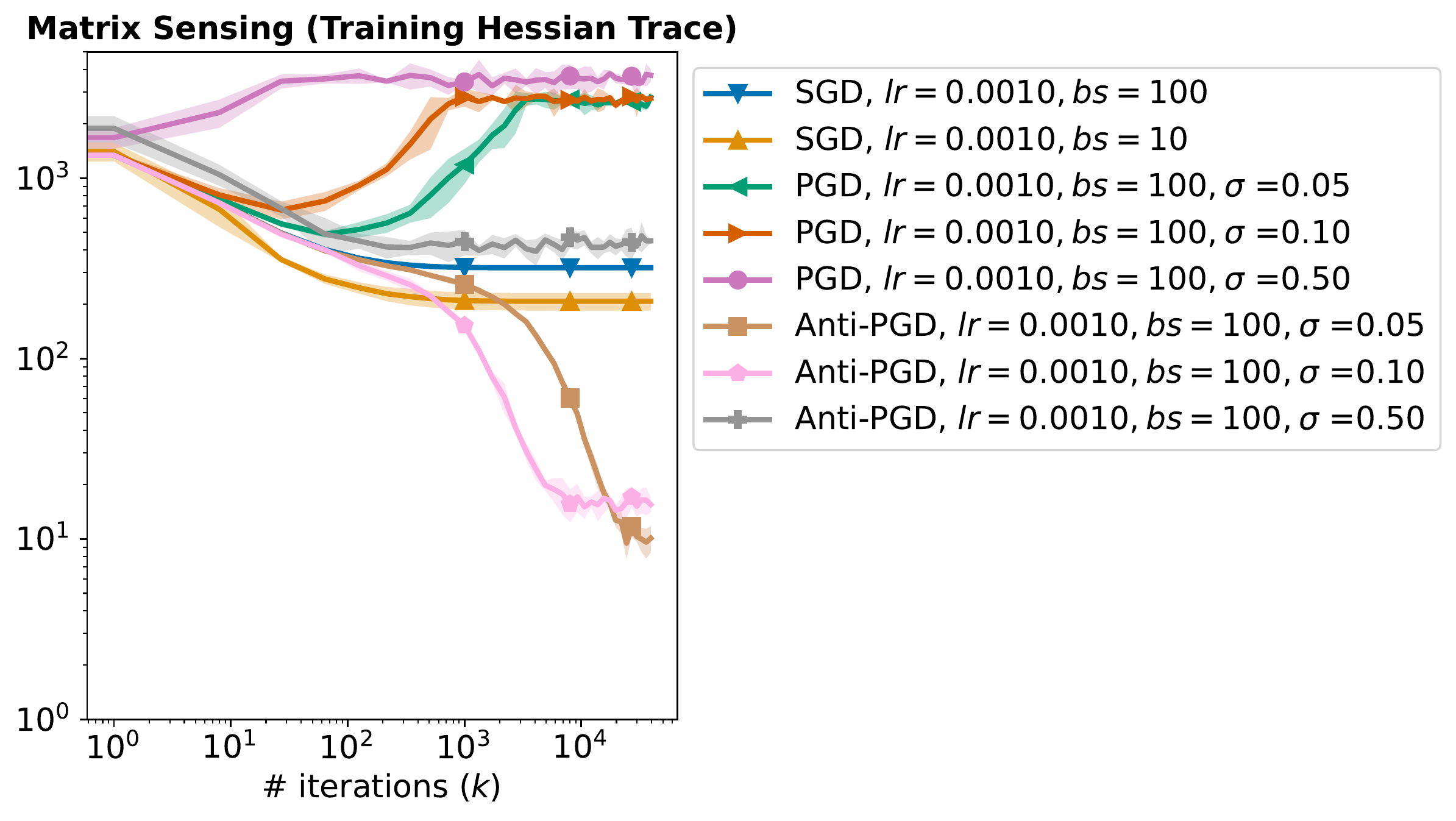}
    
    \vspace{-3mm}
    \caption{Performance of anti-PGD on \textbf{matrix sensing}, for \textbf{moderate learning rate} and different values of noise injection standard deviation. Plotted is also the error bar relative to 2 standard deviation~(5 runs).}
    \label{fig:sensing_tuning_mod}
\end{figure}

\begin{figure}[ht!]
    \centering

    \includegraphics[height=0.28\textwidth]{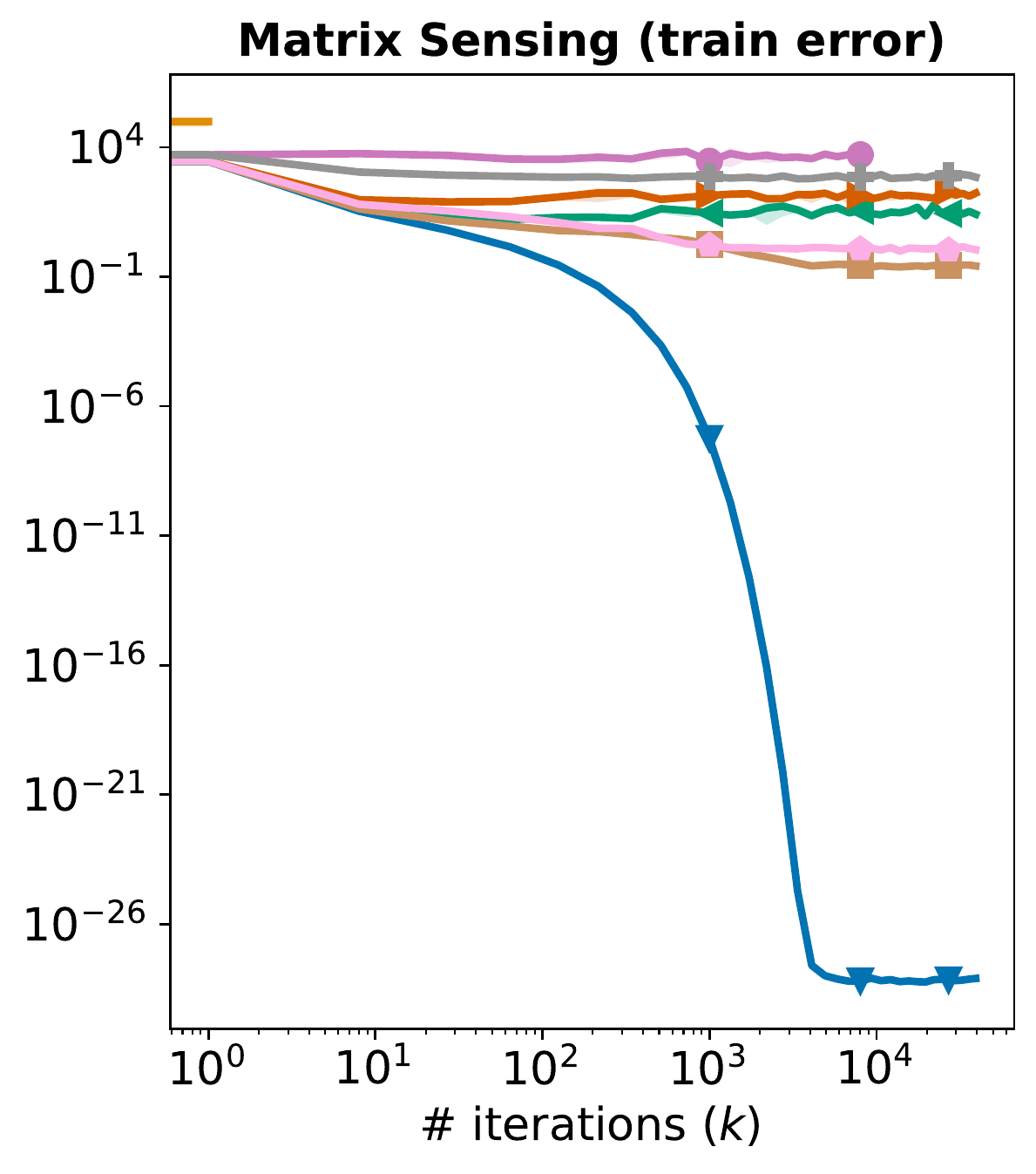}
    \includegraphics[height=0.28\textwidth]{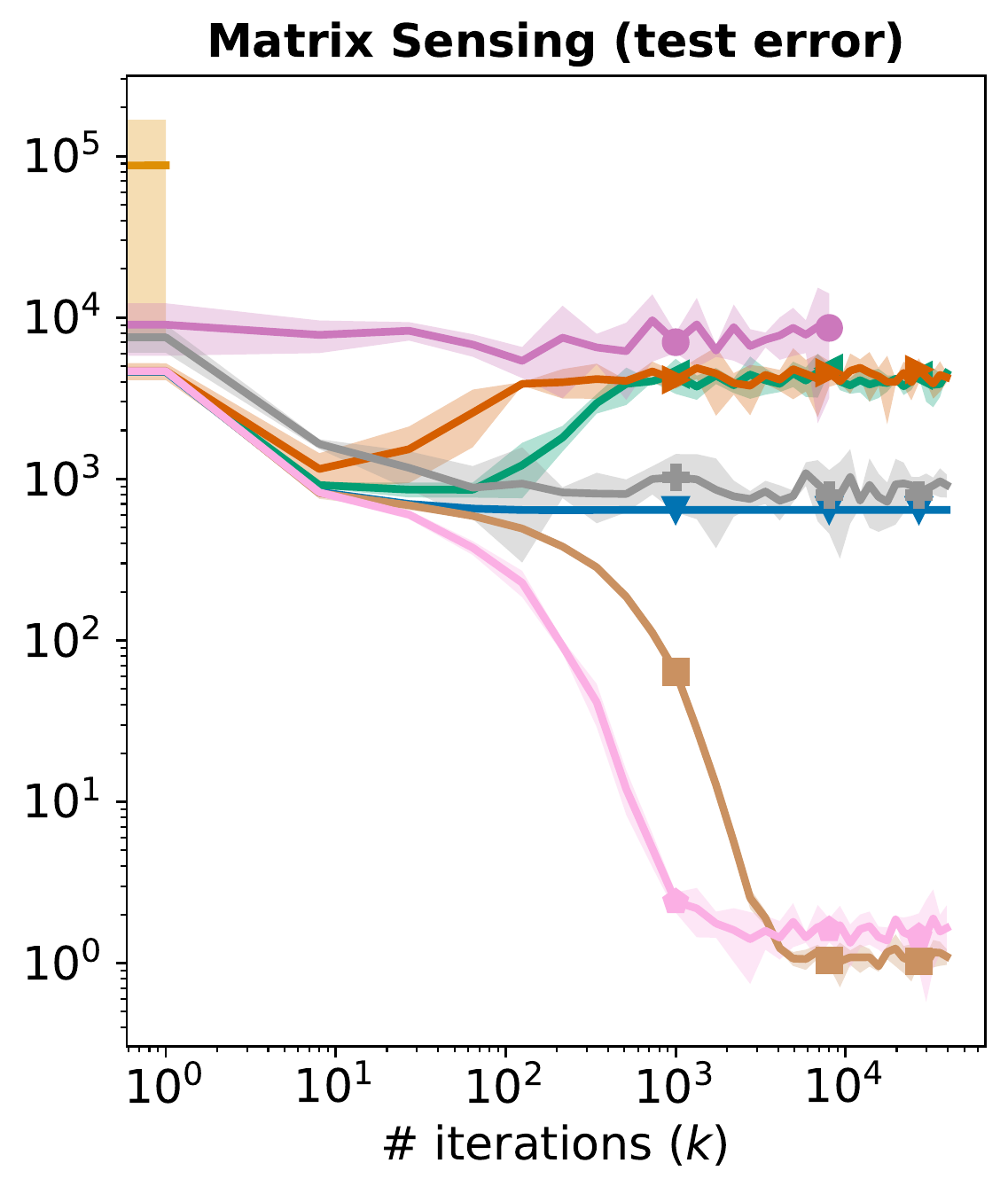}
    \includegraphics[height=0.28\textwidth]{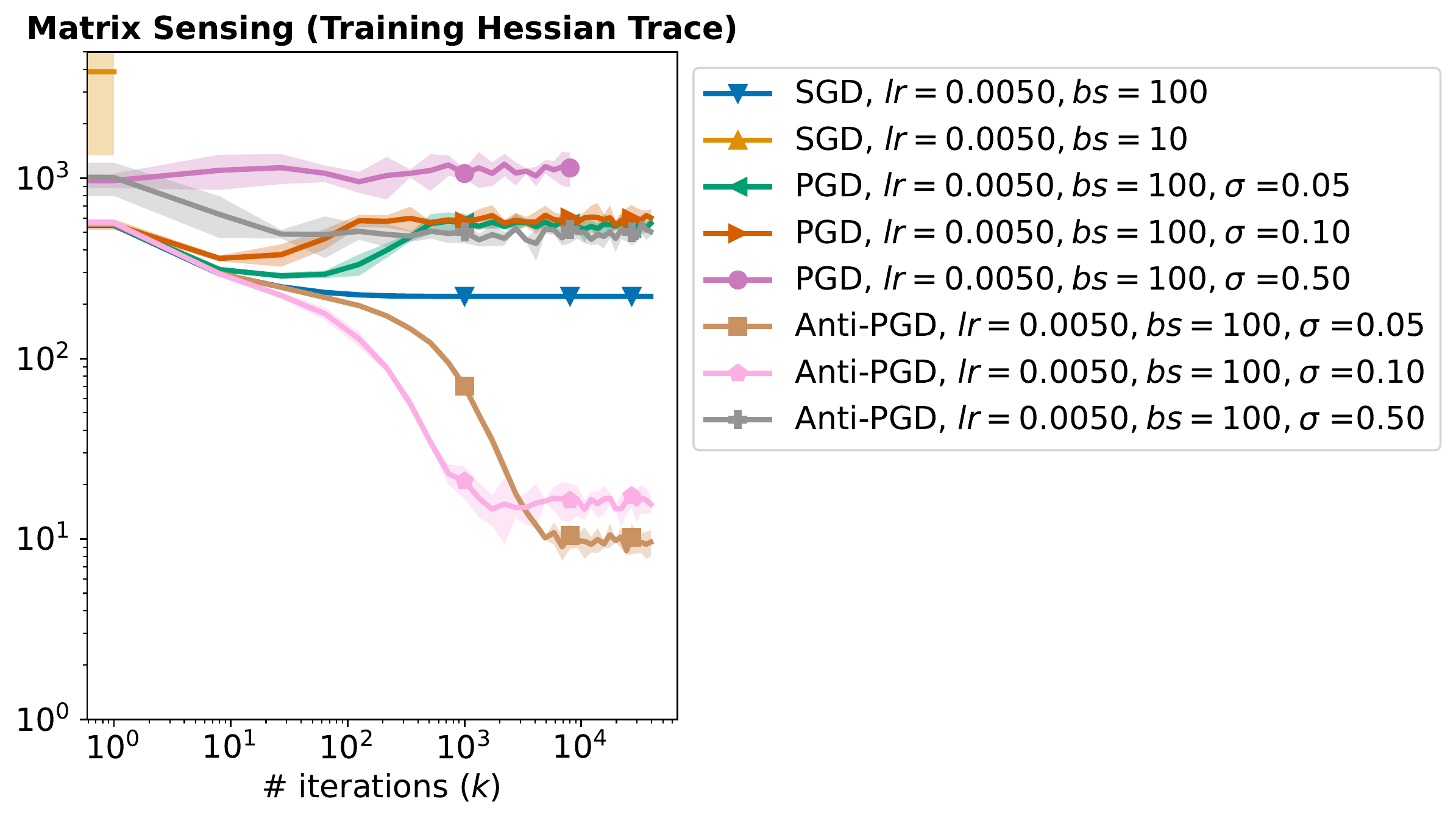}

    \vspace{-3mm}
    \caption{Performance of anti-PGD on \textbf{matrix sensing}, for \textbf{high learning rate} and different values of noise injection standard deviation. Plotted is also the error bar relative to 2 standard deviation~(5 runs). SGD with batch size 10 is unstable at this learning rate.}
    \label{fig:sensing_tuning_high}
\end{figure}

\paragraph{Findings.} We found that anti-PGD always provides the best test accuracy, and minimizes the trace of the Hessian as well. This finding is quite robust in terms of hyperparameter tuning. Further, we found that the performance is always drastically different from the one of PGD. An explanation of this phenomenon is provided in Theorem~\ref{thm:main_tube}, in the main paper. Mini-batch SGD improves the final test loss if the stepsize is small enough, but gets unstable for big stepsizes.

\subsection{CIFAR 10 ResNet 18}
\label{sec:cifar_app}
We use the implementation of ResNet18 provided by \url{https://github.com/kuangliu/pytorch-cifar}). Details about the corresponding experiments can be found in \S\ref{sec:exp}.

\paragraph{Takeaway: even after heavy tuning, AntiPGD performs better than standard noise injection.}

\begin{figure}[ht!]
    \centering

    \includegraphics[height=0.22\textwidth]{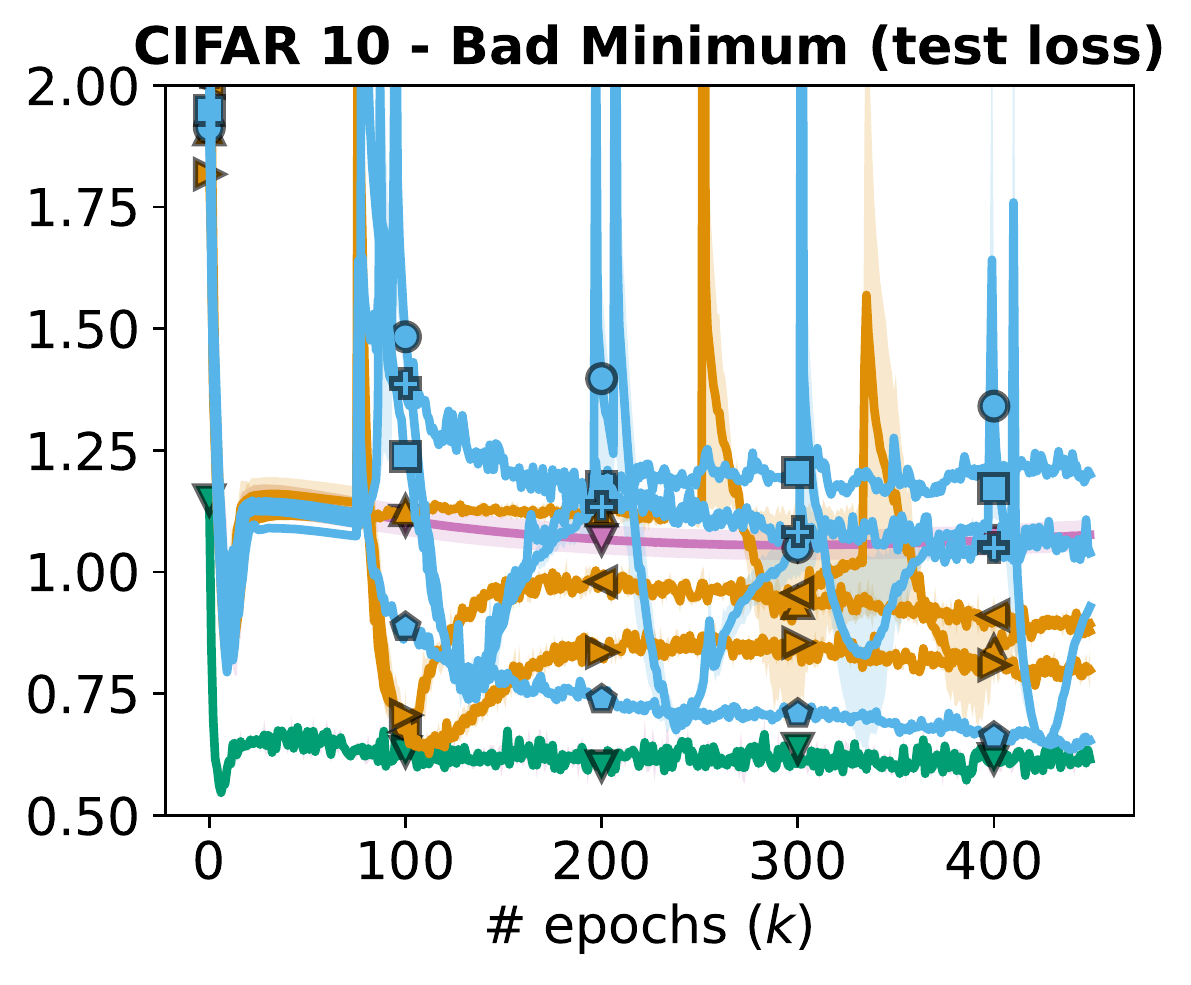}
    \includegraphics[height=0.22\textwidth]{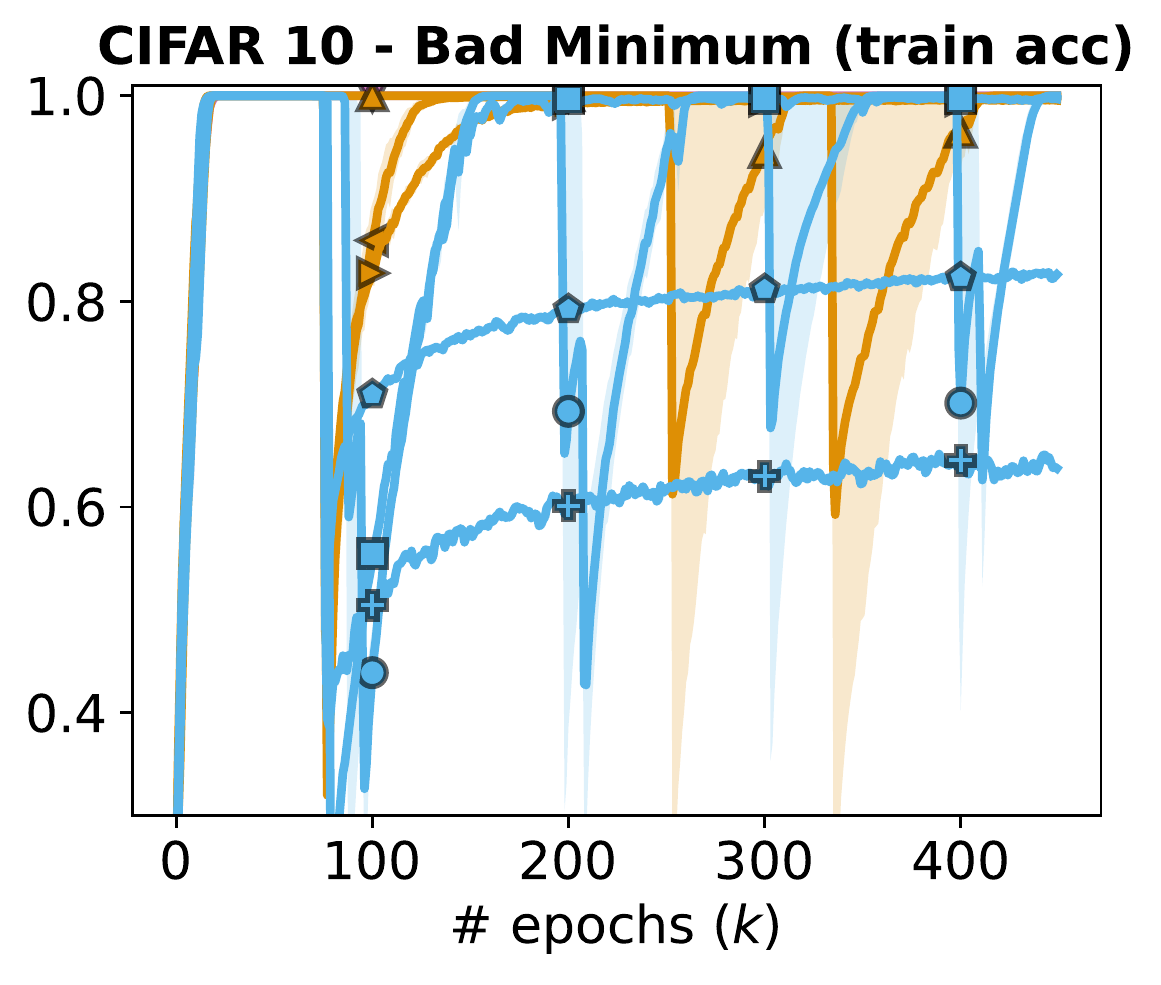}
    \includegraphics[height=0.22\textwidth]{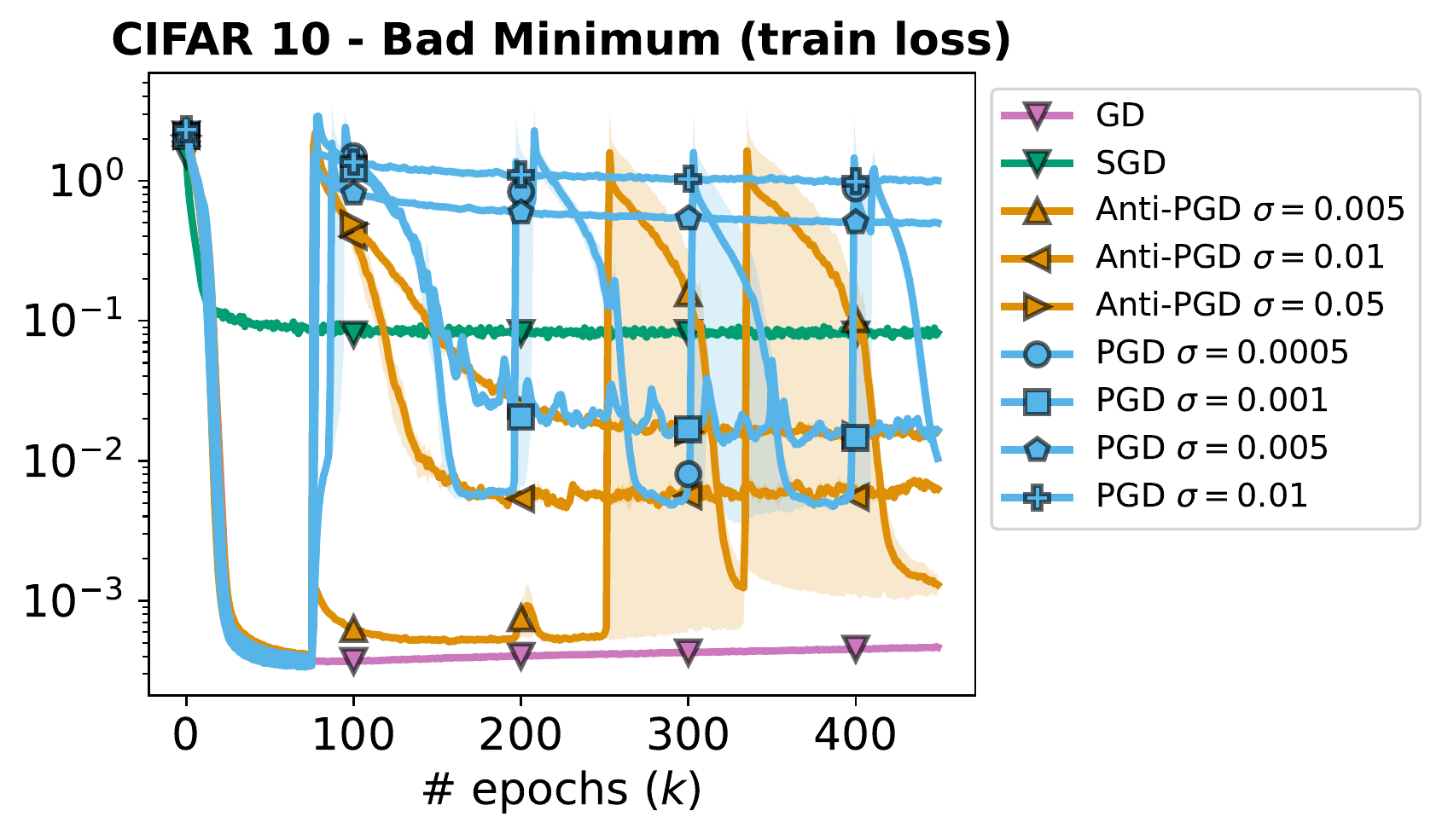}

    \vspace{-3mm}
    \caption{Additional plot for the experiment in Figure~\ref{fig:cifar-full-inj-init}.}
\end{figure}
\vspace{-3mm}
\begin{figure}[ht!]
    \centering
    \includegraphics[height=0.22\textwidth]{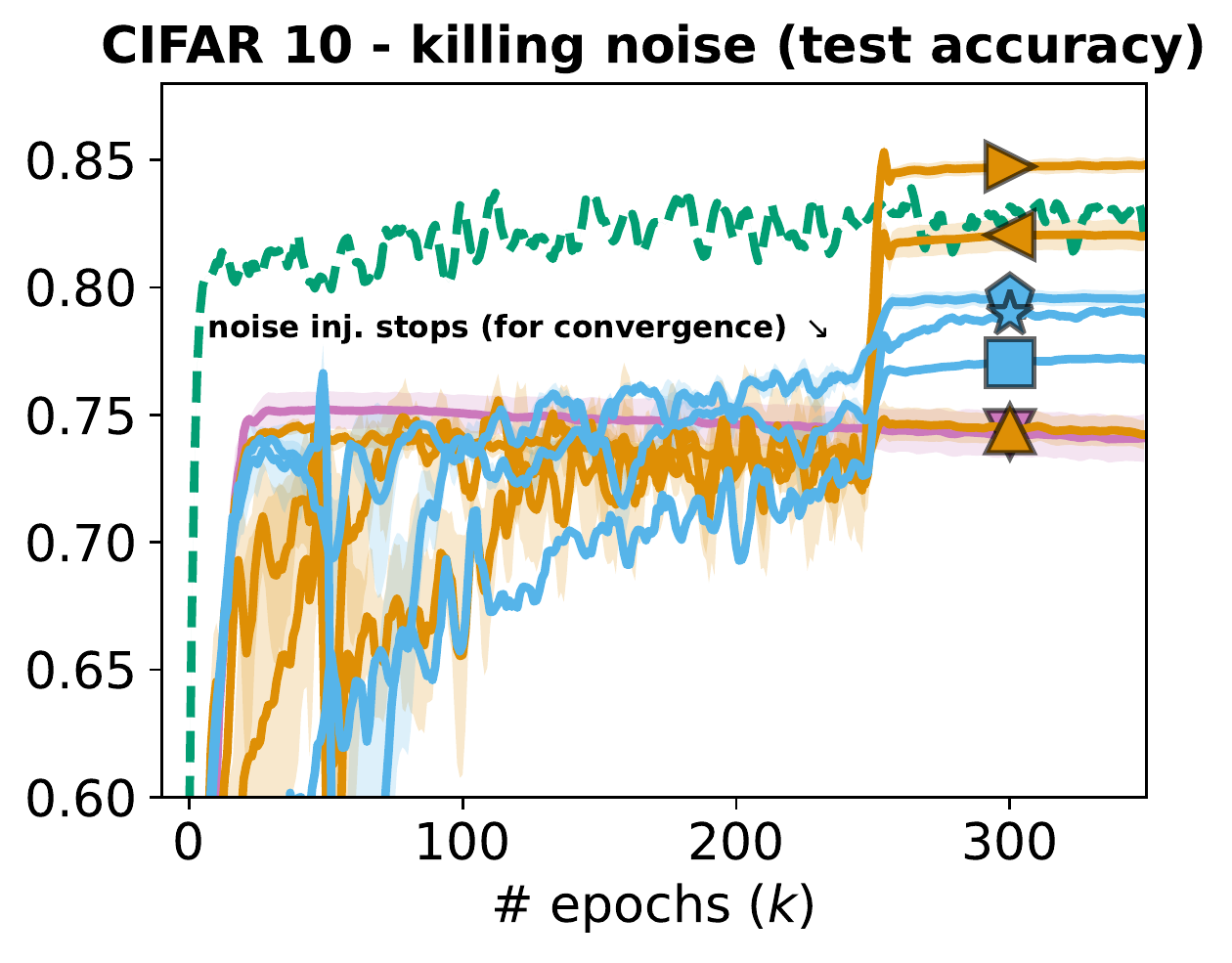}
    \includegraphics[height=0.22\textwidth]{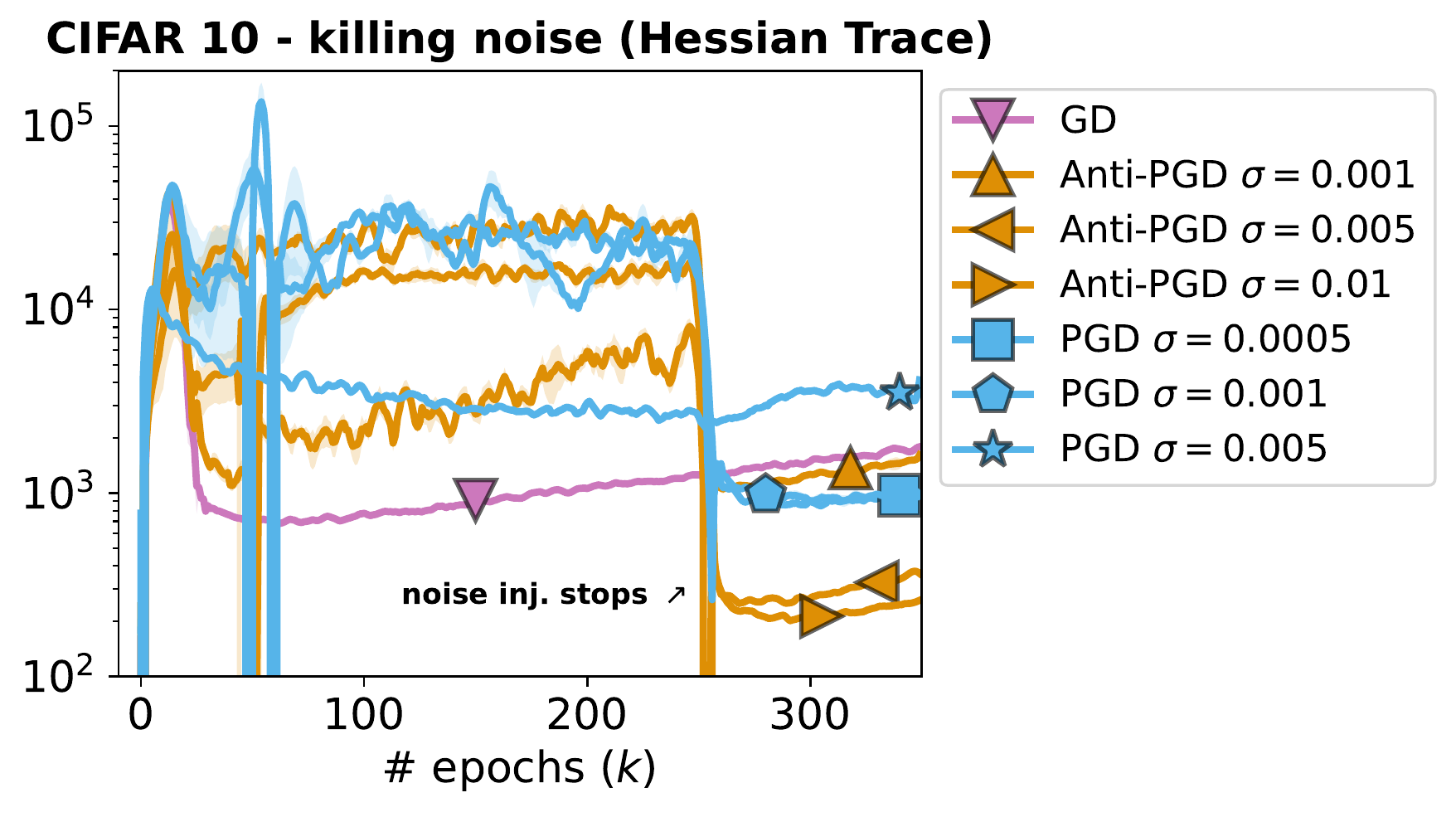}
    \includegraphics[height=0.22\textwidth]{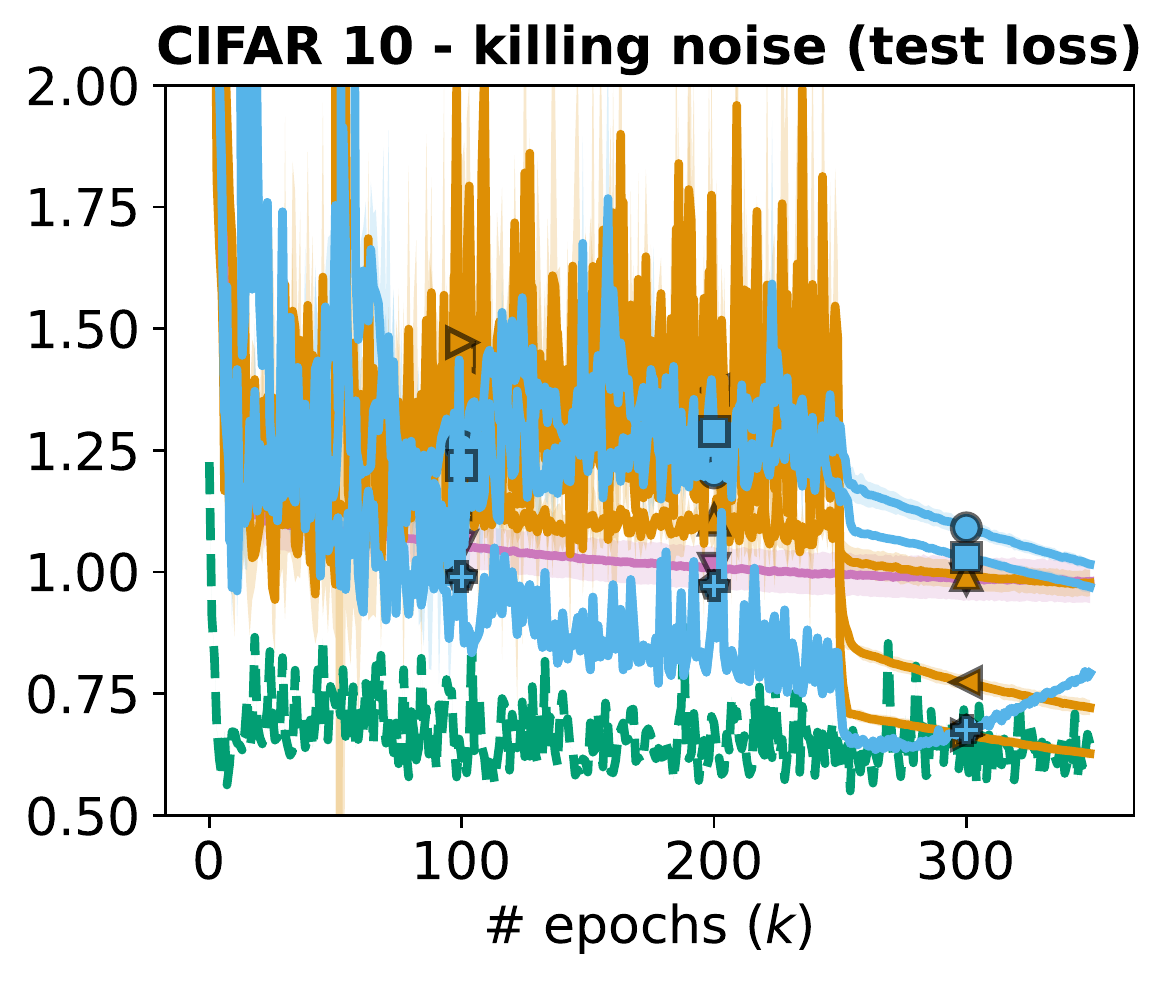}
    \includegraphics[height=0.22\textwidth]{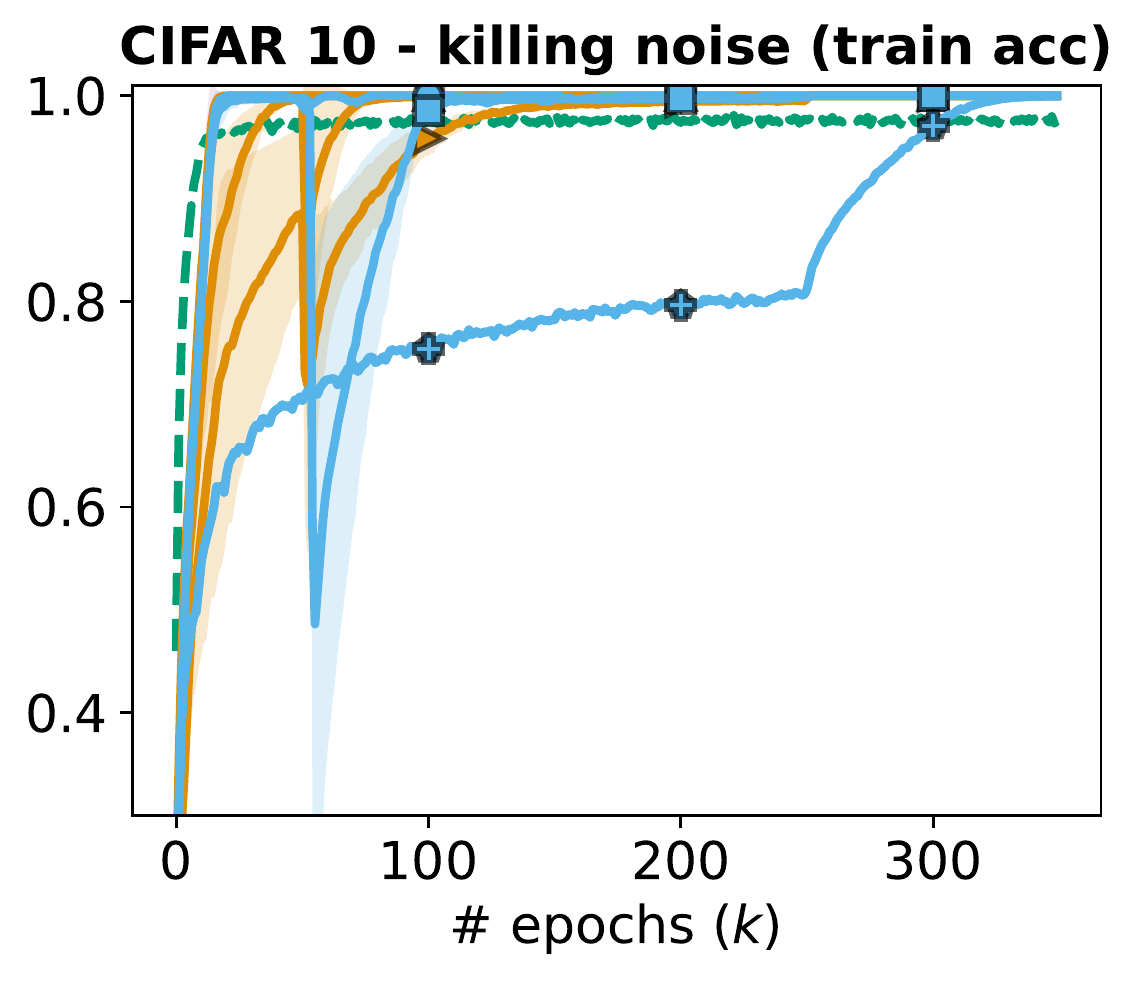}
    \includegraphics[height=0.22\textwidth]{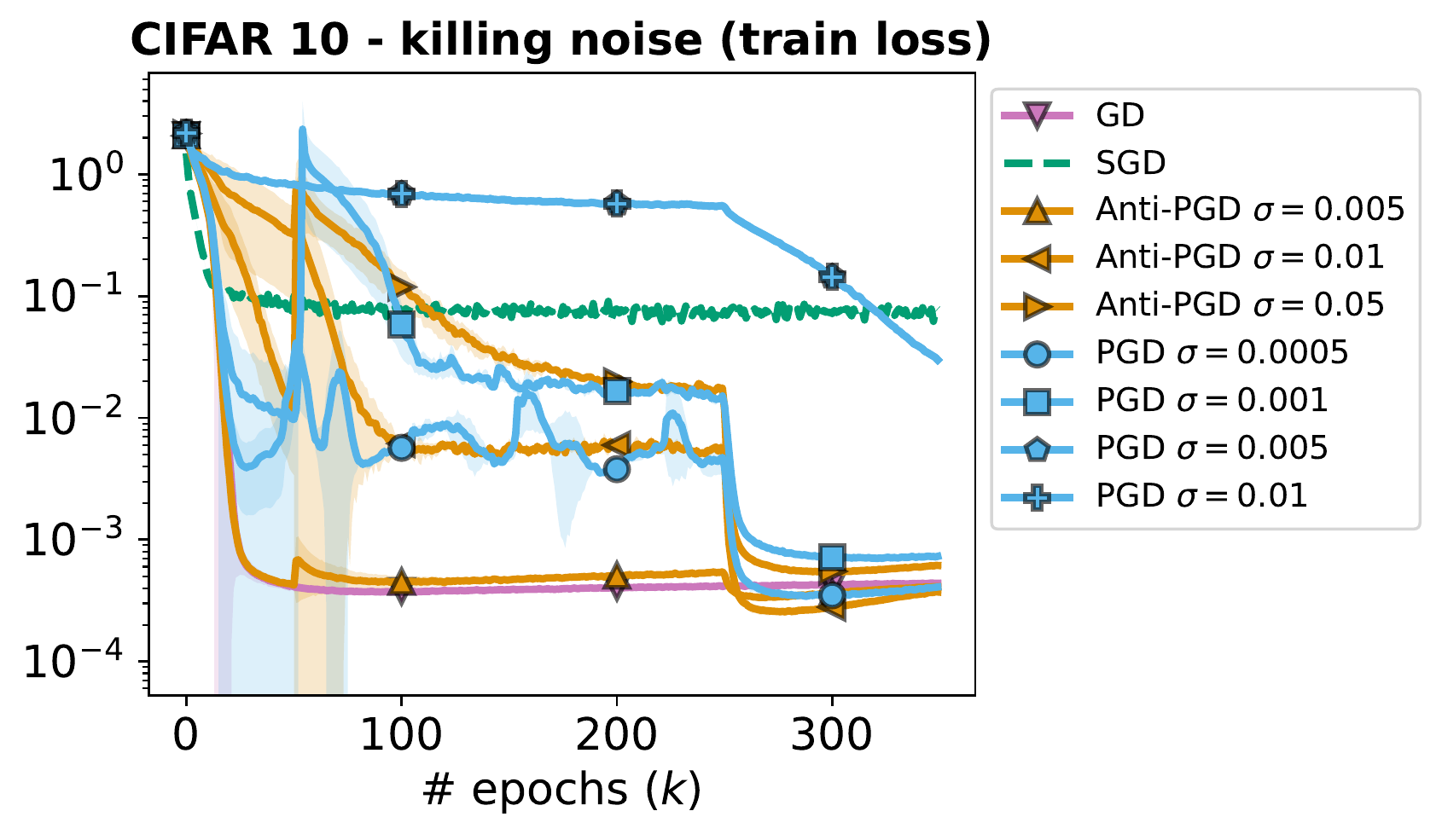}

    \vspace{-3mm}
    \caption{Additional plot for the experiment in Figure~\ref{fig:front_experiments}.}
\end{figure}
\vspace{-2mm}
\subsection{Performance of Anti-SGD}
We test the performance of anticorrelated noise when injected on top of mini-batch SGD. We consider the two non-toy settings of Fig.~\ref{fig:front_experiments} and report results in Fig.~\ref{fig:anti-sgd}. For matrix sensing, injecting anticorrelated noise to SGD gives a substantial improvement, with a slight edge over Anti-PGD. For the ResNet18 experiment, we compared batch-sizes of 128 or 1024: noise injection works best at moderate batch sizes. We hypothesize this is due to the high stochastic nature of SGD at low batch-sizes, which dominates over injected noise. Plots for the Hessian follow the same trend (highest test, lowest trace).

\vspace{-1mm}
\begin{figure}[ht]
    \centering
    \includegraphics[height=0.22\linewidth]{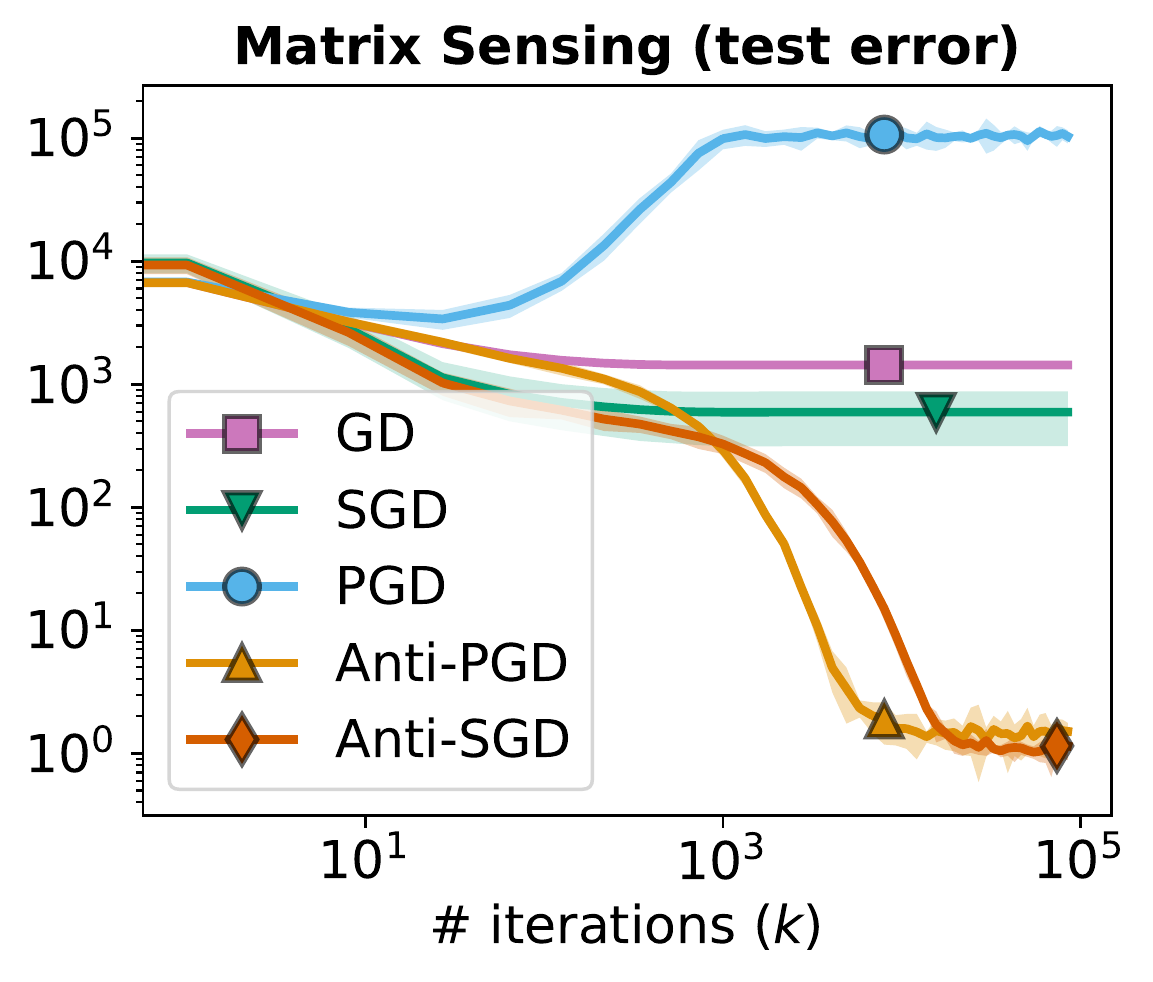}
    \includegraphics[height=0.22\linewidth]{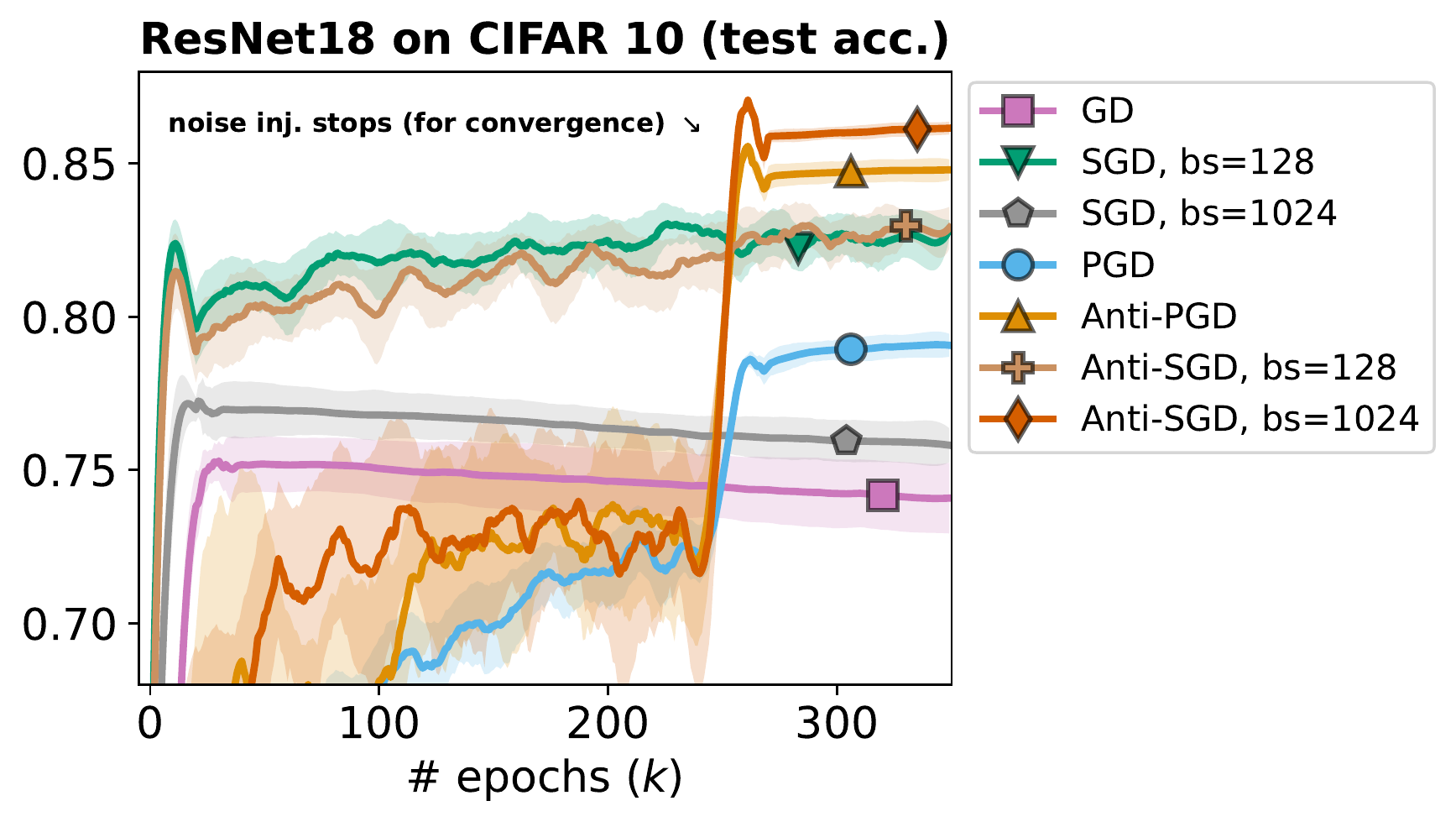}
\vspace{-3mm}
    \caption{\small Performance of Anti-SGD in the same settings as Figure~\ref{fig:front_experiments}}
    \label{fig:anti-sgd}
\end{figure}

\end{document}